\newcommand\independent{\protect\mathpalette{\protect\independenT}{\perp}}
\def\independenT#1#2{\mathrel{\rlap{$#1#2$}\mkern2mu{#1#2}}}
\newcommand{\dependent}{ \not\!\perp\!\!\!\perp}
\newcommand*\samethanks[1][\value{footnote}]{\footnotemark[#1]}
\newtheorem{Theorem}{Theorem}
\newtheorem{Proposition}{Proposition}
\newtheorem{Definition}{Definition}
\newtheorem{Corollary}{Corollary}
\newtheorem{Assumption}{Assumption}
\begin{document}
	
	\title{Causal Discovery from Heterogeneous/Nonstationary Data with Independent Changes}
	
	   \author{\name Biwei Huang \thanks{Equal contribution} \email     biweih@andrew.cmu.edu\\ 
        \name Kun Zhang \samethanks \email kunz1@cmu.edu\\ 
	    \addr Department of Philosophy, 
	    Carnegie Mellon University,
	    Pittsburgh, PA 15213
		\AND 
		\name Jiji Zhang \email jijizhang@ln.edu.hk\\ 
		\addr Department of Philosophy,
		Lingnan University, Tuen Mun,
		Hong Kong
		\AND 
		\name Joseph Ramsey \email jdramsey@andrew.cmu.edu \\ 
		\addr Department of Philosophy, 
		Carnegie Mellon University,
		Pittsburgh, PA 15213
	    \AND 
		\name Ruben Sanchez-Romero \email ruben.saro@rutgers.edu \\ 
		\addr Center for Molecular and Behavioral Neuroscience, 
		Rutgers University,
		Newark, NJ 07102
		\AND 
		\name Clark Glymour \email cg09@andrew.cmu.edu\\ 
		\addr Department of Philosophy,
		Carnegie Mellon University, 
		Pittsburgh, PA 15213
		\AND 
		\name Bernhard Sch{\"o}lkopf \email bernhard.schoelkopf@tuebingen.mpg.de\\ 
		\addr Max Planck Institute for Intelligent Systems, 
		T{\"u}bingen 72076,
		Germany
	   }
	
	\editor{Jon McAuliffe}

	\maketitle
	
	\begin{abstract}
        It is commonplace to encounter heterogeneous or nonstationary data, of which the underlying generating process changes across domains or over time. Such a distribution shift feature presents both challenges and opportunities for causal discovery. In this paper, we develop a framework for causal discovery from such data, called Constraint-based causal Discovery from heterogeneous/NOnstationary Data (CD-NOD), to find causal skeleton and directions and estimate the properties of mechanism changes. First, we propose an enhanced constraint-based procedure to detect variables whose local mechanisms change and recover the skeleton of the causal structure over observed variables. Second, we present a method to determine causal orientations by making use of independent changes in the data distribution implied by the underlying causal model, benefiting from information carried by changing distributions. After learning the causal structure, next, we investigate how to efficiently estimate the ``driving force'' of the nonstationarity of a causal mechanism. That is, we aim to extract from data a low-dimensional representation of changes. The proposed methods are nonparametric, with no hard restrictions on data distributions and causal mechanisms, and do not rely on window segmentation. Furthermore, we find that data heterogeneity benefits causal structure identification even with particular types of confounders. Finally, we show the connection between heterogeneity/nonstationarity and soft intervention in causal discovery. Experimental results on various synthetic and real-world data sets (task-fMRI and stock market data) are presented to demonstrate the efficacy of the proposed methods.
	\end{abstract}	
	\begin{keywords}
		causal discovery, heterogeneous/nonstationary data, independent-change principle, kernel distribution embedding, driving force estimation, confounder
	\end{keywords}
	
	\section{Introduction} \label{Sec: Intro}
	   Many tasks across several disciplines of empirical sciences and engineering rely on the underlying causal information. As it is often difficult, if not impossible, to carry out randomized experiments, inferring causal relations from purely observational data, known as the task of causal discovery, has drawn much attention in machine learning, philosophy, statistics, and computer science. Traditionally, for causal discovery from observational data, under appropriate assumptions, so-called constraint-based approaches recover some information of the underlying causal structure based on conditional independence relationships of the variables~\citep{SGS93}. Alternatively, approaches based on functional causal models infer the causal structure by exploiting the fact that under certain conditions, the independence between the noise and the hypothetical cause only holds for the correct causal direction but not for the wrong direction~\citep{Shimizu06,Hoyer09,Zhang09_additive,Zhang_UAI09}.
	   
	   Over the last few years, with the rapid accumulation of huge volumes of data of various types, causal discovery is facing exciting opportunities but also great challenges. 
	   One feature such data often exhibit is distribution shift. Distribution shift may occur across data sets, which may be obtained under different interventions or with different data collection conditions, or over time, as featured by nonstationary data. For an example of the former kind,  consider remote sensing imagery data. 
	   The data collected in different areas and at different times usually have different distributions due to varying physical factors related to ground, vegetation, illumination conditions, etc. As an example of the latter kind, fMRI recordings are usually nonstationary: information flows in the brain may change with stimuli, tasks, attention of the subject, etc.  More specifically, it is believed that one of the basic properties of neural connections  is their time-dependence \citep{DynamicC1}. In these situations many existing approaches to causal discovery may fail, as they assume a fixed causal model and hence a fixed joint distribution underlying the observed data. For example, if changes in local mechanisms of some variables are related, one can model the situation as if there exists some unobserved quantity which influences all those variables and, as a consequence, the conditional independence relationships in the distribution-shifted data will be different from those implied by the true causal structure.
	   
	   In this paper, we assume that mechanisms or parameters, associated with the causal model, may change across data sets or over time (we allow mechanisms to change in such a way that some causal links in the structure may vanish or appear in some domains or over some time periods). We aim to develop a principled framework to model such situations as well as practical methods, called Constraint-based causal Discovery from heterogeneous/NOnstationary Data (CD-NOD), to address the following questions:
	   
	   \begin{itemize}[itemsep=0.2pt,topsep=0.2pt]
	   	\item [1.] {How can we efficiently identify variables with changing local mechanisms and reliably recover the skeleton of the causal structure over observed variables?}
	   	
	   	\item [2.] {How can we take advantage of the information carried by distribution shifts for the purpose of identifying causal directions?}
	   \end{itemize}
      
       After identifying the causal structure, it is then appealing to ask how causal mechanisms change across domains or over time, which raises the question:
       	\begin{itemize}[itemsep=0.2pt,topsep=0.2pt]
       	\item [3.] {How can we extract from data a low-dimensional and potentially interpretable representation of changes, the so-called ``driving force" of changing causal mechanisms?}
       \end{itemize}  
               
        Furthermore, we extend our approach to deal with more general scenarios, e.g., dynamic systems which involve both time-varying instantaneous and lagged causal relations and the case of stationary confounders.
        
      In answering these questions, we make use of the following properties of causal systems. (i) Causal models and distribution shifts are heavily coupled: causal models provide a compact description of how data-generating processes, as well as data distributions, change, and distribution shifts exhibit such changes. (ii) From a causal perspective, the distribution shift in heterogeneous/nonstationary data is usually constrained---it may be due to the changes in the data-generating processes (i.e., the local causal mechanisms) of {\it a small number} of variables. (iii) From a latent variable modeling perspective, heterogeneous/nonstationary data are generated by some quantities that change across domains or over time, which gives hints as to how to understand distribution shift and estimate its underlying driving forces. (\textcolor{black}{iv}) Suppose that there are no confounders for $\texttt{cause}$ and $\texttt{effect}$. Then $P(\texttt{cause})$ and $P(\texttt{effect}\,|\,\texttt{cause})$ are either fixed or change independently, also known as the modularity property of causal systems \citep{Pearl00}. Such an independence helps identify causal directions in the presence of distribution shifts.
        
       To reliably estimate the skeleton of the causal structure and detect changing causal mechanisms from heterogeneous/nonstationary data (Problem 1), we make use of property (i), (ii), and (iii) listed above. Specifically, we introduce a surrogate variable $C$ into the causal system to characterize hidden quantities that lead to the changes across domains or over time. The variable $C$ can be a domain or time index. Including $C$ in the causal system provides a convenient way to unpack distribution shifts to causal representations.
       We show that given $C$, (conditional) independence relationships between observed variables are the same as those implied by the true causal structure. We, additionally, show that variables that are adjacent to the surrogate variable $C$ have changing causal mechanisms. We make the assumption of faithfulness on the graph involving  $C$, and it is known that faithfulness implies a minimality condition on the edges \citep{Jiji_Faithfulness16}; as a consequence, the graphical representation produced by our procedure naturally {\it enjoys a ``minimal change" principle}---the representation explains the conditional independence relations and changeability of the distribution with a minimal number of changing conditional distributions.
       
       Regarding Problem 2, as a sub-problem of causal discovery, we show that distribution shift provides additional information for causal direction identification. it is known that with functional causal model-based approaches, there are cases where causal directions are not identifiable, e.g., the linear-Gaussian case and the case with a general functional class \citep{Hyvarinen99,Zhang15_TIST}. This restricts the causal direction identification to certain functional classes, e.g., additive \citep{Shimizu06,Hoyer09,Zhang09_additive} or post-nonlinear models \citep{Zhang06_iconip, Zhang_UAI09}. We show that using information carried by distribution shifts does not suffer from these restrictions---the method applies to general causal mechanisms. 
       
       Specifically, we take advantage of  property (iv) for causal direction determination: if there is no confounder for $V_i$ and $V_j$, then the causal mechanisms, represented by the conditional distributions $P(V_i\,|\,\mathrm{PA}^i)$ and $P(V_j\,|\,\mathrm{PA}^j)$, change independently across data sets or over time. However, independence typically no longer holds for wrong directions. This gives rise to causal asymmetry. To exploit this asymmetry, we develop a kernel embedding of nonstationary conditional distributions to represent changing causal mechanisms and accordingly propose a dependence measure to determine causal directions.  Furthermore, it is worth noting that although our method can been seen as an extension of constraint-based methods such as PC \citep{Spirtes00}, unlike the original ones,  it can {\it find the causal direction between even two variables}, thanks to the surrogate variable $C$ in the system or more generally, the independent change property implied by a causal system.
       
	   Regarding Problem 3, traditionally, one may use Bayesian change point detection to detect change points of observed time series \citep{Adam07} or use sliding windows and then estimate the causal model within each segment separately. However, Bayesian change point detection can only be applied to detect changes in marginal or joint distributions, whereas causal mechanisms are represented by conditional distributions. Moreover, neither of them is appropriate when causal mechanisms change continuously over time. More recently, a window-free method has been proposed, by extending Gaussian process regression \citep{Huang15}. However, it requires the assumption of linearity, and it fails to handle the case when nonstationarity results from the change of noise distributions. In this paper, by leveraging property (iii), we propose a nonparametric method to {\it recover a low-dimensional and interpretable representation of mechanism changes}, which does not rely on window segmentation.	  	   

	   This paper is organized as follows.\footnote{This paper is built on the arXiv paper by \citet{CDNOD_arxiv} and the conference papers by \citet{Zhang17_IJCAI} and \citet{Huang17_ICDM} but is significantly extended in several aspects. We reformulate assumptions in Section \ref{Sec:assumptions}.  We extend Section \ref{Sec: skeleton} to show how we detect pseudo confounders for nonadjacent variables. In Section \ref{Sec: generalization_invar}, we add Algorithm \ref{Alg: invariance} which uses generalization of invariance to identify causal directions. In Section \ref{Sec:general}, we propose a new approach to efficiently identify causal directions using independent changes between causal mechanisms and detect pseudo confounders behind adjacent variables (Algorithm \ref{Alg: direction}). In Section \ref{Sec: identifiablity}, we give identifiability conditions of CD-NOD and define the equivalence class that CD-NOD can achieve if those conditions do not hold. Furthermore, we extend CD-NOD to the case when there exist both time-varying instantaneous and lagged causal relationships (Section \ref{Sec: Lagged}). Accordingly, we propose Algorithm \ref{rs_causal_discovery_lagged} to efficiently recover both instantaneous and time-lagged causal relationships. In Section \ref{Sec: confounder}, we further discuss whether distribution shifts also help for causal discovery when there exist stationary confounders. With CD-NOD, some causal directions may not be identifiable, if the identifiability conditions are not satisfied (Theorem \ref{Theorem: identifiability}). To make the method more applicable, we combine our framework with approaches based on constrained functional causal models (Section \ref{Sec: FCM}).  In Section \ref{Sec: Soft_Inter}, we show that heterogeneity/nonstationarity and soft intervention are related in causal discovery, and we find that our proposed method is even more effective.} In Section~\ref{Sec:problem} we define and motivate the problem in more detail and review related work.  Section~\ref{Sec:method} proposes an enhanced constraint-based method to recover the causal skeleton over observed variables and identify variables with changing causal mechanisms.  Section~\ref{Sec:unify} develops a method for determining causal directions by exploiting distribution shifts. It makes use of the property that in a causal system, causal modules change independently if there are no confounders. Section \ref{Sec: driving force} proposes a method, termed Kernel Nonstationary Visualization (KNV), to visualize a low-dimensional and interpretable representation of changing mechanisms, the so-called ``driving force". In Section \ref{Sec: extension}, we extend CD-NOD to the case that allows both time-varying lagged and instantaneous causal relationships, and we discuss whether distribution shifts also help for causal discovery when there exist stationary confounders. In addition, we give a procedure to leverage both CD-NOD and approaches based on constrained functional causal models. In Section \ref{Sec: Soft_Inter}, we show the connection between heterogeneity/nonstationarity and soft intervention in causal discovery. Section \ref{Sec:experiments} reports experimental results tested on both synthetic and real-world data sets, including task-fMRI data, Hong Kong stock data, and US stock data.
	   
	   \section{Problem Definition and Related Work}\label{Sec:problem}
	   In this section, we first review causal discovery approaches with fixed causal models. Then we give examples to show that if the underlying causal model changes, directly applying approaches with fixed causal models may result in spurious edges or wrong causal directions, which motivates our work in causal discovery with changing causal models.
	   
	   \subsection{Causal Discovery of Fixed Causal Models}
	   Most causal discovery methods assume that there is a fixed causal model underlying the observed data and aim to estimate it from the data. Classic approaches to causal discovery divide roughly into two types.  In the late 1980's and early 1990's, it was noted that under appropriate assumptions, one could recover a Markov equivalence class of the underlying causal structure based on conditional independence relationships among the variables~\citep{SGS93}. This gave rise to the constraint-based approach to causal discovery, and the resulting equivalence class may contain multiple DAGs (or other graphical objects to represent causal structures), which entail the same conditional independence relationships.  The required assumptions include the causal Markov condition and the faithfulness assumption, which entail a correspondence between d-separation properties in the underlying causal structure and statistical independence properties in the data.  The so-called score-based approach \citep[see, e.g.,][]{Chickering02,Heckerman95} searches for the equivalence class which gives the highest score under some scoring criteria, such as the Bayesian Information Criterion (BIC), the posterior of the graph given the data \citep{Heckerman97}, and the generalized score functions \citep{Huang18_KDD}.
	   
	   Another set of approaches is based on constrained functional causal models, which represent the effect as a function of the direct causes together with an independent noise term. The causal direction implied by the constrained functional causal model is generically identifiable, in that the model assumptions, such as the independence between the noise and cause, hold only for the true causal direction and are violated for the wrong direction. Examples of such constrained functional causal models include the linear non-Gaussian acyclic model \citep[LiNGAM,][]{Shimizu06}, the additive noise model~\citep{Hoyer09,Zhang09_additive}, and the post-nonlinear causal model~\citep{Zhang06_iconip, Zhang_UAI09}.
	   
	   \subsection{With Changing Causal Models}
	   Suppose that we are given a set of observed variables $\mathbf{V} = \{V_i\}_{i=1}^{m}$ whose causal structure is represented by a DAG $G$. For each $V_i$, let $PA^i$ denote the set of parents of $V_i$ in $G$. Suppose that at each time point or in each domain, the joint probability distribution of $\mathbf{V}$ factorizes according to $G$:
	   \begin{equation} \label{Eq:decomp}
	   P(\mathbf{V}) = \prod_{i=1}^m P(V_i \,|\, \mathrm{PA}^i).
	   \end{equation}
	   We call each $P(V_i\,|\,\mathrm{PA}^i)$ a {\it causal module} (the same meaning with ``causal mechanism" in previous sections). If there are distribution shifts (i.e., $P(\mathbf{V})$ changes across domains or over time), at least some causal modules $P(V_k\,|\,\mathrm{PA}^k)$, $k\in \mathcal{N}$ must change. We call those causal modules {\it changing causal modules}. Their changes may be due to changes of the involved functional models, causal strengths, noise levels, etc. We assume that those quantities that change across domains or over time can be written as functions of a domain or time index, and denote by $C$ such an index.

\begin{figure} 
	\begin{center}
		\begin{center}
			\begin{tikzpicture}[scale=.6, line width=0.5pt, inner sep=0.2mm, shorten >=.1pt, shorten <=.1pt]
			\draw (0, 0) node(1) [circle, draw] {{\footnotesize\,$V_1$\,}};
			\draw (1.8, 0) node(2) [circle, draw] {{\footnotesize\,$V_2$\,}};
			\draw (3.6, 0) node(3) [circle, draw] {{\footnotesize\,$V_3$\,}};
			\draw (5.4, 0) node(4) [circle, draw] {{\footnotesize\,$V_4$\,}};
			\draw (3.6, 1.3) node(5) {{\footnotesize\,{\small$g(C)$}\,}};
			\draw[-latex] (1) -- (2); 
			\draw[-latex] (2) -- (3); 
			\draw[-latex] (3) -- (4); 
			\draw[-latex] (5) -- (2); 
			\draw[-latex] (5) -- (4); 
			\end{tikzpicture} ~~~~~~~~~~~~~~~~~~~~~~
			\begin{tikzpicture}[scale=.6, line width=0.5pt, inner sep=0.2mm, shorten >=.1pt, shorten <=.1pt]
			\draw (0, 0) node(1) [circle, draw] {{\footnotesize\,$V_1$\,}};
			\draw (1.8, 0) node(2) [circle, draw] {{\footnotesize\,$V_2$\,}};
			\draw (3.6, 0) node(3) [circle, draw] {{\footnotesize\,$V_3$\,}};
			\draw (5.4, 0) node(4) [circle, draw] {{\footnotesize\,$V_4$\,}};
			\draw[-] (1) -- (2); 
			\draw[-] (2) -- (3); 
			\draw[-] (3) -- (4); 
			\draw[-] (1) to[out=30,in=150] (4); 
			\draw[-] (2) to[out=-30,in=-150] (4); 
			\end{tikzpicture} \\
			~~(a) ~~~~~~~~~~~~~~~~~~~~~~~~~~~~~~~~~~~~~~~~~~~~~~~~~~(b)
		\end{center}
		\caption{An illustration on how ignoring changes in the causal model may lead to spurious edges by constraint-based methods. (a) The true causal graph (including confounder $g(C)$, which is hidden). (b) The estimated causal skeleton on the observed data in the asymptotic case given by constraint-based methods, e.g., PC or FCI. }
		\label{fig:illust_C}
	\end{center}
\end{figure}
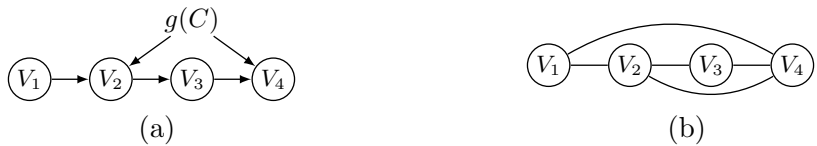

	   If the changes in some modules are related, one can treat the situation as if there exists some unobserved quantity (confounder) which influences those modules and, as a consequence, the conditional independence relationships in the distribution-shifted data will be different from those implied by the true causal structure. Therefore, standard constraint-based algorithms such as PC or FCI~\citep{SGS93} may not be able to reveal the true causal structure.  
	   As an illustration, suppose that the observed data were generated according to Fig.~\ref{fig:illust_C}(a), where $g(C)$, a function of $C$, is involved in the generating processes for both $V_2$ and $V_4$;  the causal skeleton over the observed data then contains spurious edges $V_1 - V_4$ and $V_2 - V_4$,  as shown in Fig.~\ref{fig:illust_C}(b), because there is only one conditional independence relationship, $V_3 \independent V_1 \,|\, V_2$. 
	   
	   \begin{figure*}[htp]
	   	\includegraphics[width = 1.0\textwidth]{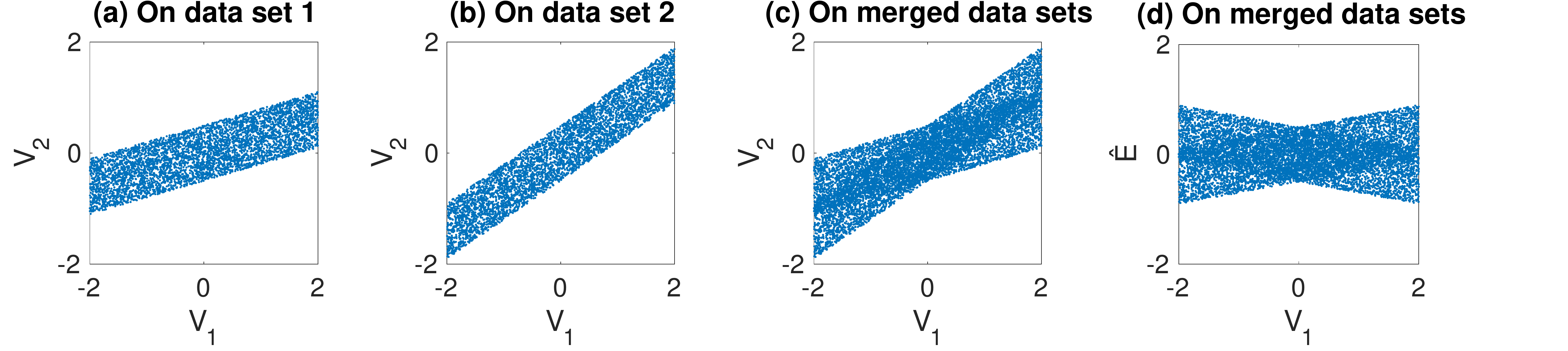} 
	   	\caption{An illustration of a failure of using the approach based on functional causal models for causal direction determination when the causal model changes. (a) Scatter plot of $V_1$ and $V_2$ on data set 1. (b) \textcolor{black}{Scatter plot of $V_1$ and $V_2$} on data set 2. (c)  \textcolor{black}{Scatter plot of $V_1$ and $V_2$} on merged data (both data sets). (d) Scatter plot of $V_1$ and the estimated regression residual $\hat{E}$ on merged data by regressing $V_2$ on $V_1$.}
	   	\label{fig:illust_SEM}
	   \end{figure*}
	   
	   Moreover, when one fits a fixed functional causal model ~\citep[e.g., a linear, non-Gaussian model,][]{Shimizu06} to distribution-shifted data, the estimated noise may not be independent of the cause. Consequently, the approach based on constrained functional causal models, in general, cannot infer the correct causal structure either.  Figure~\ref{fig:illust_SEM} gives an illustration of this point. Suppose that we have two data sets for variables $V_1$ and $V_2$: $V_2$ is generated from $V_1$ according to $V_2 = 0.3V_1 + E$ in the first data set and according to $V_2 = 0.7V_1 + E$ in the second one, and in both data sets $V_1$ and $E$ are mutually independent and follow a uniform distribution. Figure~\ref{fig:illust_SEM}(a-c) show the scatter plots of $V_1$ and $V_2$ on data set 1, on data set 2, and on merged data, respectively. Figure~\ref{fig:illust_SEM}(d) shows the scatter plot of $V_1$, the cause, and the estimated regression residual on the merged data set by regressing $V_2$ on $V_1$; they are not independent anymore, although on either data set the regression residual is independent of $V_1$. Thus, we cannot correctly determine the causal direction.
	   
	 To tackle the issue of changing causal models, one may try to find causal models in each domain, for data from multiple domains, or in each sliding window, for nonstationary data, separately, and then compare and merge them.  \textcolor{black}{For instance, regarding the former type of data from multiple domains, in particular, multiple data sets obtained by external interventions where it is unknown what variables are manipulated, \cite{Geng_16} considered two settings, depending on the sample size of each data set--in the setting with a  large sample size for each data set, they proposed a graph-merging method after learning a causal network in each domain separately; in the setting with a relative small sample size, they proposed to pool together the data to learn a network structure and then use a re-sampling approach to evaluate the edges of the learned network.} 
	 Regarding nonstationary data, improved versions include the online change point detection method~\citep{Adam07}, the online undirected graph learning~\citep{Talih05}, and the locally stationary structure tracker algorithm~\citep{Kummerfeld13}.  Such methods may suffer from high estimation variance due to sample scarcity, large type II errors, or multiple testing problems from a large number of statistical tests. Some methods aim to estimate the time-varying causal model by making use of certain types of smoothness of the change~\citep{Huang15}, but they do not explicitly locate the nonstationary causal modules. Several methods aim to model time-varying time-delayed causal relations~\citep{Xing10,Song09_DBN}, which can be reduced to online parameter learning because the direction of causal relations is given (i.e., the past influences the future). Compared to them, learning changing instantaneous causal relations, with which we are concerned in this paper, is generally more difficult. Recently, several methods have been proposed to tackle time-varying or domain-varying instantaneous causal relations \citep{Ghassami18_NIPS, SSM_Huang19, Huang19_NIPS, Huang20_AAAI}. However, they assume linear causal models, limiting their applicability to complex problems with nonlinear causal relations.
	   
	 In contrast, we develop a nonparametric and computationally efficient method that can identify changing causal modules and reliably recover the causal structure. We show that distribution shifts actually contain useful information for the purpose of determining causal directions and develop practical algorithms accordingly. After identifying the causal structure, we propose a method to estimate a low-dimensional and interpretable representation of changes.
	   
	   \section{CD-NOD Phase I: Changing Causal Module Detection and Causal Skeleton Estimation} \label{Sec:method}
	   In this section, we first formalize the assumptions that will be used in CD-NOD. Specifically, we allow a particular type of confounders---pseudo confounders, and we do not put hard restrictions on functional forms of causal mechanisms and data distributions. Accordingly, we propose an approach to efficiently detect changing causal modules and identify the causal skeleton; we call this step CD-NOD phase I. We show that the proposed approach is guaranteed to asymptotically recover the true graph as if unobserved changing factors were known.
	   
	   \subsection{Assumptions} \label{Sec:assumptions}
	   In this paper, we allow changes in causal modules and some of the changes to be related; the related changes can be explained by positing particular types of confounders. Intuitively, such confounders may refer to some high-level background variables. For instance, for fMRI data, they may be the subject's attention or some unmeasured background stimuli; for the stock market, they may be related to economic policies. Thus, we do not assume causal sufficiency for the set of observed variables.  Instead, we assume {\it pseudo causal sufficiency} as stated below. 
	   
	   \begin{Assumption}[Pseudo Causal Sufficiency]
	   	  We assume that the confounders, if any, can be written as functions of the domain index or smooth functions of time \footnote{More specifically, for data with multiple domains, we require that the confounders can be written as a function of the domain index (i.e., it does not change within a domain); for nonstationary time series, we require that the confounder is a smooth function of the time index. Roughly speaking, the smoothness constraint requires the gradient of the function to not change rapidly. In practice, one may specify the level of smoothness in advance (say, by assuming the function follows a Gaussian process prior and properly setting the kernel width to some range) or learn it from data by maximizing marginal likelihood or cross validation.}. It follows that in each domain or at each time instance, the values of these confounders are fixed. 
	   	  \label{Ass: pseudo sufficiency}
	   \end{Assumption}
	   
	   To clearly express our basic idea in the presence of distribution shift, we focus on DAGs and assume {\it pseudo causal sufficiency}. Note that our approach is flexible enough to be extended to cover other types of graphs, e.g., graphs with confounders and graphs with cycles. 
	   Later in Section \ref{Sec: confounder}, we will discuss how nonstationarity helps when there exist stationary confounders. In table \ref{Table: confounder}, we summarize descriptions of different types of confounders (latent common causes) that will be used in this paper, including pseudo confounders, stationary confounders, and nonstationary confounders.
	   
	   \begin{table}
	   	\begin{tabular}{ll}
	   		\hline
	   		\textbf{Confounder type}          & \textbf{Description}                                                                                                                                    \\ \hline
	   				\hline
	   		Pseudo confounder        & \begin{tabular}[c]{@{}l@{}} It can be represented as functions of domain index or \\ smooth functions of time index.    \end{tabular}                                                                                                 \\ \hline
	   		Stationary confounder    & \begin{tabular}[c]{@{}l@{}}Its distribution is fixed and it cannot be represented as \\functions of domain index or smooth functions of time.                                    \end{tabular}                                                                                                      \\ \hline
	   		Nonstationary confounder & \begin{tabular}[c]{@{}l@{}}Its distribution changes across domains or over time and \\it cannot be represented as functions of domain index or \\smooth functions of time index, respectively.\end{tabular} \\ \hline
	   	\end{tabular}
   	   	\caption{Descriptions of different types of confounders (latent common causes). }
        \label{Table: confounder}
	   \end{table}
	   
	   We start with contemporaneous causal relations; the mechanisms and parameters associated with the causal model are allowed to change across data sets or over time, or even vanish or appear in some domains or over some time periods. However, it is natural to generalize our framework to incorporate time-delayed causal relations (Section \ref{Sec: Lagged}).
	   
	   Denote by $\{g_l(C)\}_{l=1}^L$ the set of pseudo confounders (which may be empty). We further assume that for each $V_i$, its local causal process can be represented by the following structural equation model (SEM):
	   \begin{equation} \label{Eq:FCM}
	   V_i = f_i\big(\mathrm{PA}^i, \mathbf{g}^i(C), \theta_i(C), \epsilon_i\big),
	   \end{equation}
	   where $\mathbf{g}^i(C)\subseteq \{g_l(C)\}_{l=1}^L$ denotes the set of confounders that influence $V_i$ (it is an empty set if there is no confounder behind $V_i$ and any other
	   variable), $\theta_i(C)$ denotes the effective parameters in the model that are also assumed to be functions of $C$, and $\epsilon_i$ is a disturbance term that is independent of $C$ and $\mathrm{PA}^i$ and has a non-zero variance (i.e., the model is not deterministic). We also assume that the $\epsilon_i$'s are mutually independent. Note that $\theta_i(C)$ is specific to $V_i$ and is independent of $\theta_j(C)$ for $i\neq j$. The variable $C$ can be the domain or time index. In special cases, e.g., the case with multiple domains and all of which have nonstationarity, $C$ has two dimensions: one is the domain index and the other is the time index. The SEM given in Eq. \ref{Eq:FCM} does not have any restrictions on data distributions or functional classes. 
	   
	   In this paper we treat $C$ as a random variable, and so there is a joint distribution over $\mathbf{V}\cup\{g_l(C)\}_{l=1}^L\cup\{\theta_i(C)\}_{i=1}^m$. We assume that this distribution is Markov and faithful to the graph resulting from the following additions to $G$ (which, recall, is the causal structure over $\mathbf{V}$): add $\{g_l(C)\}_{l=1}^L\cup\{\theta_i(C)\}_{i=1}^m$ to $G$, and for each $i$, add an arrow from each variable in $\mathbf{g}^i(C)$ to $V_i$ and add an arrow from $\theta_i(C)$ to $V_i$. We  refer to this \textit{augmented graph} as $G^{aug}$. Obviously, $G$ is simply the induced subgraph of $G^{aug}$ over $\mathbf{V}$. 
	   Specifically, the assumption is summarized below.
	   \begin{Assumption}
	   	 The joint distribution over $\mathbf{V}\cup\{g_l(C)\}_{l=1}^L\cup\{\theta_i(C)\}_{i=1}^m$ is Markov and faithful to the augmented graph $G^{aug}$. In addition, there is no selection bias; i.e., the observed data are perfect random samples from the populations implied by the causal model. 
	   	 \label{Ass: bias}
	   \end{Assumption}
	   
	   The distribution change across domains or over time can be considered in the following way.
	   In the case when $C$ is the domain index, $C$ follows a uniform distribution over all possible values, and we have a particular way to generate its value: all possible values are generated once, resulting in domain indices. 
	   In the case when $C$ is the time index, we take time to be a special random variable which follows a uniform distribution over the considered time period, with the corresponding data points evenly sampled at a certain sampling frequency. 
	   Correspondingly, the generating process of nonstationary data can be considered as follows: we generate random values from $C$, and then we generate data points over $\mathbf{V}$ according to the SEM in (\ref{Eq:FCM}). The generated data points are then sorted in ascending order according to the values of $C$. In other words, we observe the distribution $P(\mathbf{V}|C)$, where $P(\mathbf{V}|C)$ may change across different values of $C$, resulting in non-identical distributions of data.

       \subsection{Detection of Changing Modules and Recovery of Causal Skeleton} \label{Sec: skeleton}
       In this section, we propose a method to detect variables whose causal modules change and infer the skeleton of $G$. The basic idea is simple: we use the (observed) variable $C$ as a surrogate for the unobserved $\{g_l(C)\}_{l=1}^L\cup\{\theta_i(C)\}_{i=1}^m$, or in other words, we take $C$ to capture \textit{C-specific} information. We now show that given the assumptions in Section \ref{Sec:assumptions}, we can apply conditional independence tests to $\mathbf{V}\cup C$ to detect variables with changing modules and recover the skeleton of $G$. We consider $C$ as a surrogate variable (it itself is not a causal variable, it is always available, and confounders and changing parameters are its functions): by adding only $C$ to the variable set $\mathbf{V}$,  the skeleton of $G$ and the changing causal modules can be {estimated}  {\it as if}
       $\{g_l(C)\}_{l=1}^L\cup\{\theta_i(C)\}_{i=1}^m$ were known. This is achieved by Algorithm \ref{rs_causal_discovery} and supported by Theorem 1.
       
       \begin{algorithm}
       	\caption{Detection of Changing  Modules and Recovery of Causal Skeleton}
       	\begin{enumerate}
       		\item Build a complete undirected graph $U_\mathcal{G}$ on the variable set $ \mathbf{V} \cup C$.
       		\item (\textit{Detection of changing  modules}) For each $i$, test for the marginal and conditional independence between $V_{i}$ and $C$. If they are independent given a subset of $\{V_k\,|\,k\neq i\}$, remove the edge between $V_{i}$ and $C$ in $ U_\mathcal{G}$.
       		\item (\textit{Recovery of causal skeleton}) For every $i\neq j$, test for the marginal and conditional independence between $V_{i}$ and $V_{j}$. If they are independent given a subset of $\{V_k\,|\,k\neq i, k\neq j\}\cup \{ C\}$, remove the edge between $V_{i}$ and $V_{j}$ in $ U_\mathcal{G}$. 
       	\end{enumerate}
       	\label{rs_causal_discovery}
       \end{algorithm}

       The procedure given in Algorithm \ref{rs_causal_discovery} outputs an undirected graph $U_\mathcal{G}$ that contains $C$ as well as $\mathbf{V}$. In Step 2, whether a variable $V_i$ has a changing module is decided by whether $V_i$ and $C$ are independent conditional on some subset of other variables. The justification for one side of this decision is trivial. If $V_i$'s module does not change, that means $P(V_i \,|\, \mathrm{PA}^i)$ remains the same for every value of $C$, and so $V_i \independent C \, | \, \mathrm{PA}^i$. Thus, if $V_i$ and $C$ are not independent conditional on any subset of other variables, $V_i$'s module changes with $C$, which is represented by an edge between $V_i$ and $C$. Conversely, we assume that if $V_i$'s module changes, which entails that $V_i$ and $C$ are not independent given $\mathrm{PA}^i$, then $V_i$ and $C$ are not independent given any other subset of $\mathbf{V}\backslash \{ V_i\}$. If this assumption does not hold, then we only claim to detect some (but not necessarily all) variables with changing modules.
       
       Step 3 aims to discover the skeleton of the causal structure over $\mathbf{V}$. \textcolor{black}{It leverages the results from Step 2: if neither $V_{i}$ nor $V_{j}$ is adjacent to $C$, then $C$ does not need to be involved in the conditioning set.} In practice, one may apply any constraint-based search procedures on $ \mathbf{V} \cup C$, e.g., SGS and PC \citep{SGS93}. Its (asymptotic) correctness is justified by the following theorem:
       \begin{Theorem} \label{theo1}
       	Given Assumptions \ref{Ass: pseudo sufficiency} and \ref{Ass: bias}, for every $V_i, V_j\in \mathbf{V}$, $V_i$ and $V_j$ are not adjacent in $G$ if and only if they are independent conditional on some subset of $\{V_k\,|\,k\neq i, k\neq j\}\cup \{ C\}$.
       \end{Theorem}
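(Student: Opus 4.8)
The plan is to move the entire argument into the augmented graph $G^{aug}$, where Assumption~\ref{Ass: bias} supplies both the Markov and the faithfulness properties, and to treat the passage between ``conditioning on the single surrogate $C$'' and ``conditioning on the whole latent block $\{g_l(C)\}_{l=1}^L\cup\{\theta_i(C)\}_{i=1}^m$'' purely at the level of distributions. The starting observation, which is exactly what Assumption~\ref{Ass: pseudo sufficiency} and the model~(\ref{Eq:FCM}) buy us, is that every $g_l(C)$ and every $\theta_i(C)$ is a measurable function of $C$; hence $\sigma(C)=\sigma\big(C,\{g_l(C)\}_l,\{\theta_i(C)\}_i\big)$, so for any $S\subseteq\mathbf{V}\setminus\{V_i,V_j\}$,
\[
V_i\independent V_j \mid S\cup\{C\}
\quad\Longleftrightarrow\quad
V_i\independent V_j \mid S\cup\{C\}\cup\{g_l(C)\}_l\cup\{\theta_i(C)\}_i .
\]

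Next I would delete the $C$ from the conditioning set on the right. First I would establish $\mathbf{V}\independent C \mid \{g_l(C)\}_l\cup\{\theta_i(C)\}_i$: by~(\ref{Eq:FCM}) each $V_i$ is, after recursive substitution, a deterministic function of the latent block and of the noises $\{\epsilon_k\}_k$, and the noises are (jointly) independent of $C$, so given the latent block $\mathbf{V}$ carries no further information about $C$; equivalently, adjoining $C$ to $G^{aug}$ as a root with an arrow into every $g_l(C)$ and every $\theta_i(C)$ produces a DAG to which the joint law is Markov and in which the latent block is precisely the Markov blanket of $C$. By decomposition and weak union this yields $(V_i,V_j)\independent C\mid S\cup\{g_l(C)\}_l\cup\{\theta_i(C)\}_i$ for every such $S$. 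Combining it with the elementary graphoid fact that $(A,B)\independent C\mid W$ forces $\big(A\independent B\mid W\ \Leftrightarrow\ A\independent B\mid W\cup\{C\}\big)$, the displayed equivalence becomes
\[
V_i\independent V_j \mid S\cup\{C\}
\quad\Longleftrightarrow\quad
V_i\independent V_j \mid S\cup\{g_l(C)\}_l\cup\{\theta_i(C)\}_i .
\]
Since the joint law is both Markov and faithful to $G^{aug}$ (Assumption~\ref{Ass: bias}), the right-hand side holds iff $V_i$ and $V_j$ are d-separated by $S\cup\{g_l(C)\}_l\cup\{\theta_i(C)\}_i$ in $G^{aug}$. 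Thus the theorem reduces to the purely graphical claim: $V_i$ and $V_j$ are non-adjacent in $G$ iff there exists $S\subseteq\mathbf{V}\setminus\{V_i,V_j\}$ that, together with the latent block, d-separates $V_i$ and $V_j$ in $G^{aug}$.

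For the graphical claim, recall that $G$ is the induced subgraph of $G^{aug}$ on $\mathbf{V}$, so $V_i$ and $V_j$ are adjacent in $G$ iff they are adjacent in $G^{aug}$; the ``$\Leftarrow$'' direction is then immediate, since any d-separated pair is non-adjacent. For ``$\Rightarrow$'', assume $V_i$ and $V_j$ are non-adjacent in $G^{aug}$. As $G^{aug}$ is acyclic, at least one of the two is a non-descendant of the other, say $V_i\notin\mathrm{De}_{G^{aug}}(V_j)$; being also non-adjacent, $V_i$ is neither a parent nor a descendant of $V_j$, so the ordered (local) Markov property of $G^{aug}$ d-separates $V_i$ from $V_j$ by the parent set of $V_j$ in $G^{aug}$, namely $\mathrm{PA}^j\cup\mathbf{g}^j(C)\cup\{\theta_j(C)\}$, with $\mathrm{PA}^j\subseteq\mathbf{V}\setminus\{V_i,V_j\}$. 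It remains to enlarge the conditioning set to the full block by adjoining the remaining $g_l(C)$'s and $\theta_k(C)$'s; by construction of $G^{aug}$ these are all \emph{roots}, hence never colliders on any path and never descendants of colliders, so adding them to a conditioning set can only close paths, never open them. Therefore $V_i$ is d-separated from $V_j$ by $\mathrm{PA}^j\cup\{g_l(C)\}_l\cup\{\theta_k(C)\}_k$ in $G^{aug}$, i.e.\ $S=\mathrm{PA}^j$ works, which establishes the graphical claim and hence the theorem.

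The delicate point---and the reason for routing everything through $G^{aug}$ rather than simply appending $C$ to it---is the determinism built into Assumption~\ref{Ass: pseudo sufficiency}: conditioning on $C$ silently conditions on the whole latent block, something ordinary d-separation in a graph containing $C$ does not register (it would be captured by the ``D-separation'' of Geiger, Verma and Pearl, but I prefer not to invoke that machinery). The proposal therefore keeps all graph-theoretic reasoning inside $G^{aug}$, which has no deterministic relations among its own nodes, and bridges between ``$\{C\}$'' and ``$\{g_l(C)\}_l\cup\{\theta_i(C)\}_i$'' only through the $\sigma$-algebra identity and the graphoid step; the one remaining subtlety---that the separating set must be chosen from within $\mathbf{V}$---is handled by the root-padding argument.
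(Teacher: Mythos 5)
Your proof is correct and follows essentially the same route as the paper's: both arguments pass to $G^{aug}$, use the determinism of $\{g_l(C)\}_{l}\cup\{\theta_i(C)\}_{i}$ as functions of $C$ together with $\mathbf{V}\independent C\mid\{g_l(C)\}_{l}\cup\{\theta_i(C)\}_{i}$ (from the SEM and the noises' independence of $C$) to translate between conditioning on $C$ and conditioning on the latent block, and then invoke the Markov and faithfulness assumptions on $G^{aug}$. Two small remarks: your reduction to the graphical claim only covers conditioning sets of the form $S\cup\{C\}$, so the ``if'' direction still needs the one-line case $\mathbf{S}\subseteq\mathbf{V}$ (where faithfulness to $G^{aug}$ gives d-separation by $\mathbf{S}$ and hence non-adjacency directly, as in the paper's first bullet); conversely, your explicit construction of the separating set ($\mathrm{PA}^j$ for the non-descendant node, padded with the remaining latent nodes, which are roots and therefore never open paths) supplies a proof of a graphical fact that the paper merely asserts.
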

       {\it Basic idea of the proof. }  For a complete proof see Appendix A. The ``only if" direction is proved by making use of the weak union property of conditional independence repeatedly,  the fact that all $g_l(C)$ and $\theta_i(C)$ are deterministic functions of $C$, some implications of the SEMs Eq.~\ref{Eq:FCM}, the assumptions in Section~\ref{Sec:assumptions}, and the properties of mutual information given in~\citet{Madiman08}. The ``if" direction is shown based on the faithfulness assumption on $G^{aug}$ and the fact that $\{g_l(C)\}_{l=1}^L \cup \{\theta_i(C)\}_{i=1}^m$ is a deterministic function of $C$.  \hfill\(\Box\)
       
       Furthermore, for any pair of nonadjacent variables $V_i$ and $V_j$ with $V_i - C$ and $V_j - C$, we can easily detect whether there are pseudo confounders behind $V_i$ and $V_j$ from the independence test results derived from Algorithm \ref{rs_causal_discovery}:
       \begin{itemize}[itemsep=0.2pt,topsep=0.2pt]
        \item [1.] If $\forall \mathbf{V}_k \subseteq \mathbf{V} \backslash \{V_i, V_j\}$, $V_i \dependent V_j | \mathbf{V}_k$, and $\exists \mathbf{V}_{k'} \subseteq \mathbf{V} \backslash \{V_i, V_j\}$, so that $V_i \independent V_j | \{\mathbf{V}_{k'}, C\}$, then there exist pseudo confounders behind $V_i$ and $V_j$.
       	\item [2.] If $\exists \mathbf{V}_k \subseteq \mathbf{V} \backslash \{V_i, V_j\}$, so that $V_i \independent V_j | \mathbf{V}_k$, then there is no pseudo confounder behind $V_i$ and $V_j$.
       \end{itemize}
	   
	   Note that in Algorithm \ref{rs_causal_discovery}, it is crucial to use a general, nonparametric conditional independence test, for how variables depending on $C$ is unknown and usually very nonlinear. In this work, we use the kernel-based conditional independence test \citep[KCI-test,][]{Zhang11_KCI} to capture the dependence on $ C $ in a nonparametric way. By contrast, if we use, for example, tests of vanishing partial correlations, as is widely used in the neuroscience community, the proposed method may not work well.
	   
	   Moreover, it is worth noting that the estimated graphical representation by Algorithm \ref{rs_causal_discovery} naturally follows the principle of minimal changes, which was explicitly formulated by \cite{Ghassami18_NIPS}. This is because faithfulness on the augmented graph implies the edge minimality condition in the graphical representation. Any variable adjacent to $C$ in the augmented graph has a changing mechanism, and the estimated graph by Algorithm \ref{rs_causal_discovery} has as few edges involving $C$ as possible; hence it has the smallest number of changing causal mechanisms (or conditional distributions).

	   \section{CD-NOD Phase II: Distribution Shifts Benefit Causal Direction Determination} \label{Sec:unify}
	   We now show that introducing the additional variable $C$ as a surrogate not only allows us to infer the skeleton of the causal structure but also facilitates the determination of some causal directions. Let us call those variables that are adjacent to $C$ in the output of Algorithm 1 ``$C$-specific variables'', which are actually the effects of changing causal modules. For each $C$-specific variable $V_k$, it is possible to determine the direction of every edge which has an endpoint on $V_k$. Let $V_l$ be any variable adjacent to $V_k$ in the output of Algorithm 1. Then there are two possible scenarios to consider:
	   \begin{enumerate}
	   	\item  [S$_1$.] $V_l$ is not adjacent to $C$. Then $C- V_k - V_l$ forms an unshielded triple. For practical purposes, we take the direction between $C$ and $V_k$ as $C \rightarrow V_k$ (though we do not claim $C$ to be a cause in any substantial sense). Then we can use standard orientation rules for unshielded triples to orient the edge between $V_k$ and $V_l$~\citep{SGS93,Pearl00}. There are two possible situations:
	   	\\{\bf 1.a}~If $V_l$ and $C$ are independent given a set of variables excluding $V_k$, then the triple is a V-structure, and we have $V_k \leftarrow V_l$.
	   	\\{\bf 1.b}~Otherwise, if $V_l$ and $C$ are independent given a set of variables including $V_k$, then the triple is not a V-structure, and we have $V_k \rightarrow V_l$.
	   	
	   	\item [S$_2$.] $V_l$ is also adjacent to $C$. This case is more complex than S$_1$, but it is still possible to identify the causal direction between $V_k$ and $V_l$, based on the principle that $P(\texttt{cause})$ and $P(\texttt{effect}\,|\,\texttt{cause})$ change independently; a heuristic method is given in Section~\ref{Sec:general}. 
	   \end{enumerate}
	   
	   The procedure in S$_1$, which will be further discussed in Section \ref{Sec: generalization_invar}, contains the methods proposed in~\citet{Hoover90,Tian01,invariantPred} for causal discovery from changes as special cases. It may also be interpreted as special cases of the principle underlying the method for S$_2$:  
	   if one of $P(\texttt{cause})$ and $P(\texttt{effect}\,|\,\texttt{cause})$ changes while the other remains invariant, they are clearly independent.

	  \subsection{Causal Direction Identification by Generalization of Invariance} \label{Sec: generalization_invar}
	  There exist methods for causal discovery from differences among multiple data sets~\citep{Hoover90,Tian01,invariantPred} that explore {\it invariance} of causal mechanisms. They used linear models to represent causal mechanisms and, as a consequence, the invariance of causal mechanisms can be assessed by checking whether the involved parameters change across data sets or not.  Actually, S$_{1.b}$ above provides a nonparametric way to achieve this in light of nonparametric conditional independence tests.
	  For any variable $V_i$ and a set of variables $\mathbf{S}$, the conditional distribution $P(V_i \,|\, \mathbf{S})$ is invariant across different values of $C$ if and only if
	  $$P(V_i \,|\, \mathbf{S}, C=c_1) = P(V_i \,|\, \mathbf{S}, C=c_2), ~\forall ~c_1 \textrm{~and~} c_2.$$
	  This is exactly the condition under which $V_i \,\independent \, C \,|\, \mathbf{S}$. In other words, testing for invariance (or homogeneity) of the conditional distribution is naturally achieved by performing a conditional independence test on $V_i$ and $C$ given the set of variables $\mathbf{S}$, for which there exist off-the-shelf algorithms and implementations.  When $\mathbf{S}$ is the empty set, this reduces to a test of marginal independence between $V_i$ and $C$, or a test of homogeneity of $P(V_i)$. 
	  
	  In S$_{1.a}$, we have the invariance of \textcolor{black}{$P(V_l)$ (i.e., $P(\texttt{cause})$)} when the causal mechanism, represented by \textcolor{black}{$P(V_k|V_l)$ (i.e., $P(\texttt{effect}\,|\,\texttt{cause})$)}, changes, which is complementary to the invariance of causal mechanisms in S$_{1.b}$. The (conditional) independence test results between $V_i$ and $C$ are readily available from Algorithm \ref{rs_causal_discovery} and can be applied to determine causal directions between variables which satisfy S$_1$. The procedure is summarized in Algorithm \ref{Alg: invariance}.
	  
	   \begin{algorithm}[htp]
	  	\caption{Causal Direction Identification by Generalization of Invariance}
	  	\begin{enumerate}
	  		\item \textbf{Input:} causal skeleton $U_{\mathcal{G}}$ from Algorithm \ref{rs_causal_discovery}.
	  		\item Orient $C \rightarrow V_k$, for any variable which is adjacent to $C$.
	  		\item For any unshielded triple with $C \rightarrow V_k - V_l$, where $V_l$ is not adjacent to $C$,
	  		  \begin{itemize}
	  		  	\item [a.] if $V_l \independent C |\mathbf{S}$, with $\mathbf{S} \subseteq \mathbf{V}$ and $\mathbf{S} \land V_k = \ \emptyset$, orient $V_k \leftarrow V_l$;
	  		  	\item [b.] if $V_l \independent C |\mathbf{S}$, with $\mathbf{S} \subseteq \mathbf{V}$ and $V_k \in \mathbf{S}$, orient $V_k \rightarrow V_l$.
	  		  \end{itemize} 
  		  \item \textbf{Output:} partially oriented graph $U_{\mathcal{G}}$ using the property of generalization of invariance.
 
	  	\end{enumerate}
	  	\label{Alg: invariance}
	  \end{algorithm}
	  
	  Naturally, both invariance properties above are particular cases of the principle of {\it independent changes} of causal modules underlying the method for S$_2$:  
	  if one of $P(\texttt{cause})$ and $P(\texttt{effect}\,|\,\texttt{cause})$ changes while the other remains invariant, they are clearly independent. Usually, there is no reason why only one of them could change, so the above invariance properties are rather restrictive. The property of {\it independent changes} holds in rather generic situations, e.g., when there is no confounder behind $\texttt{cause}$ and $\texttt{effect}$.  Below we will propose an algorithm for causal direction determination based on independent changes of causal modules.
	  
	  \subsection{Causal Direction Identification by Independently Changing Modules} \label{Sec:general}
	  
	  We now develop a method to handle S$_2$ above. To clearly express the idea, let us start with a two-variable case: suppose $V_1$ and $V_2$ are adjacent and are both adjacent to $C$. We aim to identify the causal direction between them, which, without loss of generality, we assume to be $V_1\rightarrow V_2$. 

	  Figure \ref{fig:illust_direct} shows the case where the involved changing parameters, $\theta_1(C)$ and $\theta_2(C)$, are independent, i.e., $P(V_1;\theta_1)$ and $P(V_2\,|\,V_1;\theta_2)$ change independently (we dropped the argument $C$ in $\theta_1$ and $\theta_2$ to simplify notation).
	  
	 \begin{figure} 
	  	\begin{center}
	  		\begin{tikzpicture}[scale=.7, line width=0.5pt, inner sep=0.2mm, shorten >=.1pt, shorten <=.1pt]
	  		\draw (.5, 0) node(1) [circle, draw] {{\footnotesize\,$V_1$\,}};
	  		\draw (2.8, 0) node(2) [circle, draw] {{\footnotesize\,$V_2$\,}};
	  		\draw (.5, 1.9) node(3)  {{\footnotesize\,$\theta_1(C)$\,}};
	  		\draw (2.8, 1.9) node(4)  {{\footnotesize\,$\theta_2(C)$\,}};
	  		\draw[-latex] (1) -- (2); 
	  		\draw[-latex] (3) -- (1); 
	  		\draw[-latex] (4) -- (2); 
	  		\end{tikzpicture}
	  	\end{center} 
	  	\caption{An illustration of a two-variable case: $V_1 \rightarrow V_2$ with corresponding parameters $\theta_1(C)$ and $\theta_2(C)$ changing independently. } \label{fig:illust_direct}
	  \end{figure}
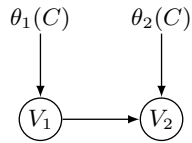
	  
	  For the reverse direction, one can decompose the joint distribution of $(V_1, V_2)$ according to
	  	\begin{equation} \label{Eq:reverse}
	  	P(V_1,V_2;\theta_1', \theta_2') = P(V_2;\theta_2')P(V_1\,|\,V_2;\theta_1'),
	  	\end{equation}
	  where $\theta_1'$ and $\theta_2'$ are assumed to be sufficient for the corresponding distribution modules $P(V_2)$ and $P(V_1|V_2)$. Generally speaking, $\theta_1'$ and $\theta_2'$ are not independent, because they are determined jointly by $\theta_1$ and $\theta_2$. 
	  
	  Now we face the problem of how to compare the dependence between $\theta_1$ and $\theta_2$ with that between $\theta'_1$ and $\theta'_2$. Since $\theta$ is assumed to be sufficient for the corresponding distribution module, it is equivalent to compare the dependence between $P(V_1)$ and $P(V_2|V_1)$ with that between $P(V_2)$ and $P(V_1|V_2)$. 
	  
	  The idea that causal modules are independent is not new \citep{Pearl00}, but note that in a stationary situation where each module is fixed, such independence is very difficult, if not impossible, to test. By contrast, in the situation we are considering presently, both $P(V_1)$ and $P(V_2|V_1)$ are changing, and we can try to measure the extent to which variation in $P(V_1)$ and variation in $P(V_2)$ are dependent (and similarly for $P(V_2)$ and $P(V_1|V_2)$). This is the sense in which distribution change actually helps in the identification of causal directions, and as far as we know, this is the first time that such an advantage is exploited in the case where both $P(\texttt{cause})$ and $P(\texttt{effect}\,|\,\texttt{cause})$ change.
	  
	  We extend the Hilbert Schmidt Independence Criterion \citep[HSIC,][]{Gretton08} to measure the dependence between causal modules. To do so, we first develop a novel kernel embedding of nonstationary conditional distributions which does not rely on sliding windows and estimate their corresponding Gram matrices in Section \ref{Sec: Kernel Embedding}, which will be used in the extended HSIC and the causal direction determination rule in Section \ref{Sec: HSIC}. In Section \ref{Sec: HSIC_multi}, we propose an algorithm for causal direction determination in multi-variable cases by taking advantage of independent changes. 
	
	 \subsubsection{Kernel Embedding of Constructed Joint Distributions} \label{Sec: Kernel Embedding}
	 \paragraph*{Notation}
	 Throughout this section, we use the following notation. 
	 Let $X$ be a random variable on domain $\mathcal{X}$, and $(\mathcal{H},k)$ be a Reproducing Kernel Hilbert Space (RKHS) with a measurable kernel on $\mathcal{X}$. Let $\phi(x) \in \mathcal{H}$ represent the feature map for each $x \in \mathcal{X}$, with $\phi: \mathcal{X} \rightarrow \mathcal{H}$. We assume integrability: $E_X[k(X,X)] \leq \infty$. Similar notations are for variables $Y$ and $C$. The cross-covariance operator $C_{YX}: \mathcal{H} \rightarrow \mathcal{G}$ is defined as $C_{YX} := E_{YX}[\phi(X) \otimes \psi(Y)]$, where $\mathcal{G}$ is the RKHS associated with $Y$.
	  
	 We represent causal modules $P(V_i|\mathrm{PA}^i, C)$ by kernel embedding. Intuitively, to represent the kernel embedding of changing causal modules, we need to consider $P(V_i|\mathrm{PA}^i, C)$ for each value of $C$ separately. If $C$ is a domain index, for each value of $C$ we have a dataset of $(V_i , \mathrm{PA}^i)$. If $C$ is a time index, one may use a sliding window to use the data of $(V_i,\mathrm{PA}^i)$ in the window of length $L$ centered at $C=c$. However, in some cases, it might be hard to find an appropriate window length $L$, especially when the causal module changes fast. In the following, we propose a way to estimate the kernel embedding of changing causal modules on the whole dataset, avoiding window segmentation. For the sake of conciseness, below we use $Y$ and $X$ to denote $V_i$ and $\mathrm{PA}^i$, respectively.
	 
	 Suppose that there are $N$ samples for each variable. Instead of working with $P(Y|X,C=c_n)$ $(n = 1,\cdots,N)$ directly, we ``virtually" construct a particular  distribution $\tilde{P}(\underline{Y}, X \,|\,C=c_n)$ as follows:\footnote{Here we use $\underline{Y}$ instead of ${Y}$ to emphasize that in this constructed distribution $Y$ and $X$ are not symmetric.}
	 \begin{equation} 
	 \tilde{P}(\underline{Y}, X | C=c_n) = P(Y | X,C=c_n)P(X).
	 \end{equation}
	 \textcolor{black}{The embedding of this ``joint distribution" of $X$ and $Y$ is simpler than that of the conditional of $Y$ given $X$.} Since $P(X)$ does not depend on $C$ and its support is rich enough to contain that of $P(X | C=c_n)$, one can see that whenever there are changes in $P(Y | X, C=c_n)$ across different values of $c_n$, there must be changes in $\tilde{P}(\underline{Y}, X | C=c_n)$, and vice versa.  In other words, the constructed distribution $\tilde{P}(\underline{Y}, X | C=c_n)$ captures changes in $P(Y | X, C=c_n)$ across different $c_n$.  We let $\tilde{P}(\underline{Y}, X,C=c_n) = P(Y | X,C=c_n)P(X)P(C=c_n)$.
	 
	 Proposition \ref{embedding} shows that the kernel embedding of the distribution $\tilde{P}(\underline{Y}, X | C=c_n)$ can be estimated on the whole data set, without window segmentation.
	 \begin{Proposition} \label{embedding}
	 	Let $X$ represent the direct causes of $Y$, and suppose that they have $N$ observations. The kernel embedding of distribution $\tilde{P}(\underline{Y}, X |C=c_n)$ can be represented as 
	 	\begin{flalign} \nonumber
	 	\hat{\tilde{\mu}}_{\underline{Y},X|{C=c_n}} 
	 	&= \frac{1}{n} \boldsymbol{\Phi}_\mathbf{y} (\mathbf{K}_\mathbf{x}\odot \mathbf{K}_\mathbf{c} + \lambda I)^{-1} \textrm{diag}(\mathbf{k}_{\mathbf{c},c_n}) \mathbf{K}_\mathbf{x} \boldsymbol{\Phi}_{\mathbf{x}}^\intercal,
	 	\end{flalign}
	 	where $\boldsymbol{\Phi}_\mathbf{y} := [\phi(y_1), ..., \phi(y_N)]$,  $\boldsymbol{\Phi}_\mathbf{x} := [\phi(x_1), ..., \phi(x_N)]$, 
	 	$\mathbf{k}_{\mathbf{c},c_n} := [k(c_1, c_n), ..., k(c_N, c_n)]^\intercal$, $\mathbf{K}_\mathbf{x}(x_t, x_{t'}) = \langle \phi(x_t), \phi(x_{t'}) \rangle$, $\mathbf{K}_\mathbf{c}(c_t, c_{t'}) = \langle \phi(c_t), \phi(c_{t'}) \rangle$, and $\odot$ represents point-wise product.
	 \end{Proposition}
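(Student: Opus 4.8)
The plan is to recognize the claimed kernel embedding as the composition of two standard constructions: a conditional mean embedding of $Y$ given the \emph{pair} $(X,C)$, followed by an average over the (artificial, $C$-free) marginal $P(X)$. First I would write down the population object. Viewing $\tilde{P}(\underline{Y},X\,|\,C=c_n)=P(Y\,|\,X,C=c_n)\,P(X)$ as a distribution on $\mathcal{Y}\times\mathcal{X}$ under which one draws $X\sim P(X)$ and then $Y\sim P(Y\,|\,X,C=c_n)$, its kernel embedding, identified as usual with an operator $\mathcal{H}\to\mathcal{G}$, is
\[
\tilde{\mu}_{\underline{Y},X|C=c_n}
= E_{\tilde P}\big[\psi(Y)\otimes\phi(X)\big]
= \int_{\mathcal{X}}\mu_{Y|X=x,\,C=c_n}\otimes\phi(x)\;P(dx),
\]
where $\mu_{Y|X=x,\,C=c_n}=E[\,\psi(Y)\mid X=x,\,C=c_n\,]$ is the conditional mean embedding of $Y$ given that the conditioning variable $(X,C)$ takes the value $(x,c_n)$. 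This construction is legitimate exactly under the conditions noted before the proposition: $P(X)$ does not depend on $C$ and its support contains that of each $P(X\,|\,C=c_n)$.

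Next, I would invoke the standard theory of conditional mean embeddings \emph{for the conditioning variable $(X,C)$}, equipped with the product kernel $k\big((x,c),(x',c')\big)=k(x,x')\,k(c,c')$, whose Gram matrix on the sample is precisely $\mathbf{K}_\mathbf{x}\odot\mathbf{K}_\mathbf{c}$. With $N$ observations $\{(x_i,y_i,c_i)\}$ and Tikhonov regularization $\lambda$ this gives $\hat{\mu}_{Y|X=x,\,C=c}=\boldsymbol{\Phi}_\mathbf{y}\,(\mathbf{K}_\mathbf{x}\odot\mathbf{K}_\mathbf{c}+\lambda I)^{-1}\mathbf{g}_{(x,c)}$, with $(\mathbf{g}_{(x,c)})_i=k(x_i,x)\,k(c_i,c)$. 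Substituting this for $\mu_{Y|X=x,C=c_n}$ and replacing the outer integral $\int\cdot\;P(dx)$ by the empirical average over the pooled sample $x_1,\dots,x_N$ (which, being drawn from the mixture $\int P(X\,|\,C=c)\,dP(c)=P(X)$, estimates the fixed marginal $P(X)$), one gets
\[
\hat{\tilde{\mu}}_{\underline{Y},X|C=c_n}=\frac{1}{N}\sum_{j=1}^{N}\boldsymbol{\Phi}_\mathbf{y}\,\big(\mathbf{K}_\mathbf{x}\odot\mathbf{K}_\mathbf{c}+\lambda I\big)^{-1}\mathbf{g}_{(x_j,c_n)}\otimes\phi(x_j).
\]
It then remains only to collect terms: since $(\mathbf{g}_{(x_j,c_n)})_i=k(x_i,x_j)\,k(c_i,c_n)$, we have $\mathbf{g}_{(x_j,c_n)}=\mathrm{diag}(\mathbf{k}_{\mathbf{c},c_n})\,(\mathbf{K}_\mathbf{x})_{\cdot j}$, the $c_n$-reweighting of the $j$-th column of $\mathbf{K}_\mathbf{x}$; summing $\mathbf{g}_{(x_j,c_n)}\otimes\phi(x_j)$ over $j$ and pulling the $j$-independent factor $\mathrm{diag}(\mathbf{k}_{\mathbf{c},c_n})$ out of the sum produces $\mathrm{diag}(\mathbf{k}_{\mathbf{c},c_n})\,\mathbf{K}_\mathbf{x}\,\boldsymbol{\Phi}_\mathbf{x}^\intercal$, which reassembles the stated formula (the prefactor $1/n$ being exactly this $1/N$ normalization of the empirical $P(X)$).

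I expect the genuine obstacle to be rigor rather than computation. One must justify the conditional-mean-embedding step for $(X,C)$: that $E[\psi(Y)\mid (X,C)=\cdot\,]$ lies in (or is suitably approximable within) the RKHS of the product kernel, that the operator $C_{(X,C)(X,C)}^{-1}$ is to be understood through the regularized inverse with $\lambda\to 0$, and that the product kernel is characteristic enough that $\tilde{\mu}_{\underline{Y},X|C=c_n}$ indeed captures every change in $P(Y\,|\,X,C=c_n)$ across $c_n$, as argued informally before the statement. These are the hypotheses under which the corresponding results in the conditional-embedding literature hold, and can be cited rather than reproved. The operator-valued bookkeeping---keeping track of which factors act as $N\times N$ matrices and which as arrays of feature maps $\boldsymbol{\Phi}_\mathbf{x},\boldsymbol{\Phi}_\mathbf{y}$, and that the end product is an operator $\mathcal{H}\to\mathcal{G}$---is routine but is best pinned down by fixing this convention at the outset.
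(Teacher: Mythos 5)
Your proposal is correct and follows essentially the same route as the paper's Appendix B: both rest on the decomposition $\tilde P(\underline{Y},X\,|\,C=c_n)=P(Y\,|\,X,C=c_n)\,P(X)$, the conditional mean embedding of $Y$ given the pair $(X,C)$ with the product kernel whose Gram matrix is $\mathbf{K}_\mathbf{x}\odot\mathbf{K}_\mathbf{c}$, and an average over the empirical marginal of $X$; you merely bypass the paper's intermediate representation $\tilde{\mathcal{C}}_{(Y,X),C}\,\mathcal{C}_{C,C}^{-1}\phi(c_n)$ and the ensuing tensor-product manipulations that collapse the expectation over $C$, by writing the population target directly as $\int\mu_{Y|X=x,\,C=c_n}\otimes\phi(x)\,P(dx)$. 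Your assembly yields the stated matrix expression with prefactor $1/N$, which is also what the paper's own derivation produces (the $1/n$ in the statement of the proposition is evidently a typo for $1/N$).
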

     The detailed proof of proposition \ref{embedding} is given in Appendix B. 
	 Next we estimate the $N \times N$ Gram matrix of $\hat{\tilde{\mu}}_{\underline{Y},X|{C=c}}$. We consider different kernels for the estimation of Gram matrix. Let $M_{\underline{Y}X}^l$ represent the Gram matrix of $\hat{\tilde{\mu}}_{\underline{Y},X|{C=c}}$ with a linear kernel, and $M^\mathcal{G}_{\underline{Y}X}$ the Gram matrix of $\hat{\tilde{\mu}}_{\underline{Y},X|{C=c}}$ with a Gaussian kernel. 
	 
	 If we use a linear kernel, the $(c, c')$th entry of the Gram matrix $M_{\underline{Y}X}^l$ is the  inner product between $ \hat{\tilde{\mu}}_{\underline{Y},X|{C=c}}$ and $ \hat{\tilde{\mu}}_{\underline{Y},X|{C=c'}}$:
	 	\begin{flalign} \nonumber
	 	 M_{\underline{Y}X}^l(c,c') 
	 	\triangleq & \textrm{tr}( \hat{\tilde{\mu}}_{\underline{Y},X|{C=c}}^\intercal \hat{\tilde{\mu}}_{\underline{Y},X|{C=c'}}) \\ 
	 	= & \frac{1}{N^2}
	 	\mathbf{k}_{\mathbf{c},c}^\intercal 
	 	\big[ \mathbf{K}_\mathbf{x}^3 \odot \big(
	 	(\mathbf{K}_\mathbf{x}\odot \mathbf{K}_\mathbf{c} + \lambda I)^{-1} \mathbf{K}_\mathbf{y} (\mathbf{K}_\mathbf{x}\odot \mathbf{K}_\mathbf{c} + \lambda I)^{-1}
	 	\big) \big] 
	 	\mathbf{k}_{\mathbf{c},c'},
	 	\end{flalign}
	 which is the $(c,c')$th entry of the matrix 
	 	\begin{flalign}  
	 	 \mathbf{M}^l_{\underline{Y}X} 
	 	= \frac{1}{N^2}
	 	\mathbf{K}_{\mathbf{c}}
	 	\big[ \mathbf{K}_\mathbf{x}^3 \odot \big(
	 	(\mathbf{K}_\mathbf{x}\odot \mathbf{K}_\mathbf{c} + \lambda I)^{-1} \mathbf{K}_\mathbf{y} (\mathbf{K}_\mathbf{x}\odot \mathbf{K}_\mathbf{c} + \lambda I)^{-1}
	 	\big) \big] 
	 	\mathbf{K}_{\mathbf{c}}. \label{Eq:Linear2}
	 	\end{flalign}
	 
	 If we use a Gaussian kernel with kernel width $\sigma_2$, the Gram matrix is given by
	 \begin{flalign}  \nonumber 
	  M^\mathcal{G}_{\underline{Y}X}(c,c') 
	 = & \textrm{exp}\big(- \frac{ ||\tilde{\mu}_{\underline{Y},X|{C=c}} -  \tilde{\mu}_{\underline{Y},X|{C=c'}} ||_F^2}{2\sigma_2^2} \big) \\  \label{Eq:Gg}
	 = & \textrm{exp}\big(-\frac{
	 	M_{\underline{Y}X}^l(c,c) +M_{\underline{Y}X}^l(c',c') - 2M_{\underline{Y}X}^l(c',c)}{2\sigma_2^2}
	 \big),
	 \end{flalign}
	 where $||\cdot||_F$ denotes the Frobenius norm. This can be represented in matrix notation as
	 \begin{flalign}  
	  \mathbf{M}^\mathcal{G}_{\underline{Y}X} 
	  =  \textrm{exp}\big(-\frac{
	 	diag(\mathbf{M}_{\underline{Y}X}^l) \cdot \mathbf{1}_{N}+ \mathbf{1}_{N} \cdot diag(\mathbf{M}_{\underline{Y}X}^l) - 2\mathbf{M}_{\underline{Y}X}^l}{2\sigma_2^2}
	 \big), \label{Eq:Gg2}
	 \end{flalign}
	 where $diag(\cdot)$ sets all off-diagonal entries zero, and $\mathbf{1}_{N}$ is an $N \times N$ matrix with all entries being 1.
		 
	 Excitingly, we can see that with our methods, we do not need to explicitly learn the high-dimensional kernel embedding $\tilde{\mu}_{\underline{Y},X|{C=c}}$ for each $c$. With the kernel trick, the final Gram matrix can be represented by $N \times N$ kernel matrices directly. 
	 
	 There are several hyperparameters to set. The hyperparameters associated with $\mathbf{K_x}$, $\mathbf{K_c}$, and the regularization parameter $\lambda$ in equation (\ref{Eq:Linear2}) are learned through a Gaussian process regression framework: they are learned by maximizing the marginal likelihood of $Y$. For the hyperparameters associated with $\mathbf{K_y}$ and the kernel with $\sigma_2$ in equation (\ref{Eq:Gg}), we set them with empirical values; please refer to \cite{Zhang11_KCI} for details.
	 
	 \textit{Change in marginal distributions.}
	 As a special case, when we are concerned with how the marginal distribution of $Y$ changes with $C$, i.e., when $X = \emptyset$, we directly make use of 
	 \begin{equation}
	   \mu_{Y|C=c_n} =  \mathcal{C}_{YC} \mathcal{C}_{CC}^{-1}\phi (c_n).
	 \end{equation}
	 This can also be obtained by constraining $X$ in $\tilde{\mu}_{\underline{Y},X|C=c_n}$ to take a fixed value. 
	 Its empirical estimate is 
	 \begin{flalign} \nonumber
	 \hat{\mu}_{Y|C=c_n} &=  \frac{1}{N}\boldsymbol{\Phi}_\mathbf{y} \boldsymbol{\Phi}_\mathbf{c}^\intercal  ( \frac{1}{n} \boldsymbol{\Phi}_\mathbf{c} \boldsymbol{\Phi}_\mathbf{c}^\intercal + \lambda I)^{-1}  \phi_{c_n} \\  
	 &=  \boldsymbol{\Phi}_\mathbf{y} ( \mathbf{K}_\mathbf{c} + \lambda I)^{-1} \mathbf{k}_{\mathbf{c},c_n}.
	 \end{flalign}
	 Then $(c,c')$ entry of the Gram matrix with a linear kernel is:
	 \begin{flalign} \nonumber
	 M_Y^l(c,c')& \triangleq  \hat{\mu}_{Y|{C=c}}^\intercal \hat{\mu}_{Y|{C=c'}}\\ \nonumber
	 & =  \mathbf{k}_{\mathbf{c},c}^\intercal ( \mathbf{K}_\mathbf{c} + \lambda I)^{-1} \boldsymbol{\Phi}_\mathbf{y}^\intercal \boldsymbol{\Phi}_\mathbf{y}  ( \mathbf{K}_\mathbf{c} + \lambda I)^{-1} \mathbf{k}_{\mathbf{c},c'}\\ 
	 & =  \mathbf{k}_{\mathbf{c},c}^\intercal ( \mathbf{K}_\mathbf{c} + \lambda I)^{-1} \mathbf{K}_\mathbf{y}  ( \mathbf{K}_\mathbf{c} + \lambda I)^{-1} \mathbf{k}_{\mathbf{c},c'},
	 \end{flalign}
	 which is the $(c,c')$th entry of 
	 \begin{equation}
	  \mathbf{M}^l_{Y} = \mathbf{K}_{\mathbf{c}} ( \mathbf{K}_\mathbf{c} + \lambda I)^{-1} \mathbf{K}_\mathbf{y}  ( \mathbf{K}_\mathbf{c} + \lambda I)^{-1} \mathbf{K}_{\mathbf{c}}.
	 \end{equation}
	 For a Gaussian kernel with kernel with $\sigma_2$, the Gram matrix is
	 \begin{equation}
	   \mathbf{M}^\mathcal{G}_{Y} = \textrm{exp}\big(-\frac{
	   	diag(\mathbf{M}_{\underline{Y}X}^l) \cdot \mathbf{1}_{N}+ \mathbf{1}_{N} \cdot diag(\mathbf{M}_{\underline{Y}X}^l) - 2\mathbf{M}_{\underline{Y}X}^l}{2\sigma_2^2} \big).
	 \end{equation}
	
	  \subsubsection{Two-Variable Case} \label{Sec: HSIC}
	  In this section, we extend HSIC to measure the dependence between causal modules, based on which we determine causal directions.
	  
	  For simplicity, let us start with the two-variable case: suppose that $X$ and $Y$ are adjacent and both are adjacent to $C$. We aim to identify the causal direction between them, which, without loss of generality, we assume to be $X \rightarrow Y$. The guiding idea is that distribution shift may carry information that confirms the independence of causal modules, which, in the simple case we are considering, is the independence between $P(X)$ and $P(Y|X)$. If $P(X)$ and $P(Y|X)$ are independent but $P(Y)$ and $P(X|Y)$ are not, then the causal direction is inferred to be from $X$ to $Y$. 
	  
	  The dependence between $P(X)$ and $P(Y|X)$ can be measured by extending the HSIC \citep{Gretton08}.
	  
	  \paragraph{HSIC}
	  
	  Given a set of observations $\{(u_1, v_1), (u_2, v_2), ... ,  (u_N, v_N) \}$ for variables $U$ and $V$, HSIC provides a measure of dependence and a statistic for testing their statistical independence. Roughly speaking, it measures the squared covariances between feature maps of $U$ and feature maps of $V$. Let $\mathbf{M}_U$ and $\mathbf{M}_V$ be the Gram matrices for $U$ and $V$ calculated on the sample,  respectively. An estimator of HSIC is given by~\citet{Gretton08}:
	  \begin{equation} \label{Eq:HSIC}
	  \mathrm{HSIC}_{UV} = \frac{1}{(N-1)^2}\mathtt{tr}(\mathbf{M}_U  H \mathbf{M}_V H),
	  \end{equation}
	  where $H$ is used to center the features, with entries $H_{ij} := \delta_{ij} - N^{-1}$. 
	  
	  In what follows, we will use a normalized version of the estimated HSIC, which is invariant to the scale in $\mathbf{M}_U$ and $\mathbf{M}_V$:
	  \begin{flalign} \nonumber
	  \mathrm{HSIC}_{UV}^\mathcal{N} &= \frac{\mathrm{HSIC}_{UV}}{\frac{1}{N-1}\mathtt{tr}(\mathbf{M}_U H)\cdot \frac{1}{N-1}\mathtt{tr}(\mathbf{M}_VH) } \\ \label{Eq:norm_HSIC}
	  &= \frac{\mathtt{tr}(\mathbf{M}_U  H \mathbf{M}_V H)}{\mathtt{tr}(\mathbf{M}_U  H)\mathtt{tr}(\mathbf{M}_V  H)}.
	  \end{flalign}

	  \paragraph{Dependence between Changing Modules}
	  
	  In our case, we aim to check whether $P(Y|X)$ and $P(X)$ change independently along with $C$. We work with the estimate of their embeddings. Then we can think of $\big\{ \big( 
	  \hat{{\mu}}_{X|{C=c_n}}, \hat{\tilde{\mu}}_{\underline{Y},X|{C=c_n}} \big)\big\}_{n=1}^{N}$ as the observed data pairs and measure their dependence from the data pairs. 
	  
	  This can be done by applying (the normalized version of) the estimate of HSIC given in Eq. \ref{Eq:norm_HSIC} to the above data pairs. The expression then involves $\mathbf{M}_X$, the Gram matrix of $\hat{\mu}_{X|C}$ at $C = c_1, c_2, ..., c_N$, and $\mathbf{M}_{\underline{Y}X}$, the Gram matrix of $\hat{\tilde{\mu}}_{\underline{Y}X|C}$  at $C = c_1, c_2, ..., c_N$.
	  In particular, the dependence between $P(Y|X)$ and $P(X)$ on the given data can be estimated by
	  \begin{equation} 
	  \hat{\Delta}_{X \rightarrow Y} =  \frac{\mathtt{tr}(\mathbf{M}_X  H \mathbf{M}_{\underline{Y}X} H)}{\mathtt{tr}(\mathbf{M}_X  H)\mathtt{tr}(\mathbf{M}_{\underline{Y}X}  H)}.
	  \label{Dir_criterion}
	  \end{equation}
	  Similarly, for the hypothetical direction $Y \rightarrow X$ the dependence between $P(X|Y)$ and $P(Y)$ on the data is estimated by
	  \begin{equation} 
	  \hat{\Delta}_{Y \rightarrow X} =  \frac{\mathtt{tr}(\mathbf{M}_Y  H \mathbf{M}_{\underline{X}Y} H)}{\mathtt{tr}(\mathbf{M}_Y  H)\mathtt{tr}(\mathbf{M}_{\underline{X}Y}  H)}.
	  \label{Dir_criterion2}
	  \end{equation}
	  
	  We then have the following rule to determine the causal direction between $X$ and $Y$.
	  \paragraph*{Causal Direction Determination Rule}
	  Suppose that $X$ and $Y$ are two random variables with $N$ observations. We assume that $X$ and $Y$ are adjacent and both are adjacent to $C$ and assume no pseudo confounders behind them. The causal direction between $X$ and $Y$ is then determined according to the following rule: 
	  \begin{itemize}
	  	\item if $\hat{\Delta}_{X \rightarrow Y} < \hat{\Delta}_{Y \rightarrow X}$, output $X \rightarrow Y$;
	  	\item if $\hat{\Delta}_{X \rightarrow Y} > \hat{\Delta}_{Y \rightarrow X}$, output $X \leftarrow Y$.
	  \end{itemize}
	  In practice, there may exist pseudo confounders. In such a case, we set a threshold $\alpha$ on $\hat{\Delta}$. If $\hat{\Delta}_{X \rightarrow Y} > \alpha$ and $\hat{\Delta}_{Y \rightarrow X} > \alpha$, we conclude that there are pseudo confounders which influence both $X$ and $Y$ and leave the direction undetermined.

	  \subsubsection{With More Than Two Variables} \label{Sec: HSIC_multi}
	  
	  The causal direction determination rule in the two-variable case can be extended to learn causal directions in multi-variable cases. Suppose that we have $m$ observed random variables $\{V_i\}_{i=1}^m$ and a partially oriented graph $U_{\mathcal{G}}$ derived from Algorithms \ref{rs_causal_discovery} and \ref{Alg: invariance}. Let $\mathbf{V}_S$ be the subset of $\{V_i\}_{i=1}^m$, such that $V_i\in \mathbf{V}_S$ if and only if $V_i$'s causal module changes. 

     \textcolor{black}{Note that generally speaking, different from the unconfounded two-variable case,  when identifying the causal direction between an unoriented pair of adjacent variables, we need to remove the effect from their common causes. Thus,} before moving forward, we first define \textit{deconfounding set} and \textit{potential deconfounding set} of a pair of adjacent variables in $U_{\mathcal{G}}$. 
     \begin{Definition}[Deconfounding Set]
     	A set of variables $\mathbf{Z} \subseteq \mathbf{V} \backslash \{V_l,V_k\}$ is the deconfounding set of a pair of adjacent variables $(V_l,V_k)$, if
     	\begin{enumerate}
     		\item [(i)] no node in $\mathbf{Z}$ is a descendant of $V_l$ or $V_k$,
     		\item [(ii)] and $\mathbf{Z}$ blocks every path between $V_l$ and $V_k$ that contains arrows into $V_l$ and $V_k$.   		
     	\end{enumerate}
     \end{Definition}
     Furthermore, a set of variables $\mathbf{Z}$ is the \textit{minimal deconfounding set} of a pair of adjacent variables $(V_l,V_k)$, if any $\mathbf{Z}_s \subset \mathbf{Z}$ is not a deconfounding set.

        \begin{Definition}[Potential Deconfounding Set]
     	A set of variables $\mathbf{Z} \subseteq \mathbf{V} \backslash \{V_l,V_k\}$ is the potential deconfounding set of a pair of adjacent variables $(V_l,V_k)$, if
     	\begin{enumerate}
     		\item [(i)] no node in $\mathbf{Z}$ is a descendant of $V_l$ or $V_k$,
     		\item [(ii)] $\mathbf{Z}$ blocks every path between $V_l$ and $V_k$ that does not contain an arrow out of $V_l$ or $V_k$, 
     		\item [(iii)] and any $Z \in \mathbf{Z}$ is not in the deconfounding set.
     	\end{enumerate}    
      \end{Definition}
       Similarly, a set of variables $\mathbf{Z}$ is the \textit{minimal potential deconfounding set} of a pair of adjacent variables $(V_l,V_k)$, if any subset $\mathbf{Z}_s \subset \mathbf{Z}$ is not a potential deconfounding set.
       
      In Figure \ref{Exp: confouding}, for example, the set $\mathbf{Z} = \{V_3\}$ is a minimal deconfounding set of $(V_1, V_2)$, and the set $\mathbf{Z} = \{V_5\}$ is a minimal potential deconfounding set of $(V_1, V_2)$.
      
      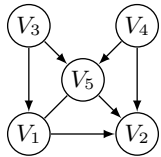
\begin{figure} 
      	\begin{center}
      		\begin{tikzpicture}[scale=.8, line width=0.5pt, inner sep=0.2mm, shorten >=.1pt, shorten <=.1pt]
      		\draw (0, 0) node(1) [circle, draw] {{\footnotesize\,$V_1$\,}};
      		\draw (1.8, 0) node(2) [circle, draw] {{\footnotesize\,$V_2$\,}};
      		\draw (0, 1.8) node(3) [circle, draw] {{\footnotesize\,$V_3$\,}};
      		\draw (1.8, 1.8) node(4) [circle, draw] {{\footnotesize\,$V_4$\,}};
            \draw (0.9, 0.9) node(5) [circle, draw] {{\footnotesize\,$V_5$\,}};
      		\draw[-latex] (1) -- (2); 
      		\draw[-latex] (3) -- (1); 
      		\draw[-latex] (4) -- (2); 
      		\draw[-] (5) -- (1); 
      		\draw[-latex] (5) -- (2); 
      		\draw[-latex] (3) -- (5); 
      		\draw[-latex] (4) -- (5); 
      		\end{tikzpicture} 
      		 \caption{An illustration of the definitions of minimal deconfounding set and minimal potential deconfounding set on a partially oriented graph.}
      	\label{Exp: confouding}
      	\end{center}
      \end{figure}
      
     We take advantage of the independence between causal modules to identify directions. 
     To efficiently identify causal directions using independent changes when there are multiple variables, we propose Algorithm \ref{Alg: direction}, with the main procedure as follows. For each undirected pair $V_l, V_k \in \mathbf{V}_S$, we denote their minimal deconfounding set by $\mathbf{Z}_{lk}^{(1)}$ and their minimal potential deconfounding set by $\mathbf{Z}_{lk}^{(2)}$. Note that there may be multiple minimal deconfounding sets and multiple minimal potential deconfounding sets; to search for such sets more efficiently, in our implementation, we only consider those ones with every variable in $\mathbf{Z}_{lk}^{(1)}$ and  $\mathbf{Z}_{lk}^{(2)}$ being adjacent to $V_l$. Let $\mathbf{Z}_{lk}^{(2,n)}$ be a subset of $\mathbf{Z}_{lk}^{(2)}$, where $n$ is the total cardinality of $\mathbf{Z}_{lk}^{(1)}$ and $\mathbf{Z}_{lk}^{(2,n)}$; i.e., $|\mathbf{Z}_{lk}^{(1)}| + |\mathbf{Z}_{lk}^{(2,n)}| = n$.
     We evaluate the dependence between $P(V_k, \mathbf{Z}_{lk}^{(1)},\mathbf{Z}_{lk}^{(2,n)})$ and $P(V_l|V_k,\mathbf{Z}_{lk}^{(1)},\mathbf{Z}_{lk}^{(2,n)})$, and that between $P(V_l, \mathbf{Z}_{lk}^{(1)},\mathbf{Z}_{lk}^{(2,n)})$ and $P(V_k|V_l,\mathbf{Z}_{lk}^{(1)},\mathbf{Z}_{lk}^{(2,n)})$. If we find that $P(V_l,\mathbf{Z}_{lk}^{(1)},\mathbf{Z}_{lk}^{(2,n)}) \independent P(V_k|V_l,\mathbf{Z}_{lk}^{(1)},\mathbf{Z}_{lk}^{(2,n)})$ and that $P(V_k,\mathbf{Z}_{lk}^{(1)},\mathbf{Z}_{lk}^{(2,n)}) \dependent P(V_l|V_k,\mathbf{Z}_{lk}^{(1)},\mathbf{Z}_{lk}^{(2,n)})$, we output $V_l \rightarrow V_k$, and if there are unoriented edges from variables in $\mathbf{Z}_{lk}^{(2,n)}$ to $V_k$ or $V_l$, then we consider those variables as parents. Similarly, if we find that $P(V_k,\mathbf{Z}_{lk}^{(1)},\mathbf{Z}_{lk}^{(2,n)}) \independent P(V_l|V_k,\mathbf{Z}_{lk}^{(1)},\mathbf{Z}_{lk}^{(2,n)})$ and that $P(V_l,\mathbf{Z}_{lk}^{(1)},\mathbf{Z}_{lk}^{(2,n)}) \dependent P(V_k|V_l,\mathbf{Z}_{lk}^{(1)},\mathbf{Z}_{lk}^{(2,n)})$,  we output $V_k \rightarrow V_l$, instead. Similar to the search procedure of PC, we start from $n=0$, evaluate the dependence between corresponding modules for each undirected pair, and then let $n = n+1$ and repeat the procedure until no unoriented pairs have the total cardinality of minimal deconfounding set and minimal potential deconfounding set greater than or equal to $n$.

     Note that in Algorithm \ref{Alg: direction}, we use dependence measures in (\ref{Dir_criterion}) and (\ref{Dir_criterion2}) to determine the independence between causal modules. Particularly, we set a threshold $\alpha$. If the dependence measure $\hat{\Delta} \leq \alpha$, then the corresponding modules are independent; otherwise, they are dependent. For a pair of adjacent variables $V_k, V_l \in \mathbf{V}_S$, if their direction is undetermined by Algorithm \ref{Alg: direction} and all their measured modules are dependent, then there exist pseudo confounders behind $V_k$ and $V_l$. 
     
     \begin{algorithm}[htp]
     	\caption{Causal Direction Identification by Independent Changes of Causal Modules}
     	\begin{enumerate}
     		\item  \textbf{Input}: observations of $\{V_i\}_{i=1}^m$, a subset $\mathbf{V}_S \subseteq \mathbf{V}$ ($\forall V \in \mathbf{V}_S$, $V$ has a changing causal module), a partially oriented causal graph $U_{\mathcal{G}}$ from Algorithms \ref{rs_causal_discovery} and \ref{Alg: invariance}.
     		
     		\item $n = 0$\\		
     		Repeat
     		\begin{enumerate}[nolistsep]
     			\item [] Repeat
     			\begin{enumerate}
     				\item  Select an unoriented pair of adjacent variables $(V_l, V_k)$, with $V_l, V_k\in \mathbf{V}_S$.
     				\begin{enumerate}
     					\item Let $\mathbf{Z}_{lk}^{(1)}$ be the set of minimal deconfounding set of $V_k$ and $V_l$; any $Z \in \mathbf{Z}_{lk}^{(1)}$ is adjacent to $V_l$.
     					\item Let $\mathbf{Z}_{lk}^{(2)}$ be the set of minimal potential deconfounding set; any $Z \in \mathbf{Z}_{lk}^{(2)}$ is adjacent to $V_l$.
     				\end{enumerate}
     				
     				\item  If $|\mathbf{Z}_{lk}^{(1)}| + |\mathbf{Z}_{lk}^{(2)}| \geq n$, repeat
     				\begin{enumerate}
     					\item Take $\mathbf{Z}_{lk}^{(2,n)} \subseteq \mathbf{Z}_{lk}^{(2)}$, with $|\mathbf{Z}_{lk}^{(2,n)}|=n- |\mathbf{Z}_{lk}^{(1)}| $.
     					\item If $P(V_k,\mathbf{Z}_{lk}^{(1)},\mathbf{Z}_{lk}^{(2,n)}) \independent P(V_l|V_k,\mathbf{Z}_{lk}^{(1)},\mathbf{Z}_{lk}^{(2,n)}) \text{ and } P(V_l,\mathbf{Z}_{lk}^{(1)},\mathbf{Z}_{lk}^{(2,n)}) \dependent P(V_k|V_l,\mathbf{Z}_{lk}^{(1)},\mathbf{Z}_{lk}^{(2,n)}) $, output $V_k \rightarrow V_l$; 
     					
     					for $Z \in \mathbf{Z}_{lk}^{(2,n)}$ with $Z - V_l$, output $Z \rightarrow V_l$;  
     					
     					for $Z \in \mathbf{Z}_{lk}^{(2,n)}$ with $Z - V_k$, output $Z \rightarrow V_k$. 
     					
     					\item If $P(V_k,\mathbf{Z}_{lk}^{(1)},\mathbf{Z}_{lk}^{(2,n)}) \dependent P(V_l|V_k,\mathbf{Z}_{lk}^{(1)},\mathbf{Z}_{lk}^{(2,n)}) \text{ and } P(V_l,\mathbf{Z}_{lk}^{(1)},\mathbf{Z}_{lk}^{(2,n)}) \independent P(V_k|V_l,\mathbf{Z}_{lk}^{(1)},\mathbf{Z}_{lk}^{(2,n)}) $, output $V_l \rightarrow V_k$; 
     					
     					for $Z \in \mathbf{Z}_{lk}^{(2,n)}$ with $Z - V_l$, output $Z \rightarrow V_l$; 
     					
     					for $Z \in \mathbf{Z}_{lk}^{(2,n)}$ with $Z - V_k$, output $Z \rightarrow V_k$. 
     					\item If one of the conditions in (B) and (C) holds, return to step (i); 
     				\end{enumerate}
     			\end{enumerate}
     			\item [] Until every unoriented pair of adjacent variables $(V_l, V_k)$, with $V_l, V_k\in \mathbf{V}_S$, has been selected.
     		\end{enumerate}
     		\item [] $n = n+1$.
     		\item \textbf{Output}: graph $U_{\mathcal{G}}$, with edges between variables in $\mathbf{V}_S$ oriented.
     	\end{enumerate}
     	\label{Alg: direction}
     \end{algorithm}

   Furthermore, for variables whose causal modules are stationary and which are adjacent only to variables with stationary modules, the causal directions between them cannot be determined by Algorithm \ref{Alg: invariance} or \ref{Alg: direction}. In such a case, one may further infer some causal directions by making use of Meek's orientation rules \citep{Meek95} used in PC. 
	  
	To better illustrate the algorithm, we go through a simple example, for which the true graph and brief orientation procedures are given in Figure \ref{Exp: direction}. Below is the precise orientation procedure.
	\begin{enumerate}[noitemsep,topsep=0pt]
		\item  In step 1, we have observed variables $\{V_i\}_{i=1}^4$, with $V_i \in \mathbf{V}_S$ for any $i$, and an unoriented graph $U_{\mathcal{G}}$ over $\{V_i\}_{i=1}^4$ after Algorithms \ref{rs_causal_discovery} and \ref{Alg: invariance}. Since all causal modules are changing, there is no invariance property that can be used for orientation determination in Algorithm \ref{Alg: invariance}.
		\item  In step 2, we start from $n=0$. For example, for the unoriented pair of adjacent variables $(V_1,V_2)$, $\mathbf{Z}_{12}^{(1)} = \{\emptyset\}$ and $\mathbf{Z}_{12}^{(2,n)} = \{\emptyset\}$. In this case, we have $P(V_1) \dependent P(V_2|V_1)$ and $P(V_2) \dependent P(V_1|V_2)$, and thus we cannot determine the direction between $V_1$ and $V_2$. Similarly, we cannot determine the direction between $V_2$ and $V_4$.  For the unoriented pair $(V_1,V_3)$, we have that $P(V_3) \independent P(V_1|V_3)$ and that $P(V_1) \dependent P(V_3|V_1)$, and thus we output $V_3 \rightarrow V_1$. Similarly, we can determine the direction between $V_3$ and $V_4$ and output $V_3 \rightarrow V_4$.
		
		Then let $n=1$. For the unoriented pair $(V_1,V_2)$, $\mathbf{Z}_{12}^{(1)} = \{\emptyset\}$ and $\mathbf{Z}_{12}^{(2,n)} = \{V_3\}$. We then have that $P(V_1, V_3) \independent P(V_2 | V_1,V_3)$ and that $P(V_2, V_3) \dependent P(V_1 | V_2,V_3)$, and thus we output $V_1 \rightarrow V_2$. For the unoriented pair $(V_2,V_4)$, $\mathbf{Z}_{24}^{(1)} = \{V_3\}$ and $\mathbf{Z}_{24}^{(2,n)} = \{\emptyset\}$. We then have that $P(V_4, V_3) \independent P(V_2 | V_4,V_3)$ and that $P(V_2, V_3) \dependent P(V_4 | V_2,V_3)$, and thus output $V_4 \rightarrow V_2$.
		\item We output the fully identified causal graph.
	\end{enumerate}

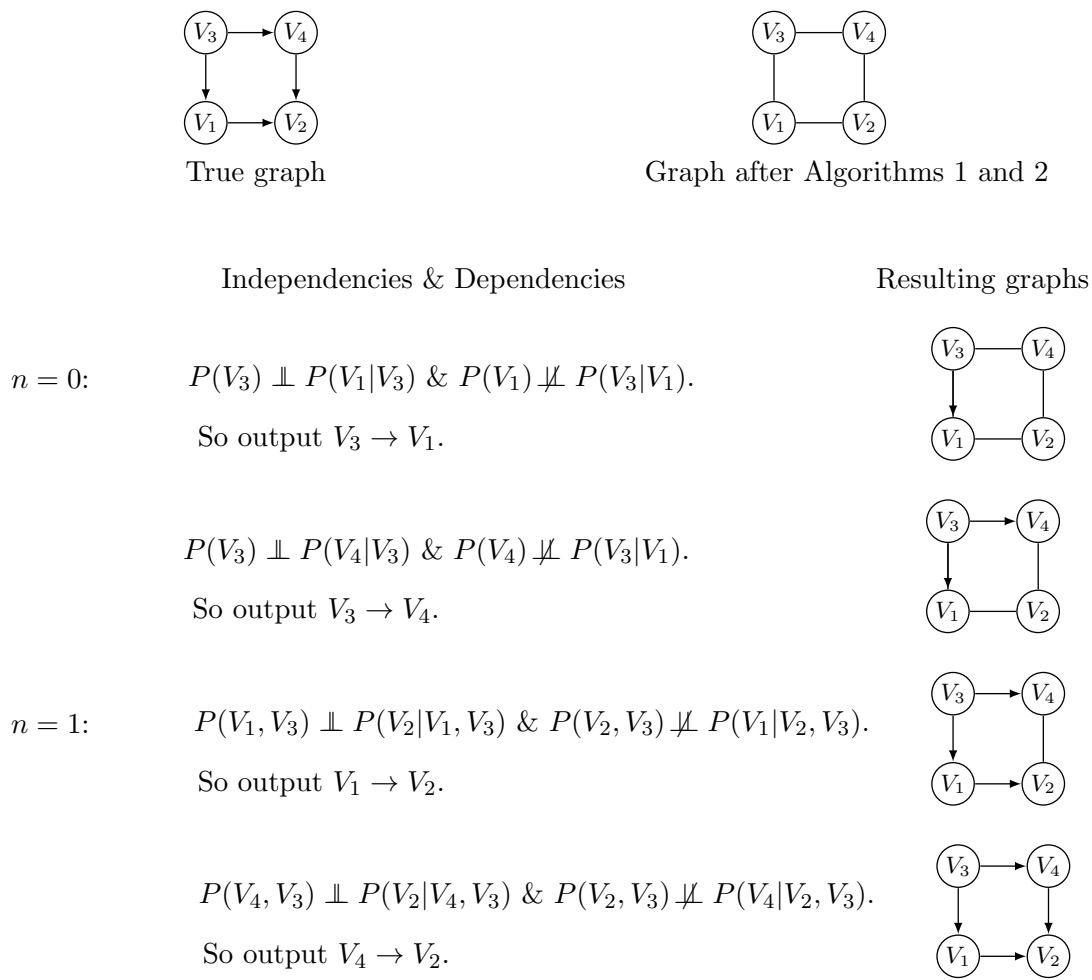
\begin{figure} 
	\begin{center}
		\begin{tikzpicture}[scale=.8, line width=0.5pt, inner sep=0.2mm, shorten >=.1pt, shorten <=.1pt]
		\draw (0, 0) node(1) [circle, draw] {{\footnotesize\,$V_1$\,}};
		\draw (1.5, 0) node(2) [circle, draw] {{\footnotesize\,$V_2$\,}};
		\draw (0, 1.5) node(3) [circle, draw] {{\footnotesize\,$V_3$\,}};
		\draw (1.5, 1.5) node(4) [circle, draw] {{\footnotesize\,$V_4$\,}};
		\draw[-latex] (3) -- (1); 
		\draw[-latex] (1) -- (2); 
		\draw[-latex] (4) -- (2);
		\draw[-latex] (3) -- (4); 
		\end{tikzpicture} ~~~~~~~~~~~~~~~~~~~~~~~~~~~~~~~~~~~~~~~~~~~
		\begin{tikzpicture}[scale=.8, line width=0.5pt, inner sep=0.2mm, shorten >=.1pt, shorten <=.1pt]
		\draw (5, 0) node(1) [circle, draw] {{\footnotesize\,$V_1$\,}};
		\draw (6.5, 0) node(2) [circle, draw] {{\footnotesize\,$V_2$\,}};
		\draw (5, 1.5) node(3) [circle, draw] {{\footnotesize\,$V_3$\,}};
		\draw (6.5, 1.5) node(4) [circle, draw] {{\footnotesize\,$V_4$\,}};
		\draw[-] (3) -- (1); 
		\draw[-] (1) -- (2); 
		\draw[-] (4) -- (2);
		\draw[-] (3) -- (4); 
		\end{tikzpicture} \\
		~~~~~~~~~~~~~~~~~True graph~~~~~~~~~~~~~~~~~~~~~~~~~~~~~~~~~Graph after Algorithms \ref{rs_causal_discovery} and \ref{Alg: invariance} \\
		~\\
		
		~\\
		~~~~~~~~~~~~~~~~~~~~~~~~~Independencies \& Dependencies~~~~~~~~~~~~~~~~~~~~~~~~~~Resulting graphs \\
		~\\		
		\begin{tikzpicture}[scale=.8, line width=0.5pt, inner sep=0.2mm, shorten >=.1pt, shorten <=.1pt]
		\draw (5, 0) node(1) [circle, draw] {{\footnotesize\,$V_1$\,}};
		\draw (6.5, 0) node(2) [circle, draw] {{\footnotesize\,$V_2$\,}};
		\draw (5, 1.5) node(3) [circle, draw] {{\footnotesize\,$V_3$\,}};
		\draw (6.5, 1.5) node(4) [circle, draw] {{\footnotesize\,$V_4$\,}};
		\draw (-10, 1) node(5) {{\,$n=0$:\,}};
		\draw (-3.5, 1) node(7) {{\,$P(V_3) \independent P(V_1|V_3) $ \& $P(V_1) \dependent P(V_3|V_1)$.\,}};
			\draw (-5.5, -0) node(8) {{\,So output $V_3 \rightarrow V_1$.\,}};  
		\draw[-] (3) -- (1); 
		\draw[-] (1) -- (2); 
		\draw[-] (4) -- (2);
		\draw[-] (3) -- (4);
		\draw[-latex] (3) -- (1); 
		\end{tikzpicture} \\
		~\\
		
		\begin{tikzpicture}[scale=.8, line width=0.5pt, inner sep=0.2mm, shorten >=.1pt, shorten <=.1pt]
      \draw (5, 0) node(1) [circle, draw] {{\footnotesize\,$V_1$\,}};
     \draw (6.5, 0) node(2) [circle, draw] {{\footnotesize\,$V_2$\,}};
     \draw (5, 1.5) node(3) [circle, draw] {{\footnotesize\,$V_3$\,}};
     \draw (6.5, 1.5) node(4) [circle, draw] {{\footnotesize\,$V_4$\,}};
     \draw (-10, 1) node(5) {{\,$ \qquad$\,}};
    \draw (-3.5, 1) node(7) {{\,$P(V_3) \independent P(V_4|V_3) $ \& $P(V_4) \dependent P(V_3|V_1)$.\,}};
    \draw (-5.5, -0) node(8) {{\,So output $V_3 \rightarrow V_4$.\,}};  
   \draw[-] (3) -- (1); 
    \draw[-] (1) -- (2); 
   \draw[-] (4) -- (2);
   \draw[-latex] (3) -- (1); 
   \draw[-latex] (3) -- (4); 
    \end{tikzpicture} \\
   ~\\
		
		\begin{tikzpicture}[scale=.8, line width=0.5pt, inner sep=0.2mm, shorten >=.1pt, shorten <=.1pt]
		\draw (5, 0) node(1) [circle, draw] {{\footnotesize\,$V_1$\,}};
		\draw (6.5, 0) node(2) [circle, draw] {{\footnotesize\,$V_2$\,}};
		\draw (5, 1.5) node(3) [circle, draw] {{\footnotesize\,$V_3$\,}};
		\draw (6.5, 1.5) node(4) [circle, draw] {{\footnotesize\,$V_4$\,}};
		\draw (-10, 1) node(5) {{\,$n=1$:\,}};
		\draw (-2, 1) node(6) {{\,$P(V_1,V_3) \independent P(V_2|V_1,V_3) $ \& $P(V_2,V_3) \dependent P(V_1|V_2,V_3)$.\,}};
		\draw (-5.5, 0) node(6) {{\,So output $V_1 \rightarrow V_2$.\,}};  
		\draw[-latex] (3) -- (1); 
		\draw[-latex] (1) -- (2); 
		\draw[-] (4) -- (2);
		\draw[-latex] (3) -- (4); 
		\end{tikzpicture} \\
		
		~\\

		~~~~~~~~~~~~~~~~~~~
		\begin{tikzpicture}[scale=.8, line width=0.5pt, inner sep=0.2mm, shorten >=.1pt, shorten <=.1pt]
		\draw (5, 0) node(1) [circle, draw] {{\footnotesize\,$V_1$\,}};
		\draw (6.5, 0) node(2) [circle, draw] {{\footnotesize\,$V_2$\,}};
		\draw (5, 1.5) node(3) [circle, draw] {{\footnotesize\,$V_3$\,}};
		\draw (6.5, 1.5) node(4) [circle, draw] {{\footnotesize\,$V_4$\,}};
		\draw (-2, 1) node(6) {{\,$P(V_4,V_3) \independent P(V_2|V_4,V_3) $ \& $P(V_2,V_3) \dependent P(V_4|V_2,V_3)$.\,}};
		\draw (-5.5, 0) node(6) {{\,So output $V_4 \rightarrow V_2$.\,}};  
		\draw[-latex] (3) -- (1); 
		\draw[-latex] (1) -- (2); 
		\draw[-latex] (4) -- (2);
		\draw[-latex] (3) -- (4); 
		\end{tikzpicture} \\

	\end{center}
	\caption{An example to illustrate causal direction identification with Algorithm \ref{Alg: direction}.}
   \label{Exp: direction}
\end{figure}

	\subsection{Identifiability Conditions of CD-NOD} \label{Sec: identifiablity}
	
	In this section, we first give identifiability conditions of CD-NOD, and then we define the equivalence class that CD-NOD can identify if corresponding conditions do not hold.
	
	We make use of independent changes and orientation rules, including discovering V-structures and using orientation propagation, to determine causal directions. To fully identify the DAG, we ought to make some assumption on the change property of the distribution in wrong directions. In particular, we have the following assumption.
	\begin{Assumption}
		 If $V_i \rightarrow V_j$, with $V_i, V_j \in \mathbf{V}$, and at least one of them has a changing mechanism, then $P(V_i|V_j,\mathbf{Z})$ and $P(V_j,\mathbf{Z})$ are dependent, where $\mathbf{Z} \subseteq \mathbf{V} \backslash \{V_l,V_k\}$ is a minimal deconfounding set of $(V_i, V_j)$.
		 \label{Ass: module independence}
	\end{Assumption}
    This assumption can be viewed as faithfulness on another level. Specifically, if $V_i \rightarrow V_j$, $P(V_i|V_j,\mathbf{Z})$ and $P(V_j,\mathbf{Z})$ involve the changing parameters both in $V_i$ and $V_j$, so they are expected to be dependent.
	Now we are ready to give identifiability conditions of causal directions, which are stated in Theorem \ref{Theorem: identifiability}.	
	   \begin{Theorem}
		Under Assumptions \ref{Ass: pseudo sufficiency}, \ref{Ass: bias}, and \ref{Ass: module independence}, the causal direction between adjacent variables $V_i, V_j \in \mathbf{V}$ is identifiable with CD-NOD, if at least one of the following three conditions is satisfied:
		\begin{itemize}
			\item [(1)] the edge between $V_i$ and $V_j$ is involved in a V-structure; 
			\item [(2)] at least one of $V_i's$ causal module and $V_j's$ causal module changes, and if both, the causal modules of $V_i$ and $V_j$ are independent;
			\item [(3)] there exists an edge incident to one and only one of the variables in $\{V_i, V_j\}$.
		\end{itemize}		
		\label{Theorem: identifiability}
	\end{Theorem}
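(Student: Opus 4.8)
The plan is to argue by cases on which of conditions~(1)--(3) holds, showing in each case that the full CD-NOD pipeline---Algorithm~\ref{rs_causal_discovery} for the skeleton, Algorithms~\ref{Alg: invariance} and~\ref{Alg: direction} for orientation, then Meek's rules---directs the edge between $V_i$ and $V_j$, and directs it as in $G$. Two facts are used throughout. First, by Theorem~\ref{theo1} the skeleton over $\mathbf{V}$ returned by Algorithm~\ref{rs_causal_discovery} coincides with that of $G$, and $V_k$ is adjacent to $C$ in the output exactly when $V_k$'s module changes, so orientation starts from the correct pattern on $\mathbf{V}\cup\{C\}$. Second, Assumption~\ref{Ass: bias} (faithfulness to $G^{aug}$) together with the fact that every $g_l(C)$ and $\theta_i(C)$ is a deterministic function of $C$ makes the (conditional) independence relations among $\mathbf{V}\cup\{C\}$ track $d$-separation in $G^{aug}$; since faithfulness to $G^{aug}$ also entails faithfulness of the marginal over $\mathbf{V}$ to $G$, standard collider detection among $\mathbf{V}$ is valid as well. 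These are the tools that certify each orientation CD-NOD produces is the one in $G$.

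\emph{Condition (1) and the ``exactly one changes'' part of (2).} If $V_i-V_j$ lies on an unshielded collider, say $V_i\rightarrow V_j\leftarrow V_k$ with $V_i,V_k$ nonadjacent, then correctness of the skeleton and faithfulness make the collider test fire (no separating set of $V_i$ and $V_k$ contains $V_j$), so CD-NOD orients $V_i\rightarrow V_j$, as in $G$. If exactly one of $V_i,V_j$ has a changing module, call it $V_k$ and the other $V_\ell$; then $V_k-C$ is an edge, $V_\ell$ is nonadjacent to $C$, and $C-V_k-V_\ell$ is unshielded. Treating the $C$--$V_k$ edge as $C\rightarrow V_k$, Algorithm~\ref{Alg: invariance} orients $V_k-V_\ell$: by faithfulness to $G^{aug}$ and determinism, $C$ and $V_\ell$ are separated only by sets containing $V_k$ iff $V_k$ is a noncollider on the path $C\rightarrow V_k - V_\ell$ iff the true edge is $V_k\rightarrow V_\ell$; otherwise the triple is a V-structure and $V_\ell\rightarrow V_k$ is output. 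Either way the output matches $G$.

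\emph{The ``both change'' part of (2).} Here $V_i,V_j\in\mathbf{V}_S$, their causal modules are assumed independent, and Algorithm~\ref{Alg: direction} processes the pair. For the true direction, say $V_i\rightarrow V_j$, one conditions on a minimal deconfounding set $\mathbf{Z}$; this removes the contribution of any shared pseudo confounder, so that $\hat{\Delta}_{V_i\rightarrow V_j}$ asymptotically measures the dependence between the genuine $V_i$-module and $V_j$-module, which lies below $\alpha$ because $\theta_i(C)$ and $\theta_j(C)$ are independent, whereas $\hat{\Delta}_{V_j\rightarrow V_i}$ exceeds $\alpha$ by Assumption~\ref{Ass: module independence}; hence $V_i\rightarrow V_j$ is returned. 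I expect this to be the main obstacle, and would spend most effort here: one must verify that the minimal deconfounding and potential-deconfounding sets used in the search are well defined given the partially oriented graph produced so far, that conditioning on them really does reduce $\hat{\Delta}$ to a comparison of the ``bare'' causal modules (so that independence of $\theta_i(C),\theta_j(C)$ and Assumption~\ref{Ass: module independence} apply to the quantities actually computed), and that the search over increasing $n$ reaches such a set before any incorrect orientation could be imposed on the pair.

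\emph{Condition (3).} After conditions~(1) and~(2) have been applied to every eligible edge, Meek's rules are run to closure. Suppose an edge $V_k-V_i$ with $V_k\notin\{V_i,V_j\}$ is incident to exactly one of $\{V_i,V_j\}$. If $V_k$ is nonadjacent to $V_j$, then $V_k-V_i-V_j$ is unshielded: if it is a collider the edge is already oriented by condition~(1); if it is a noncollider, then as soon as $V_k-V_i$ acquires an arrowhead into $V_i$, Meek's rule R1 forces $V_i\rightarrow V_j$, and any orientation of $V_i-V_j$ that would instead create a cycle or a spurious collider with the remaining incident edges is excluded by soundness. If $V_k$ is adjacent to $V_j$ one argues analogously through the remaining Meek rules. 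Soundness of all propagated orientations is the standard soundness of Meek's rules on a correct pattern, so the resulting direction agrees with $G$. The delicate half of this case is combinatorial: one must show that the orientations already available from~(1)--(2), together with the neighborhood asymmetry guaranteed by condition~(3), actually suffice for propagation to reach $V_i-V_j$ and to terminate there---a purely structural asymmetry is not by itself enough (an undirected path has an unorientable middle edge), so the proof must genuinely combine condition~(3) with the v-structure and module-change orientations. Assembling the three cases gives the theorem.
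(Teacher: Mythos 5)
Your proposal follows the same case analysis as the paper's proof in Appendix C: collider detection for condition (1), the invariance tests of Algorithm~\ref{Alg: invariance} for the one-module-changes part of (2), the independent-change criterion of Algorithm~\ref{Alg: direction} together with Assumption~\ref{Ass: module independence} for the both-change part of (2), and the unshielded-triple orientation for (3). The delicate points you flag are genuine but are not resolved in the paper's own proof either---in particular, for condition (3) the paper simply writes the incident edge as already oriented into the shared endpoint ($V_i - V_j \leftarrow V_l$), which is exactly the extra information you correctly observe is needed beyond the bare structural asymmetry.
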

  
   Please refer to Appendix C for detailed proofs. Note that CD-NOD does not require hard restrictions on the functional class of causal mechanisms and data distributions. Combined with Theorem \ref{theo1}, we know that for any pair of adjacent variables in graph $G$, if at least one of the three conditions in Theorem \ref{Theorem: identifiability} holds, then the whole causal structure is identifiable. It is given in the following corollary.
   
    \begin{Corollary}
 	Under Assumptions \ref{Ass: pseudo sufficiency}, \ref{Ass: bias}, and \ref{Ass: module independence}, the whole causal graph is identifiable, if any pair of adjacent variables $V_i$ and $V_j$ in graph $G$ satisfies at least one of the following three conditions:
 	\begin{itemize}
 		\item [(1)] the edge between $V_i$ and $V_j$ is involved in a V-structure; 
 		\item [(2)] at least one of $V_i's$ causal module and $V_j's$ causal module changes, and if both, the causal modules of $V_i$ and $V_j$ are independent;
 		\item [(3)] there exists an edge incident to one and only one of the variables in $\{V_i, V_j\}$.
 	\end{itemize}
 \end{Corollary}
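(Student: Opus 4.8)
The plan is to obtain the Corollary by quantifying Theorem~\ref{Theorem: identifiability} over every edge of $G$ and combining it with the skeleton-recovery guarantee of Theorem~\ref{theo1}; no new machinery is needed, so I expect a short argument.

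First I would invoke Theorem~\ref{theo1}: under Assumptions~\ref{Ass: pseudo sufficiency} and~\ref{Ass: bias}, Algorithm~\ref{rs_causal_discovery} asymptotically recovers the exact adjacency structure of $G$, i.e., two observed variables are adjacent in the output graph (restricted to $\mathbf{V}$) if and only if they are adjacent in $G$. Since a DAG is uniquely determined by its skeleton together with the orientation of each of its edges, it then suffices to show that CD-NOD orients every skeleton edge in its true direction. Fix any edge $V_i - V_j$ of $G$. By hypothesis the pair $\{V_i,V_j\}$ satisfies at least one of conditions (1)--(3), which are verbatim the hypotheses of Theorem~\ref{Theorem: identifiability}; hence, under Assumptions~\ref{Ass: pseudo sufficiency},~\ref{Ass: bias} and~\ref{Ass: module independence}, that theorem tells us the direction of $V_i - V_j$ is identifiable with CD-NOD, i.e., the fixed CD-NOD pipeline (Phase~I skeleton estimation, the unshielded-triple/invariance rules of Algorithm~\ref{Alg: invariance}, the independent-change rules of Algorithm~\ref{Alg: direction}, and the iterated Meek rules) outputs its true orientation. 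Because CD-NOD is one deterministic procedure --- not a family of procedures selected per edge --- a single run simultaneously realizes these per-edge guarantees for all edges at once. Combined with the correct skeleton, the output of CD-NOD therefore equals $G$, which is the claimed identifiability.

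The one place I would take care is the joint attainability of the per-edge orientations in a single run: condition~(3) relies on Meek-style orientation propagation, which could in principle stall if the ``seed'' orientations produced by conditions (1) and (2) are insufficient. However, this concern is already internalized in the meaning of ``identifiable with CD-NOD'' in Theorem~\ref{Theorem: identifiability} --- that phrase refers to the output of the fixed algorithm after V-structures and independent-change orientations have been propagated to a fixed point under Meek's rules --- so once Theorem~\ref{Theorem: identifiability} is granted for each edge, the composition goes through without extra work, and any genuine difficulty lives in Theorem~\ref{Theorem: identifiability} (and the confluence of the orientation rules) rather than in this Corollary.
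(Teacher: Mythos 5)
Your proposal is correct and follows essentially the same route as the paper's proof: invoke Theorem~\ref{theo1} for the skeleton and Theorem~\ref{Theorem: identifiability} edge-by-edge for the orientations, then combine. Your extra remark about the joint attainability of the per-edge orientations in a single run is a reasonable point of care, but the paper treats it exactly as you do --- as already absorbed into the meaning of ``identifiable with CD-NOD'' in Theorem~\ref{Theorem: identifiability} --- so nothing further is needed.
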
   
\begin{proof}
 From Theorem \ref{theo1}, we know that the causal skeleton is identifiable. In addition, from Theorem \ref{Theorem: identifiability}, we know that for any pair of adjacent variables, if at least one of the three conditions is satisfied, then the direction is identifiable. Therefore, the whole causal structure is identifiable.
\end{proof}	
    For a pair of adjacent variables $V_i$ and $V_j$, if none of the conditions in Theorem  \ref{Theorem: identifiability} holds, we may not be able to determine the causal direction between them. Thus, in this case, we cannot derive a fully identified causal graph but an equivalence class. Note that the causal skeleton is identifiable, as shown by Theorem \ref{theo1}, regardless of these conditions. Below we give a formal definition of equivalence class with CD-NOD.
    \begin{Definition}(CD-NOD Equivalence Class)
    	Let $G = (\mathbf{V}, \mathbf{E})$ and $G' = (\mathbf{V}, \mathbf{E}')$ be two DAGs over the same set of variables $\mathbf{V}$. $G$ and $G'$ are called CD-NOD equivalent, if and only if they satisfy the following properties.
    	\begin{itemize}
    		\item [(1)] $G$ and $G' $ have the same causal skeleton.
    		\item[(2)] For any pair of adjacent variables $V_i, V_j \in \mathbf{V}$, if it satisfies at least one of the conditions given in Theorem \ref{Theorem: identifiability}, then the causal direction between $V_i$ and $V_j$ is the same in $G$ and $G'$.
    	\end{itemize} 	
    \end{Definition}

    \paragraph{Remark:} Another important factor that needs to be considered in practical problems is computational complexity. In CD-NOD phase I, the computational complexity of each kernel-based (conditional) independence test is $O(N^3)$, where $N$ is the number of samples of each variable. The computational complexity of the search procedure with PC is bounded by $\frac{(m+1)^2 m^{k-1}}{(k-1)!}$, where $m$ is the number of observed variables, and $k$ is the maximal degree of variables in $\mathbf{V} \cup C$.  In CD-NOD phase II, the computational complexity of each dependence measure, with normalized HSIC, is $O(N^3)$.
	  
	 \section{CD-NOD Phase III: Nonstationary Driving Force Estimation} \label{Sec: driving force}
	  In this section, we focus on the visualization of \emph{how} causal module $P({V_i\,|\,\mathrm{PA}^i}, C)$ changes, i.e., where the changes occur, how fast it changes, and how to visualize the changes. We assume that we already know the causal structure and know which causal modules are changing (see Algorithms \ref{rs_causal_discovery}, \ref{Alg: invariance}, and \ref{Alg: direction}). 
	  
	  In the parametric case, if we know which parameters of the causal model $\mathrm{PA}^i \rightarrow V_i$ are changing, e.g., the mean of a direct cause, the coefficients in a linear SEM, then we can estimate such parameters for different values of $C$ and see how they change. However, such knowledge is usually not available, and for the sake of flexibility, it is generally better to model causal processes nonparametrically. Therefore, it is desirable to develop a general nonparametric procedure to capture the change of causal modules.
	  
	  We aim to find a low-dimensional and interpretable mapping of $P({V_i\,|\,\mathrm{PA}^i},C)$ which captures its nonstationarity in a nonparametric way:
	  \begin{equation} \label{Eq:def_lambda}
	  \lambda_i(C) = h_i(P({V_i\,|\,\mathrm{PA}^i},C)),
	  \end{equation}
	  where $h_i$ is a nonlinear function, mapping the conditioning distribution $P({V_i\,|\,\mathrm{PA}^i},C)$ to a low dimensional representation $\lambda_i(C)$. We call $\lambda_i(C)$ the nonstationary driving force of $P({V_i\,|\,\mathrm{PA}^i},C)$. This formulation is rather general: any identifiable parameters in $P({V_i\,|\,\mathrm{PA}^i},C)$ can be expressed this way, and in the nonparametric case, $\lambda_i(C)$ can be seen as a statistic to summarize changes in $P({V_i\,|\,\mathrm{PA}^i},C)$ along with different values of $C$. If $P({V_i\,|\,\mathrm{PA}^i},C)$ does not change along with $C$, then $\lambda_i(C)$ remains constant; otherwise, $\lambda_i(C)$ is intended to capture the variability of $P({V_i\,|\,\mathrm{PA}^i},C)$ across different values of $C$.
	  
	  Now there are two problems to solve. One is that given only observed data, how we represent the conditional distributions conveniently. The other is what method to use to enable $\lambda_i(C)$ to capture the variability in the conditional distribution along with $C$.
	  
	  For the former problem, we represent changing conditional distributions by kernel embedding as that given in Proposition \ref{embedding}, which is readily achievable. For the latter one, we use kernel principle component analysis (KPCA) to capture its variability along with $C$ and accordingly propose a method called Kernel Nonstationary Visualization (KNV) of causal modules. In the following, for conciseness, we will use $X$ and $Y$ to denote $V_i$ and $\mathrm{PA}^i$, respectively.
	  
	  \paragraph{Nonstationary Driving Force Estimation as Eigenvalue Decomposition Problems}
	  We use the estimated kernel embedding of distributions, $\hat{\tilde{\mu}}_{\underline{Y},X|{C=c_n}}$ $(n = 1,\cdots,N)$, derived from Proposition \ref{embedding} as the input, and aim to find $\hat{\lambda}(C)$ as a (linear or nonlinear) transformation of $\tilde{\mu}_{\underline{Y},X|{C=c_n}}$, to capture its variability across different values of $C$. This can be achieved by exploiting KPCA techniques~\citep{Scholkopf98}, which computes principal components in kernel spaces of the input. 
	  
	  To perform KPCA, we need to know the Gram matrix of $\hat{\tilde{\mu}}_{\underline{Y},X|{C=c_n}}$, which has already been estimated in Section \ref{Sec: Kernel Embedding}. We use  $M_{\underline{Y}X}^l$ to represent the Gram matrix of $\hat{\tilde{\mu}}_{\underline{Y},X|{C}}$ with a linear kernel, and $M^\mathcal{G}_{\underline{Y}X}$ the Gram matrix of $\hat{\tilde{\mu}}_{\underline{Y},X|{C}}$ with a Gaussian kernel. 
	  Then $\hat{\lambda}(C)$ can be found by performing eigenvalue decomposition on the above Gram matrix, $M_{\underline{Y}X}^l$ or $M_{\underline{Y}X}^g$; for details please see~\cite{Scholkopf98}. In practice, one may take the first few eigenvectors which capture most of the variance. 
	  
	  We can see that with our methods, we do not need to explicitly learn the high-dimensional kernel embedding $\tilde{\mu}_{\underline{Y},X|{C=c}}$ for each $c$. With the kernel trick, the final Gram matrix can be represented by $N \times N$ kernel matrices directly. Then the nonstationary driving force $\hat{\lambda}(C)$ can be estimated by performing eigenvalue decomposition on the Gram matrix.	  
	  Algorithm \ref{KNV} summarizes the proposed KNV method. 
	  
	  \begin{algorithm}
	  	\caption{KNV of Causal Modules $P(Y|X,C)$}
	  	\begin{enumerate}
	  		\item \textbf{Input:} $N$ observations of $X$ and $Y$.
	  		
	  		\item Calculate Gram matrix $M_{\underline{Y}X}$ (see Eq.~\ref{Eq:Linear2} for linear kernels and Eq.~\ref{Eq:Gg2} for Gaussian kernels). 
	  		
	  		\item Find $\hat{\lambda}(C)$ by directly feeding Gram matrix $M_{\underline{Y}X}$ to KPCA. That is, perform eigenvalue decomposition on $M_{\underline{Y}X}$ to find the nonlinear principal components $\hat{\lambda}(C)$, as in Section 4.1 of~\cite{Scholkopf98}. In practice, we may take the first few eigenvectors which capture most of the variance. 
	  		
	  		\item \textbf{Output:} the estimation of nonstationary driving force $\hat{\lambda}(C)$ of $P(Y|X,C)$.
	  	\end{enumerate}
	  	\label{KNV}
	  \end{algorithm}

\section{Extensions of CD-NOD} \label{Sec: extension}
In this section, we show how to extend CD-NOD to deal with more general scenarios. For example, we extend CD-NOD to allow both time-varying instantaneous and lagged causal relationships. We additionally discuss whether and how distribution shifts help for causal discovery in the presence of stationary confounders.  We then give a procedure, which leverages both CD-NOD and approaches based on constrained functional causal models, to extend the generality of the proposed method.

    \subsection{With Time-Varying Instantaneous and Lagged Causal Relationships} \label{Sec: Lagged}
     In many scenarios, e.g., dynamic systems with insufficient time resolution, there may exist both instantaneous and time-lagged causal relationships over the measured data \citep{Aapo10_VAR,Subsampling_ICML15}. In this section, we extend CD-NOD proposed in Section \ref{Sec:method} to recover both time-varying instantaneous and lagged causal relationships and identify changing causal modules.
     
     Suppose that there are $m$ observed processes $\mathbf{V}(t) = (V_1(t),\cdots,V_m(t))^{\text{T}}$ with $t = 1,\cdots,T$, where there exist both time-varying instantaneous and lagged causal relations over these $m$ processes. Further denote the largest time lag by $P$. To efficiently recover both instantaneous and time-lagged causal relations, we reorganize the set of variables $\mathbf{V}$ into $\tilde{\mathbf{V}} = \tilde{\mathbf{V}}^{(1)}  \cup \cdots \cup \tilde{\mathbf{V}}^{(P+1)}$, with 
     \begin{equation*}
     \begin{array}{lllll}
     \tilde{\mathbf{V}}^{(1)} & = \big \{  V_1^{(1)}, & V_2^{(1)}, & \cdots , & V_m^{(1)} \big \},\\
     \tilde{\mathbf{V}}^{(2)} & = \big \{  V_1^{(2)}, & V_2^{(2)}, & \cdots , & V_m^{(2)} \big \},\\
     \cdots & \cdots\\
     \tilde{\mathbf{V}}^{(P+1)} & = \big \{  V_1^{(P+1)}, & V_2^{(P+1)}, & \cdots , & V_m^{(P+1)} \big \},\\
     \end{array}
     \end{equation*}
     where $V_i^{(k)} = \big ( V_i(k), V_i(k+1),\cdots, V_i(T-P+k-1) \big)$, indicating the $i$th process from the $k$th time point to the $(T-P+k-1)$th time point, for $i = 1,\cdots,m$ and $k = 1,\cdots,P+1$.
     After reorganization, in total there are $m*(P+1)$ variables. We constrain that the future cannot cause the past; i.e., variables in $ \tilde{\mathbf{V}}^{(i)}$ cannot cause variables in $ \tilde{\mathbf{V}}^{(j)}$ when $i>j$.
     
     Figure \ref{Fig: ins_lagged} gives an illustration of a two-variable case with time lag $P=1$. Figure \ref{Fig: ins_lagged}(a) gives the repetitive causal graph over two processes $\mathbf{V}(t) = (V_1(t),V_2(t))^{\text{T}}$ with $t = 1,\cdots,T$. Figure \ref{Fig: ins_lagged}(b) gives the unit causal graph over the reorganized set of variables $\tilde{\mathbf{V}} = \tilde{\mathbf{V}}^{(1)} \cup \tilde{\mathbf{V}}^{(2)}$. In this case, we aim to recover the (time-varying) instantaneous causal relations between $V_1^{(2)}$ and $V_2^{(2)}$, and the (time-varying) lagged causal relations from $V_1^{(1)}$ to $V_2^{(2)}$, $V_1^{(1)}$ to $V_1^{(2)}$, and $V_2^{(1)}$ to $V_2^{(2)}$. Note that in this example, when inferring the instantaneous causal relation between $V_1^{(2)}$ and $V_2^{(2)}$, we should consider the influence of the lagged common cause $V_1^{(1)}$.
     
     We also add the surrogate variable $C$ into the causal system to characterize distribution shifts. Algorithm \ref{rs_causal_discovery_lagged} extends Algorithm \ref{rs_causal_discovery} to recover both instantaneous and lagged causal skeletons and detect changing causal modules. 
     The main procedure is as follows. We first construct a complete undirected graph over the reconstructed variable set $\tilde{\mathbf{V}}$ and $C$. Then in step 2, we detect changing modules for variables in $\tilde{\mathbf{V}}^{(P+1)}$ by testing the independence between $V_i^{(P+1)}$ and $C$ given a subset of $\tilde{\mathbf{V}}^{(P+1)} \backslash V_i^{(P+1)}$, for $i = 1,\cdots,m$. If they are independent, we remove the edge between $V_i^{(k)}$ and $C$, for $k = 1,\cdots,P+1$. 
     In step 3, we recover lagged causal relations between variables in $\tilde{\mathbf{V}}^{(P+1)}$ and those in $\tilde{\mathbf{V}}^{(P-p+1)}$, for $p = 1,\cdots,P$. For example, if $V_i^{(P+1)}$ and $V_j^{(P-p+1)}$ are independent given a subset of $\tilde{\mathbf{V}} \backslash \{V_i^{(P+1)}, V_j^{(P-p+1)}\} \cup C$, then remove the edge between $V_i^{(P+1-k)}$ and $V_j^{(P-p+1-k)}$ for $k = 0,\cdots, P-p$.
     In step 4, we estimate the instantaneous causal skeleton between $V_i^{(P+1)}$ and $V_j^{(P+1)}$ ($i \neq j$). If $V_i^{(P+1)}$ and $V_j^{(P+1)}$ are independent given a subset of $\tilde{\mathbf{V}} \backslash \{V_i^{(P+1)}, V_j^{(P+1)}\} \cup C  \cup \mathbf{Z}_{ij}$, where $\mathbf{Z}_{ij} \subseteq \{\tilde{\mathbf{V}}^{(k)}\}_{k=1}^{P}$ are the lagged common causes of $V_i^{(P+1)}$ and $V_j^{(P+1)}$, then we remove the edge between $V_i^{(k)}$ and $V_j^{(k)}$ for $k = 1,\cdots,P+1$. Note that it is important to consider lagged common causes $\mathbf{Z}_{ij}$ in this step, e.g., $V_1^{(1)}$ is the lagged common cause of $V_1^{(2)}$ and $V_2^{(2)}$ in Figure \ref{Fig: ins_lagged} (b).

     For the recovery of causal directions, lagged causal relations obey the rule that past causes future, which is obvious, while instantaneous causal directions can be inferred in the same way as that described in Algorithm \ref{Alg: invariance} and \ref{Alg: direction}.
     
     	\begin{figure} 
     	\begin{center}
     		\begin{tikzpicture}[scale=.5, line width=0.5pt, inner sep=0mm, shorten >=.1pt, shorten <=.1pt,minimum size=4mm]
     		\draw (.2, 3) node(1) [] {{\footnotesize\,$V_1(1)$\,}};
     		\draw (3.2, 3) node(2) [] {{\footnotesize\,$V_1(2)$\,}};
     		\draw (6.2, 3) node(3) [] {{\footnotesize\,$\cdots$\,}};
     		\draw (9.2, 3) node(4) [] {{\footnotesize\,$V_1(T-1)$\,}};
     		\draw (12.8, 3) node(5) [] {{\footnotesize\,$V_1(T)$\,}};
     		\draw[-latex] (1) -- (2); 
     		\draw[-latex] (2) -- (3); 
     		\draw[-latex] (3) -- (4); 
            \draw[-latex] (4) -- (5); 

            \draw (.2, 0) node(6) [] {{\footnotesize\,$V_2(1)$\,}};
            \draw (3.2, 0) node(7) [] {{\footnotesize\,$V_2(2)$\,}};
            \draw (6.2, 0) node(8) [] {{\footnotesize\,$\cdots$\,}};
            \draw (9.2, 0) node(9) [] {{\footnotesize\,$V_2(T-1)$\,}};
            \draw (12.8, 0) node(10) [] {{\footnotesize\,$V_2(T)$\,}};
            \draw[-latex] (6) -- (7); 
            \draw[-latex] (7) -- (8); 
            \draw[-latex] (8) -- (9); 
            \draw[-latex] (9) -- (10); 
            
            \draw[-latex] (1) -- (6); 
            \draw[-latex] (2) -- (7); 
            \draw[-latex] (9) -- (10); 
            \draw[-latex] (1) -- (7); 
            \draw[-latex] (4) -- (9); 
            \draw[-latex] (5) -- (10); 
            \draw[-latex] (4) -- (10); 
  		\end{tikzpicture}~~~~~~~~~~~~~~
  		\begin{tikzpicture}[scale=.5, line width=0.5pt, inner sep=0mm, shorten >=.1pt, shorten <=.1pt,minimum size=4mm]
  		\draw (.2, 3) node(1) [] {{\footnotesize\,$V_1^{(1)}$\,}};
  		\draw (3.2, 3) node(2) [] {{\footnotesize\,$V_1^{(2)}$\,}};
  		\draw[-latex] (1) -- (2); 
  		
  		\draw (.2, 0) node(3) [] {{\footnotesize\,$V_2^{(1)}$\,}};
  		\draw (3.2, 0) node(4) [] {{\footnotesize\,$V_2^{(2)}$\,}};
  		\draw[-latex] (3) -- (4); 
  		
  		\draw[-latex] (1) -- (3); 
  		\draw[-latex] (2) -- (4); 
  	    \draw[-latex] (1) -- (4); 
  		\end{tikzpicture}
  		\\~~~~~~~~~~~~~~(a) Repetitive causal graph. ~~~~~~~~~~~~~~~~~~~~~(b) Unit causal graph.
     	\end{center} 
    \caption{A two-variable case with both instantaneous and one-lagged ($P=1$) causal relationships. (a) Repetitive causal graph. (b) Unit causal graph.}
    \label{Fig: ins_lagged}
     \end{figure}
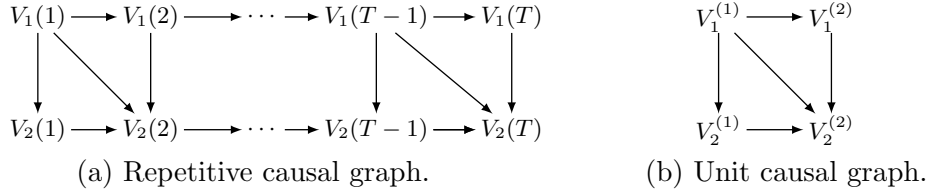

      \begin{algorithm}[htp!]
     \caption{Detection of Changing  Modules and Recovery of Causal Skeleton with both (Time-Varying) Instantaneous and Lagged Causal Relationships}
     \begin{enumerate}
     \item Build a complete undirected graph $U_{\mathcal{G}}$ over the variable set $ \tilde{\mathbf{V}} \cup C$.
     \item (\textit{Detection of changing  modules}) For each $i$ $ (i = 1,\cdots,m)$, test for the marginal and conditional independence between $V_i^{(P+1)}$ and $C$. If they are independent given a subset of $\tilde{\mathbf{V}}^{(P+1)} \backslash V_i^{(P+1)}$, remove the edge between $V_i^{(P+1)}$ and $C$ in $ U_{\mathcal{G}}$; at the same time, remove the edge between $V_i^{(k)}$ $(k = 1,\cdots,P)$ and $C$.
     \item (\textit{Recovery of lagged causal skeleton}) For the $p$th $(p = 1,\cdots,P)$ lagged causal relationships, test for the marginal and conditional independence between the variables in $\mathbf{\tilde{V}}^{(P+1)}$ and those in $\mathbf{\tilde{V}}^{(P-p+1)}$. Particularly, for $V_i^{(P+1)}$ and $V_j^{(P-p+1)}$, if they are independent given a subset of $\tilde{\mathbf{V}} \backslash \{V_i^{(P+1)},V_j^{(P-p+1)}\} \cup C$, remove the edge between $V_i^{(P+1)}$ and $V_j^{(P-p+1)}$; at the same time, remove the edge between $V_i^{(P-k+1)}$ and $V_j^{(P-p+1-k)}$, for $k =0,\cdots, P-p$.
     \item (\textit{Recovery of instantaneous causal skeleton}) For every $i\neq j$, test for the marginal and conditional independence between $V_{i}^{(P+1)}$ and $V_{j}^{(P+1)}$. Let $\mathbf{Z}_{ij} \subseteq \{\tilde{\mathbf{V}}^{(k)}\}_{k=1}^{P}$ be the lagged common causes of $V_i^{(P+1)}$ and $V_j^{(P+1)}$, which can be derived from the output of Step 3 above.
     If $V_i^{(P+1)}$ and $V_j^{(P+1)}$ are independent given a subset of $ \tilde{\mathbf{V}}^{(P+1)} \backslash \{V_i^{(P+1)},V_j^{(P+1)}\} \cup  C \cup \mathbf{Z}_{ij}$, remove the edge between $V_{i}^{(P+1)}$ and $V_{j}^{(P+1)}$ in $ U_{\mathcal{G}}$; at the same time, remove the edge between $V_{i}^{(k)}$ and $V_{j}^{(k)}$, for $k = 1,\cdots,P$.
     \end{enumerate}
     \label{rs_causal_discovery_lagged}
     \end{algorithm}

    \subsection{With Stationary Confounders} \label{Sec: confounder}
    In this section, we discuss the case when there exist stationary confounders; that is, the distribution of the confounder is fixed. We find that distribution shifts over observed variables may still help estimate causal directions in such a case. 
    
    To show how distribution shifts help, we analyze two-variable cases in Figure \ref{fig:illust_confounder}, where $V_1$ and $V_2$ are observed variables, and $Z$ is a hidden variable which influences both $V_1$ and $V_2$. In  Figure \ref{fig:illust_confounder}(a) \& (b), the causal module of $V_1$ changes, with changing parameter $\theta_1(C)$, while causal modules of $V_2$ and $Z$ are fixed. In such a case, we have $V_1 \dependent V_2$ and $V_1 \dependent V_2 | C$ in both graphs. Furthermore, in (a) we have $C \dependent V_2$ and $C  \dependent V_2 | V_1$, while in (b) we have $C  \independent V_2$ and $C  \dependent V_2 | V_1$. We can see that distribution shift helps to distinguish between the two graphs when only one causal module changes.

    In Figure \ref{fig:illust_confounder}(c) \& (d), both the causal modules of $V_1$ and $V_2$ change, and they change independently, while the distribution of $Z$ is fixed. In these two graphs, we have $V_1 \dependent V_2$ and $V_1 \dependent V_2 | C$, and we have $P(V_1) \dependent P(V_2|V_1)$ and $P(V_2) \dependent P(V_1|V_2)$, for the reason given below. Hence, graphs in (c) and (d) have the same independence over $\mathbf{V} \cup C$ and between distribution modules, and thus we cannot distinguish between (c) and (d) without further information. 
    
    The reason that $P(V_1) \dependent P(V_2|V_1)$ in (c) is as follows. In (c), $P(V_2|V_1) $ can be represented as: $P(V_2|V_1) = \int P(V_2|V_1,Z) P(Z|V_1) \,dZ$, where $P(V_2|V_1,Z)$ contains the information of $\theta_2(C)$, and $P(Z|V_1)$ contains the information of $\theta_1(C)$, so $P(V_2|V_1)$ also contains the information of $\theta_1(C)$. Since both $P(V_2|V_1)$ and $P(V_1)$ contain the information of $\theta_1(C)$, generally speaking, $P(V_2|V_1)$ and $P(V_1)$ are not independent. Therefore, although, according to the causal module independence, $P(V_1|Z) \independent P(V_2|V_1,Z)$,  $P(V_1)$ and $P(V_2|V_1)$ are dependent. Similarly, we can derive that $P(V_2) \dependent P(V_1|V_2)$ in (d).

    	\begin{figure} 
    	\begin{center}
    		\begin{tikzpicture}[scale=.5, line width=0.5pt, inner sep=0.2mm, shorten >=.1pt, shorten <=.1pt]
    		\draw (.2, 0) node(1) [circle, draw] {{\footnotesize\,$V_1$\,}};
    		\draw (3.2, 0) node(2) [circle, draw] {{\footnotesize\,$V_2$\,}};
    		\draw (0, 1.9) node(3)  {{\footnotesize\,$\theta_1(C)$\,}};
    		\draw (1.7, 2.2) node(5) [circle, dashed,draw] {{\footnotesize\,$Z$\,}};
    		\draw[-latex] (1) -- (2); 
    		\draw[-latex] (3) -- (1); 
    		\draw[-latex] (5) -- (1);
    		\draw[-latex] (5) -- (2); 
    		\end{tikzpicture}~~~~~~~~~~~~~~~~~~~~
    		\begin{tikzpicture}[scale=.5, line width=0.5pt, inner sep=0.2mm, shorten >=.1pt, shorten <=.1pt]
    		\draw (.2, 0) node(1) [circle, draw] {{\footnotesize\,$V_1$\,}};
    		\draw (3.2, 0) node(2) [circle, draw] {{\footnotesize\,$V_2$\,}};
    		\draw (0, 1.9) node(3)  {{\footnotesize\,$\theta_1(C)$\,}};
    		\draw (1.7, 2.2) node(5) [circle, dashed,draw] {{\footnotesize\,$Z$\,}};
    		\draw[-latex] (2) -- (1); 
    		\draw[-latex] (3) -- (1); 
    		\draw[-latex] (5) -- (1); 
    		\draw[-latex] (5) -- (2); 
    		\end{tikzpicture}
    		\\~~~~~~~~~~~~~~~(a)~~~~~~~~~~~~~~~~~~~~~~~~~~~~~~~~~~~~(b)~~~~~~~~~~~~~\\  	
    		~\\	
    		~~~\begin{tikzpicture}[scale=.5, line width=0.5pt, inner sep=0.2mm, shorten >=.1pt, shorten <=.1pt]
    		\draw (.2, 0) node(1) [circle, draw] {{\footnotesize\,$V_1$\,}};
    		\draw (3.2, 0) node(2) [circle, draw] {{\footnotesize\,$V_2$\,}};
    		\draw (0, 1.9) node(3)  {{\footnotesize\,$\theta_1(C)$\,}};
    		\draw (3.4, 1.9) node(4)  {{\footnotesize\,$\theta_2(C)$\,}};
    		\draw (1.7, 2.2) node(5) [circle, dashed,draw] {{\footnotesize\,$Z$\,}};
    		\draw[-latex] (1) -- (2); 
    		\draw[-latex] (3) -- (1); 
    		\draw[-latex] (5) -- (1);
    		\draw[-latex] (5) -- (2);
    		\draw[-latex] (4) -- (2);  
    		\end{tikzpicture}~~~~~~~~~~~~~~~~~~
    		\begin{tikzpicture}[scale=.5, line width=0.5pt, inner sep=0.2mm, shorten >=.1pt, shorten <=.1pt]
    		\draw (.2, 0) node(1) [circle, draw] {{\footnotesize\,$V_1$\,}};
    		\draw (3.2, 0) node(2) [circle, draw] {{\footnotesize\,$V_2$\,}};
    		\draw (0, 1.9) node(3)  {{\footnotesize\,$\theta_1(C)$\,}};
    		\draw (3.4, 1.9) node(4)  {{\footnotesize\,$\theta_2(C)$\,}};
    		\draw (1.7, 2.2) node(5) [circle, dashed,draw] {{\footnotesize\,$Z$\,}};
    		\draw[-latex] (2) -- (1); 
    		\draw[-latex] (3) -- (1); 
    		\draw[-latex] (5) -- (1); 
    		\draw[-latex] (5) -- (2); 
    		\draw[-latex] (4) -- (2); 
    		\end{tikzpicture}
    		\\~~~~~~~~~~~~~~~(c)~~~~~~~~~~~~~~~~~~~~~~~~~~~~~~~~~~~~(d)~~~~~~~~~~~~~
    	\end{center} 
    	\caption{Two-variable cases with stationary confounders. (a) $V_1 \rightarrow V_2$ with the changing causal module of $V_1$. (b) $V_1 \leftarrow V_2$ with the changing causal module of $V_1$. (c) $V_1 \rightarrow V_2$ with the changing causal modules of both $V_1$ and $V_2$. (d) $V_1 \leftarrow V_2$ with the changing causal modules of both $V_1$ and $V_2$.} \label{fig:illust_confounder}
    \end{figure}
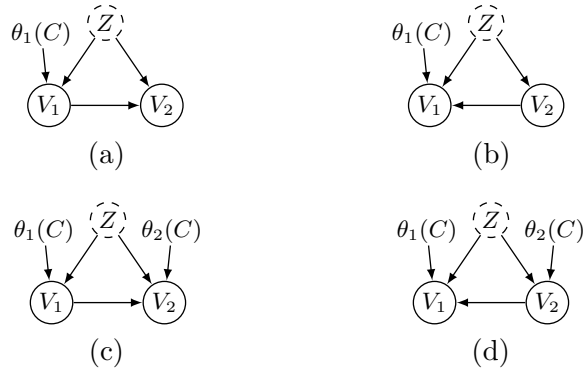

    \subsection{Combination with Approaches Based on Constrained Functional Causal Models} \label{Sec: FCM}
    For a pair of adjacent variables $V_i$ and $V_j$, if none of the identifiability conditions given in Theorem \ref{Theorem: identifiability} are satisfied, the causal direction between $V_i$ and $V_j$ is undetermined with CD-NOD. In such a case, to infer the causal direction between $V_i$ and $V_j$, one possible way is to further leverage the approaches based on constrained functional causal models, e.g., the linear non-Gaussian model \citep{Shimizu06}, the nonlinear additive noise model \citep{Hoyer09,Zhang09_additive}, or the post-nonlinear model \citep{Zhang06_iconip, Zhang_UAI09}. To determine the causal direction with functional causal model-based approaches, we have two scenarios to consider, depending on whether $P(V_i, V_j)$ changes or not.
    \begin{itemize}
    	\item [1.] The joint distribution $P(V_i, V_j)$ is fixed. In this scenario, we can fit a constrained functional causal model, e.g., the nonlinear additive noise model, as usual. Let $\mathbf{Z} \subseteq \mathbf{V} \backslash \{V_i,V_j\}$ be a minimal deconfounding set of $V_i$ and $V_j$. We 
    	  \begin{itemize}
    	  	\item first assume $V_i \rightarrow V_j$ and fit an additive noise model
    	  	   \begin{equation*}
    	  	     V_j = f_j(V_i, \mathbf{Z}) + E_j,
    	  	   \end{equation*}
    	  	   and test the independence between estimated residual $\hat{E}_j$ and hypothetical causes $(V_i, \mathbf{Z})$,
    	  	 \item and then assume $V_j \rightarrow V_i$ and fit an additive noise model in this direction
    	  	   \begin{equation*}
    	  	   V_i = f_i(V_j, \mathbf{Z}) + E_i,
    	  	  \end{equation*}
    	  	 and test the independence between estimated residual $\hat{E}_i$ and hypothetical causes $(V_j, \mathbf{Z})$.
    	  \end{itemize}
          We choose the direction in which the independence between estimated residual and hypothetical causes holds.  
          
    	\item [2.] The joint distribution $P(V_i, V_j)$ changes. The fact that for $V_i$ and $V_j$, none of the conditions in Theorem \ref{Theorem: identifiability} is satisfied implies that neither $V_i$ nor $V_j$ is adjacent to $C$. It further infers that there are some variables in $\mathbf{V}$ with changing distributions and influencing $V_i$ or $V_j$. Let $\mathbf{Z}_i \subseteq \mathbf{V} \backslash \{V_i, V_j\} $ represent a set of variables which directly influence $V_i$, and satisfies that $\forall Z_i \in \mathbf{Z}_i$, its distribution $P(Z_i)$ changes. Similarly, we denote $\mathbf{Z}_j$ as parents of $V_j$ that have changing distributions. Let $\mathbf{Z} \subseteq \mathbf{V} \backslash \{V_i,V_j\}$ be a minimal deconfounding set of $V_i$ and $V_j$, and let $\tilde{\mathbf{Z}} \subseteq \mathbf{Z}$, with $P(\tilde{\mathbf{Z}})$ fixed and $\forall Z \in \mathbf{Z} \backslash \tilde{\mathbf{Z}}$, $P(Z)$ changes. Note that it is easy to infer that $\mathbf{Z}_i \subseteq \mathbf{Z}$ and $\mathbf{Z}_j \subseteq \mathbf{Z}$. We
    	  \begin{itemize}
    	  	\item first fit an additive noise model by assuming $V_i \rightarrow V_j$
    	  	    \begin{equation*}
    	  	   V_j = f_j(V_i, \tilde{\mathbf{Z}}, \mathbf{Z}_j) + E_j,
    	  	   \end{equation*}
    	  	   and test the independence between estimated residual $\hat{E}_j$ and hypothetical causes $(V_i, \mathbf{Z}_j, \tilde{\mathbf{Z}})$,
    	  	\item and then assume $V_j \rightarrow V_i$ and fit an additive noise model accordingly
    	  	 \begin{equation*}
    	  	 V_i = f_i(V_j, \tilde{\mathbf{Z}},  \mathbf{Z}_i) + E_i,
    	  	 \end{equation*}
    	  	 and test the independence between estimated residual $\hat{E}_i$ and hypothetical causes $(V_j, \mathbf{Z}_i, \tilde{\mathbf{Z}})$.
      	 \end{itemize}
       We choose the direction in which the independence between estimated residual and hypothetical causes holds. Note that in case 2, we cannot ignore the changing factors $\mathbf{Z}_i$ and $\mathbf{Z}_j$ in their corresponding functional causal models, because functional causal model-based approaches assume a fixed causal model. If we do not consider $\mathbf{Z}_i$ or $\mathbf{Z}_j$, then the corresponding functions $f_i$ and $f_j$, respectively, may change across data sets or over time.

    \end{itemize}
 If there are unoriented edges to $V_i$ or $V_j$, the potential minimal deconfounding set $\tilde{\mathbf{Z}}$, and potential changing influences $\mathbf{Z}_i$ and $\mathbf{Z}_j$ are chosen in a heuristic way, similar to that in Algorithm \ref{Alg: direction}.

    \section{Relations between Heterogeneity/Nonstationarity and Soft Intervention in Causal Discovery} \label{Sec: Soft_Inter}
    
    In this section, we show that heterogeneity/nonstationarity and soft intervention are tightly related in causal discovery, and the former may be more effective for causal discovery than the latter. The definition of soft intervention is as follows.
    \begin{Definition}[Soft Intervention \citep{Frederick_softInter}]
    	Given a set of measured variables $\mathbf{V}$, a soft intervention $I_p$ on a variable $S \in \mathbf{V}$ is an intervention on $S$ that satisfies the following constraints:
    	when $I_p = 1$, $I_p$ does not make $S$ independent of their causes in $\mathbf{V}$; i.e., it does not break any edges that are incident to S. The causal module $P(S|\mathrm{PA}^S)$ is replaced by $P^*(S|\mathrm{PA}^S, I_p = 1)$, where 
    	\begin{equation}
    	  P^*(S|\mathrm{PA}^S, I_p = 1) \neq P(S|\mathrm{PA}^S, I_p = 0). 
    	  \label{Eq: SI}
    	\end{equation}
    	 Otherwise, all terms remain unchanged.
    	 
    	 Additionally, $I_p$ satisfies the following properties:
    	 \begin{itemize}[itemsep=0.2pt,topsep=0.2pt]
    	 	\item $I_p$ is a variable with two states, $I_p = 1$ or $I_p = 0$, whose state is known.
    	 	\item When $I_p = 0$, the passive observational distribution over $S$ obtains.
    	 	\item $I_p$ is a direct cause of $S$ and only $S$.
    	 	\item $I_p$ is exogenous, that is, uncaused.
    	 \end{itemize} 
    \end{Definition}
    Note that the soft intervention does not imply any structural changes over the variables in $\mathbf{V}$; instead, it influences the manipulated probability distribution.

  Heterogeneity/nonstationarity can be seen as a consequence of soft interventions done by nature. Recall that the heterogeneous/nonstationary causal modules are defined as 
   \begin{equation*}\label{non/heter}
    V_i = f_i\big(\mathrm{PA}^i, \mathbf{g}^i(C), \theta_i(C), \epsilon_i\big),
   \end{equation*}
   where $\mathbf{g}^i(C)\subseteq \{g_l(C)\}_{l=1}^L$ denotes the set of confounders that influence $V_i$ (it is an empty set if there is no confounder behind $V_i$ and any other variable), $\theta_i(C)$ denotes the effective parameters in the model that are also assumed to be functions of $C$, and $\epsilon_i$ is a disturbance term that is independent of $C$. The distribution of $P(V_i | \mathrm{PA}^i,C)$ changes when $C$ takes different values, i.e.,
   \begin{equation}
    P(V_i | \mathrm{PA}^i, C=c) \neq P(V_i | \mathrm{PA}^i, C=c').
   \end{equation}
   This means that across the values of $C$, $c$ and $c'$, there exists a soft intervention on $V_i$.
  
   Our developed framework in causal discovery from heterogeneous/nonstationary data is more applicable than that from \citet{Frederick_softInter} in the following aspects:
   \begin{enumerate}[itemsep=0.3pt,topsep=0.3pt]
   	\item In \citet{Frederick_softInter}, $I_p$ is known and only contains two states, while in our case we can detect where it happens in an automated way, and the changes can be discrete (across domains) or continuous (over time).
   	\item In \citet{Frederick_softInter}, $I_p$ is assumed to be a direct cause to a single variable, while in our case there is no such a restriction. For example, we allow pseudo confounders $g_l(C)$ which affects several variables.
   	\item Our framework also allows edges to vanish or appear in some domains or over some time periods; that is, the structure over $\mathbf{V}$ may change, for which hard intervention is a special case, while in \citet{Frederick_softInter}, the causal structure does not change. 
   \end{enumerate}

\section{Experimental Results} \label{Sec:experiments}
   
   We applied the proposed CD-NOD (both phase I and phase II) to both synthetic and real-world data sets to learn causal graphs and identify changing causal modules, and then we learned a low-dimensional representation of changing causal modules (phase III).
   
   \subsection{Synthetic Data}
   
   \subsubsection{Setting 1}
    \label{Sec:simul}
    To show the generality of the proposed methods in causal discovery when data distributions change, we considered two types of data: multi-domain heterogeneous data and nonstationary time series. Moreover, we considered changes in both causal strength and noise variances. 
    
    For variable $V_{i}$ whose causal module changes, the $t$th data point was generated according to the following functional causal model:
    \begin{equation*}
    V_{i,t} = \sum_{V_j \in \mathrm{PA}^i} b_{ij,t} f_i(V_{j,t}) + \sigma_{i,t} \epsilon_{i,t},
    \end{equation*}
    where $\mathrm{PA}^i$ denotes the set of $V_i$'s direct causes, $V_j \in \mathrm{PA}^i$ is the $j$th direct cause of $V_i$, $b_{ij,t}$ is the varying causal strength from variable $V_j$ to $V_i$, and $\sigma_{i,t}$ is the changing parameter applied to the noise term $ \epsilon_{i,t}$. The function $f_i$ was randomly chosen from linear, cubic, tanh, sinc functions, or random mixtures of those functions. The noise term $\epsilon_{i,t}$ was randomly chosen from a uniform distribution $\mathcal{U}(-0.5,0.5)$ or a standard normal distribution $\mathcal{N}(0,1)$. 
    
    The varying causal strength $b_{ij,t}$ and varying noise's parameter $\sigma_{i,t}$ were generated in different ways for heterogeneous and nonstationary data.
    \begin{itemize}
      \item  For heterogeneous data whose data distributions change across domains, in each domain, $b_{ij,t}$ and $\sigma^2_{i,t}$ were randomly generated from uniform distributions $\mathcal{U}(0.5,2.5)$ and $\mathcal{U}(1,3)$, respectively. Note that in the same domain both $b_{ij,t}$ and $\sigma^2_{i,t}$ remain the same. 
      \item  For nonstationary data whose data distributions smoothly change over time, we generated $b_{ij,t}$ and $\sigma_{i,t}$ by sampling from a Gaussian process (GP) prior $b_{ij}(\mathbf{t}) \sim \mathcal{GP}(\vec{0},K(\mathbf{t},\mathbf{t}))$ and $\sigma_i(\mathbf{t}) \sim \mathcal{GP}(\vec{0},K(\mathbf{t},\mathbf{t}))$, respectively. We used squared exponential kernel $k(t,t') = \exp (-\frac{1}{2} \frac{|t-t'|}{\lambda^2})$, where $\lambda$ is the kernel width.
    \end{itemize}

    For variable $V_i$ whose causal module is fixed, it was generated according to a fixed functional causal model:
    \begin{equation*}
    V_{i,t} = \sum_{V_j \in \mathrm{PA}^i} b_{ij} f_i(V_{j,t}) + \epsilon_{i,t},
    \end{equation*}
    where both the causal strength $b_{ij}$ and variance of the noise term $\epsilon_{i}$ are fixed over time.

    We randomly generated acyclic causal structures according to the Erdos-Renyi model \citep{Generate_graph} with the probability of each edge 0.3. Each generated graph has 6 variables. In each graph, we randomly picked variables with changing causal modules, and the number was randomly chosen from $\{4,5,6\}$. We also considered different sample sizes. Particularly, for heterogeneous data, the sample size of each domain was chosen between 50 and 100, and the total sample size was $N = $600, 900, 1200, and 1500. For nonstationary data, we fixed the kernel with of the Gaussian prior and generated the data with sample size $N = 600, 900, 1200, \text{ and } 1500$.  In each scenario (each data type and each sample size), we generated 100 random realizations.
    
    We applied the proposed CD-NOD to identify the underlying causal structure, both causal skeleton (phase I) and causal directions (phase II), and detect changing causal modules (phase I).
    
    \paragraph{Phase I: causal skeleton identification and changing causal module detection.}
    We applied CD-NOD phase I (Algorithm 1) to identify the causal skeleton and detect changing causal modules. Specifically, for heterogeneous data, we used the domain index as the surrogate variable $C$ to capture the distribution change, while for nonstationary data, we used the time index. We used PC \citep{SGS93} as the search procedure and the kernel-based conditional independence (KCI) test to test conditional independence relationships. 
    
    We compared our approach with the original constraint-based method, which does not take into account distribution changes. For the original constraint-based method, we applied the PC search over the observed variables, combined with the KCI test. We also compared with approaches based on the minimal change principle, the identical boundaries (IB) method and the minimal changes (MC) method \citep{Ghassami18_NIPS}. Both IB and MC methods are designed for multi-domain causal discovery in linear systems. 
        
    For the KCI test, the significance level was $0.05$. We used Gaussian kernels, and the hyperparameters, e.g. the kernel width, were set with empirical values; please refer to \cite{Zhang11_KCI} for details. Since both IB and MC methods need data from multiple domains, 
    for heterogeneous data, we segmented the data according to the domain index before applying IB and MC methods, and for nonstationary data, we segmented the data into non-overlapping domains, with sample size 100 in each domain.
        
    Figure \ref{Fig: Simu_skeleton} gives the accuracy of the recovered causal skeleton from both heterogeneous (upper row) and nonstationary (lower row) data. The x-axis indicates the sample size, and the y-axis shows the accuracy of the recovered causal skeleton. We considered three accuracy measures: F$_1$ score, precision, and recall, where precision indicates the rate of spurious edges, recall the rate of missing edges, and F$_1 = 2 \cdot \frac{\text{precision} \cdot \text{recall}}{\text{precision} + \text{recall}}$. We can see that the proposed CD-NOD (phase I) gives the best F$_1$ score and precision on both heterogeneous and nonstationary data for all sample sizes, and it gives similar recall with others. Moreover, there is a slight increase in F$_1$ score and recall along with the sample size. The original constraint-based method gives a much lower precision, and hence a lower F$_1$ score as well, which means that there are many spurious edges. The reason is that it does not consider distribution shifts, as we illustrated in Figure \ref{fig:illust_C}. Both IB and MC methods give a much lower precision, probably because they are designed for linear systems, and thus the indirect nonlinear influences cannot be totally captured by variables along the pathway with linear functions.
    
   \begin{figure} 
    	\centering
    	\includegraphics[width=1\textwidth, trim={1cm 0cm 1cm 0cm},clip]{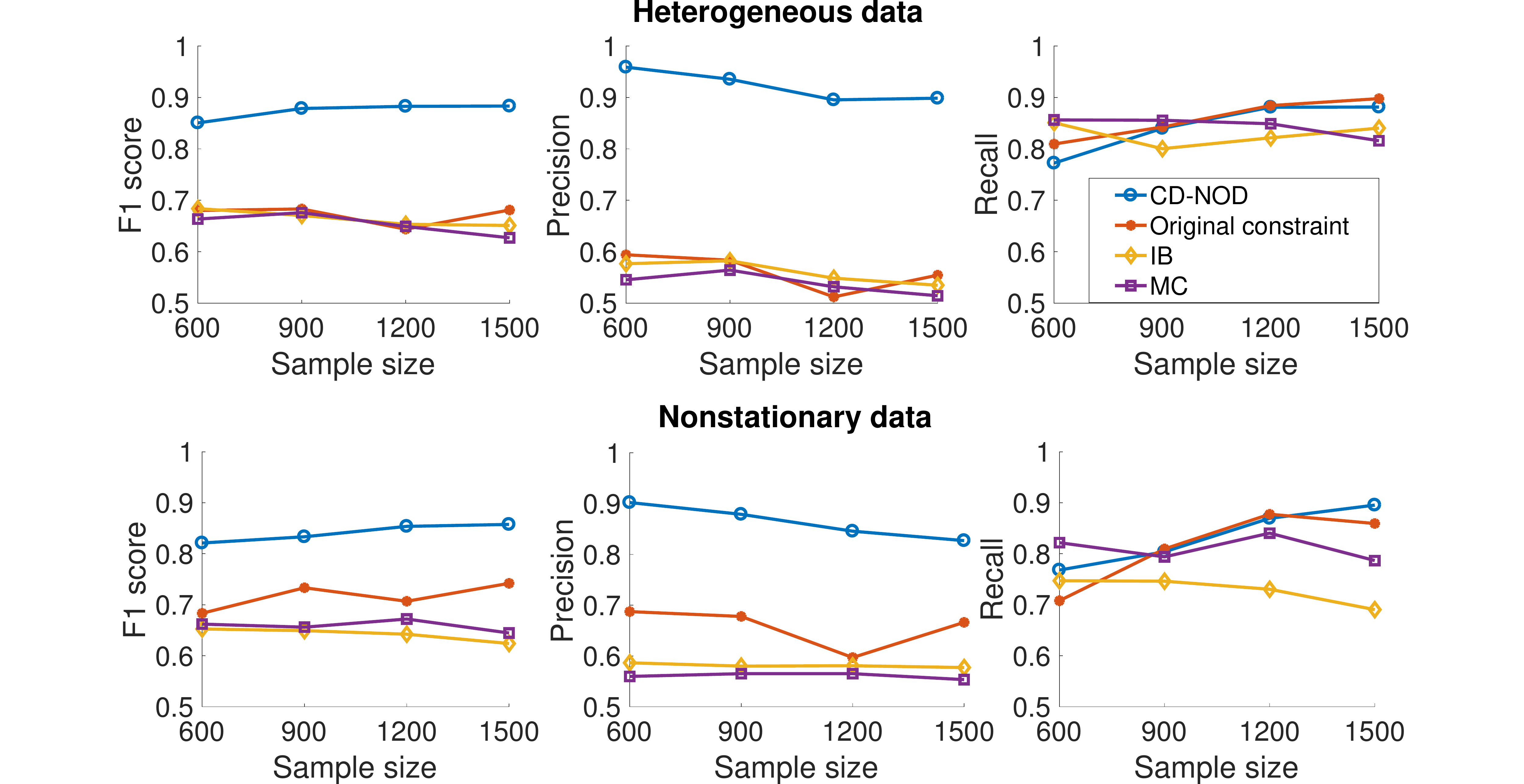}
    	\caption{Accuracy of the recovered causal skeleton from heterogeneous (upper row) and nonstationary (lower row) data. We compared three accuracy measures: F$_1$ score (left column), precision (middle column), and recall (right column). We compared our proposed CD-NOD (phase I) with original constraint-based method, IB method, and MC method.}
    	\label{Fig: Simu_skeleton}
    \end{figure}

   Table \ref{Table: Simu_table} shows the accuracy of the detected changing causal modules, measured by F$_1$ score, precision, and recall. We can see that the proposed CD-NOD performs well, and that the F$_1$ score tends to increase along with sample size in both scenarios.

    \begin{table}[htp!]
    	\centering
    	\subfloat[][Heterogeneous data]{
    		\begin{tabular}{p{1.5cm} p{1.5cm} p{1.5cm} p{1.5cm}}
    			N & F$_1$ \quad& Precision & Recall  \\ \hline
    			600  & 0.93  & 0.98  & 0.89\\ \hline
    			900 & 0.95 & 0.99   & 0.93\\ \hline
    			1200 & 0.96  & 1.00   & 0.93\\ \hline
    			1500 & 0.96  & 0.99   & 0.93\\ \hline
    		\end{tabular}
    	}
    	\hspace{.5cm}\subfloat[][Nonstationary data]{
    		\begin{tabular}{p{1.5cm} p{1.5cm} p{1.5cm} p{1.5cm}}
    			N & F$_1$ \quad& Precision & Recall \\ \hline
    			600  & 0.94  & 0.99 &  0.90 \\ \hline
    			900 & 0.96  & 0.99  & 0.94 \\ \hline
    			1200 & 0.97 & 0.99  & 0.96\\ \hline
    			1500 & 0.96  & 0.97  & 0.95\\ \hline
    		\end{tabular}
    	}
    	\caption{Accuracy of the identified changing causal modules from (a) heterogeneous data and (b) nonstationary data.}
    	\label{Table: Simu_table}
    \end{table}

    \paragraph{Phase II: causal direction identification.}
    We then applied three rules to identify causal directions, based on the skeleton learned in phase I. The three rules are
    \begin{itemize}[itemsep=0.2pt,topsep=0.2pt]    	
    	\item [(1)] generalization of invariance (Algorithm \ref{Alg: invariance}),
    	\item [(2)] independent changes between causal modules (Algorithm \ref{Alg: direction}),
    	 \item [(3)] and Meek's orientation rule \citep{Meek95}.
    \end{itemize}
    The three rules are applied in the above sequence to identify causal directions.  
    
    We compared with the window-based method \citep{Zhang17_IJCAI}, the IB method, and the MC method for direction determination. The window-based method identifies the causal direction by measuring the dependence between hypothetical causal modules; however, the kernel density estimation of modules were performed in each sliding window or in each domain, separately. For the window-based method, we used the causal skeleton derived from CD-NOD phase I. For heterogeneous data, the window size was the same as the sample size in each domain; for nonstationary data, the size of each sliding window was $100$. For the IB and MC method, the skeleton and orientations were derived simultaneously.
    
    Figure \ref{Fig: Simu_dir} shows the F$_1$ score of the recovered whole causal graph (both skeletons and directions) from heterogeneous and nonstationary data. We can see that the proposed CD-NOD gives the best F$_1$ score in all scenarios. The window-based method has second-best performance, and it performs slightly better on heterogeneous data than on nonstationary data. The reason may be that on nonstationary data, the sliding window-based method may lead to large estimation errors, especially when the causal influence varies quickly over time. The performance of IB and MC methods is worse; it is not surprising since the generated data are not linear.
    
     \begin{figure} 
    	\centering
    	\includegraphics[width=.8\textwidth, trim={0cm 0cm 0cm 0cm},clip]{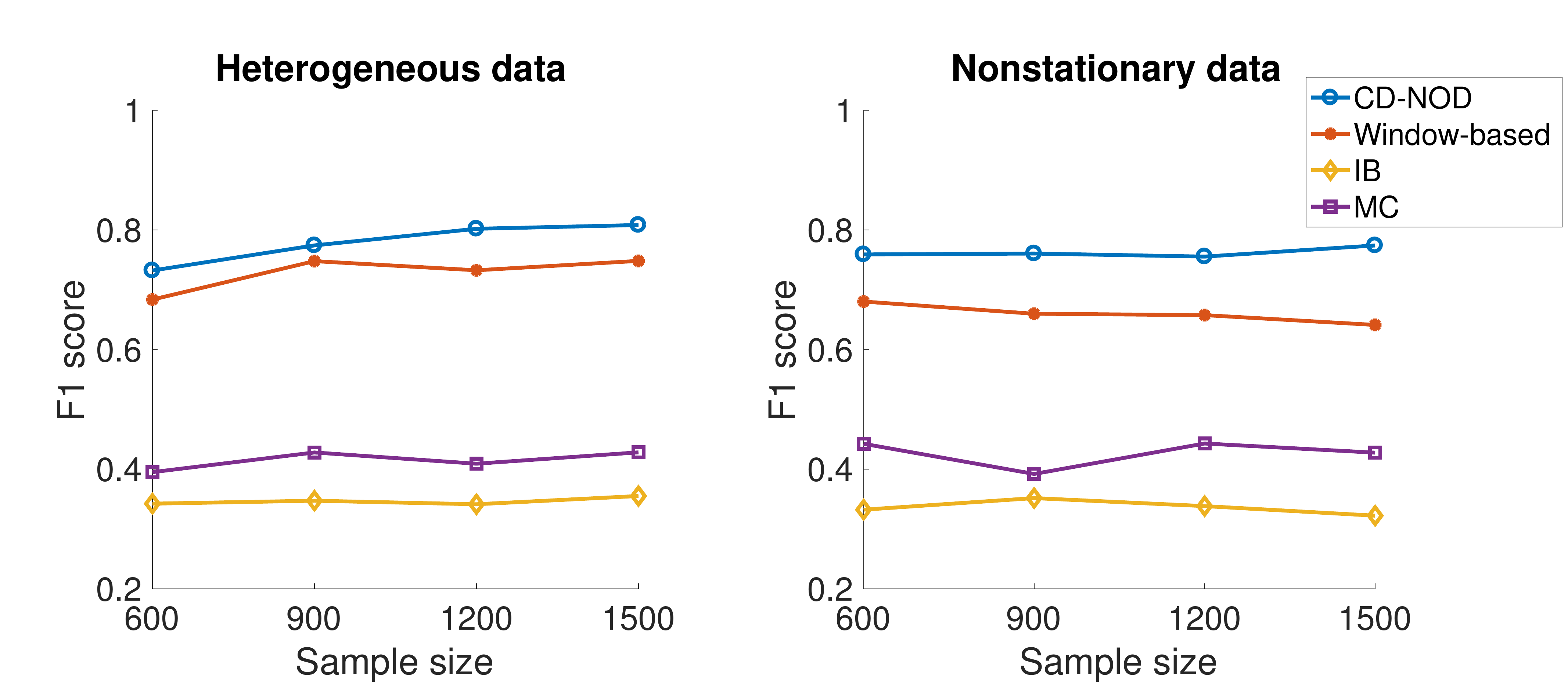}
    	\caption{Accuracy of the recovered whole causal graph (both skeletons and directions) from both heterogeneous and nonstationary data. We compared our proposed CD-NOD with window-based method, IB method, and MC method.}
    	\label{Fig: Simu_dir}
    \end{figure}

    \subsubsection{Setting 2}
    To clearly show the efficacy of Algorithm \ref{Alg: direction} in causal direction identification (without being affected by the accuracy from causal skeleton identification and the other two orientation rules), we generated another synthetic data set. Particularly, we generated fully connected acyclic graphs. For each variable $V_i$, all its causal strength $b_{ij}$ and noise's parameter $\sigma_{i}$ change, either across domains or over time. We considered different graph sizes $m = 2, 4, \text{ and } 6$ and different sample sizes $N = 600, 900, 1200, \text{ and } 1500$.
    
    To assess the performance of causal direction identification, we assumed that the causal skeleton is given (fully connected), and we inferred causal directions with Algorithm \ref{Alg: direction}, which exploits the independence between causal modules. We compared CD-NOD with the window-based method, the IB method, and the MC method. The setting of hyperparameters and window size was the same as that in Setting 1. 
    
    Figure \ref{Fig: Simu_dir2} gives the accuracy (F$_1$ score) of the inferred causal directions from heterogeneous and nonstationary data. We can see that CD-NOD (Algorithm \ref{Alg: direction}) gives the best F$_1$ score, and its accuracy slightly increase with sample size. The accuracy of window-based method is comparably lower than CD-NOD (Algorithm \ref{Alg: direction}), and its performance is better on heterogeneous data than that on nonstationary data. The accuracy of the IB and MC method is much lower, since they are only for linear systems.
     \begin{figure} 
    	\centering
    	\includegraphics[width=.9\textwidth, trim={1cm 0cm 1cm 0cm},clip]{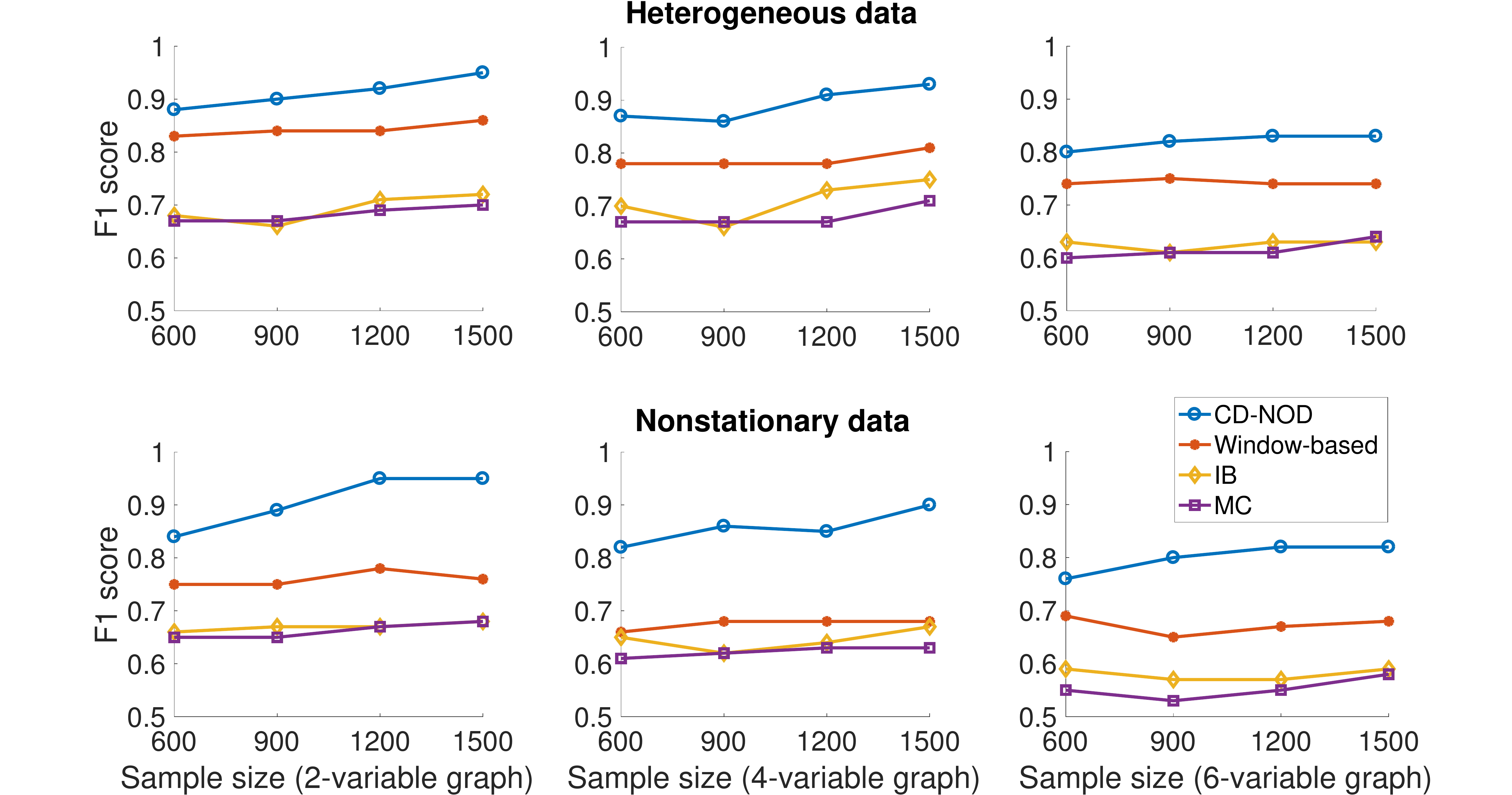}
    	\caption{Accuracy of inferred causal directions from heterogeneous (upper row) and nonstationary (lower row) data. We tested the accuracy on 2-variable (left column), 4-variable (middle column), 6-variable (right column) fully connected graphs. We compared our proposed CD-NOD with the window-based method, the IB method, and the MC method.}
    	\label{Fig: Simu_dir2}
    \end{figure}
     
     \subsubsection{Setting 3}
     \textcolor{black}{We further investigated the effect of sample size per domain in heterogeneous data of the final performance. We varied the number of samples $N_0$ in each domain, with $N_0 = 10,20,40,60,80$, and $100$, and there were in total 10 domains. Other settings of the data generating process are the same as that in Setting 1.}
     
     \textcolor{black}{Figure \ref{Fig: Simu_added}(left) gives the F$_1$ score of the recovered causal skeleton, along with the number of samples per domain. We compared CD-NOD with original constraint-based methods (without taking into account the distribution shift), the IB method, and the MC method. Figure \ref{Fig: Simu_added}(right) shows the F$_1$ score of the recovered whole causal graph, including both the skeleton and directions, produced by CD-NOD,  the window-based method, the IB method, and the MC method, respectively. 
     We can see that CD-NOD achieves the best accuracy in all cases, regarding both the estimated skeleton and the whole causal graph. Moreover, we found that not surprisingly, its performance improves with the number of samples in each domain, with a relatively large improvement (around 5\%) from $N_0 = 40$ to $N_0 = 60$.}
     \begin{figure} 
    	\centering
    	\includegraphics[width=.9\textwidth, trim={0cm 0cm 0cm 0cm},clip]{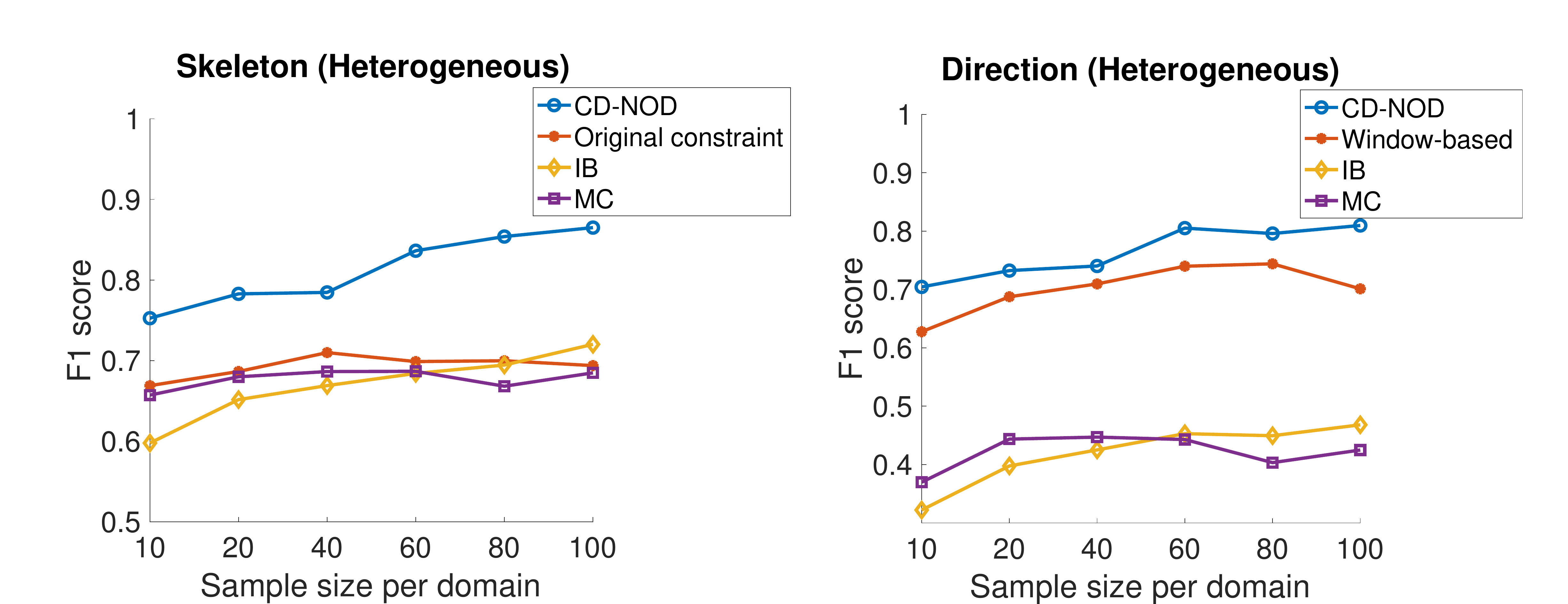}
    	\caption{\textcolor{black}{Accuracy of the estimated causal skeleton (left) and the whole causal graph (right) from heterogeneous data, along with the number of samples in each domain.}}
    	\label{Fig: Simu_added}
    \end{figure}

     \subsubsection{Setting 4}
     After identifying the causal structure, we then estimated the nonstationary driving force of changing causal modules by the procedure KNV given in Algorithm \ref{KNV}. For illustration purposes, we used a single driving force for each module, which changes both causal strength $b$ and noise parameter $\sigma$.
     
     \paragraph{Phase III: nonstationary driving force estimation} We used Gaussian kernels both in kernel embedding of constructed joint distributions and kernel PCA. We compared our approach with the linear time-dependent functional causal model \citep{Huang15}, which puts a GP prior on time-varying coefficients and uses the mean of the posterior to represent the nonstationary driving force. In addition, we compared our methods with Bayesian change point detection \citep{Adam07}, \footnote{We used the implemented Matlab code from http://hips.seas.harvard.edu/content/bayesian-online-changepoint-detection.} which is widely used in nonstationary data to detect change points.

     Figure \ref{Fig: Simu_recover}(a) and \ref{Fig: Simu_recover}(b) show the estimated low-dimensional driving force of changing causal modules for smooth changes (nonstationary data) and sudden changes (heterogeneous data), respectively, when $N=600$. In left panels, blue lines show the estimated driving force by KNV, and red lines show ground truth. Vertical black dashed lines indicate detected change points by Bayesian change point detection. Middle panels show the largest ten eigenvalues of Gram matrix $M^g$. In right panels, blue lines are the recovered changing components by the linear time-dependent GP, and red lines are ground truth. The scale of the y-axis has been adapted for clarity. We only showed the first principal component in the left panel, since the first eigenvector captures most of the variance, as shown in middle panels. We can see that KNV gives the best recovery of the changing components in all cases. Bayesian change point detection, which aims to estimate change points in the marginal or joint distributions (instead of causal mechanisms), fails to handle the case of smooth changes. 
     The linear time-dependent GP does not work well, and one reason is that it cannot account for influences from changing noise parameters, in cases of both smooth and sudden change.

      \begin{figure} 
    	\centering
    	\subfloat[][Smooth changes]{
    		\includegraphics[width=.8\textwidth,trim={1.7cm 0.5cm 2.1cm 0cm},clip]{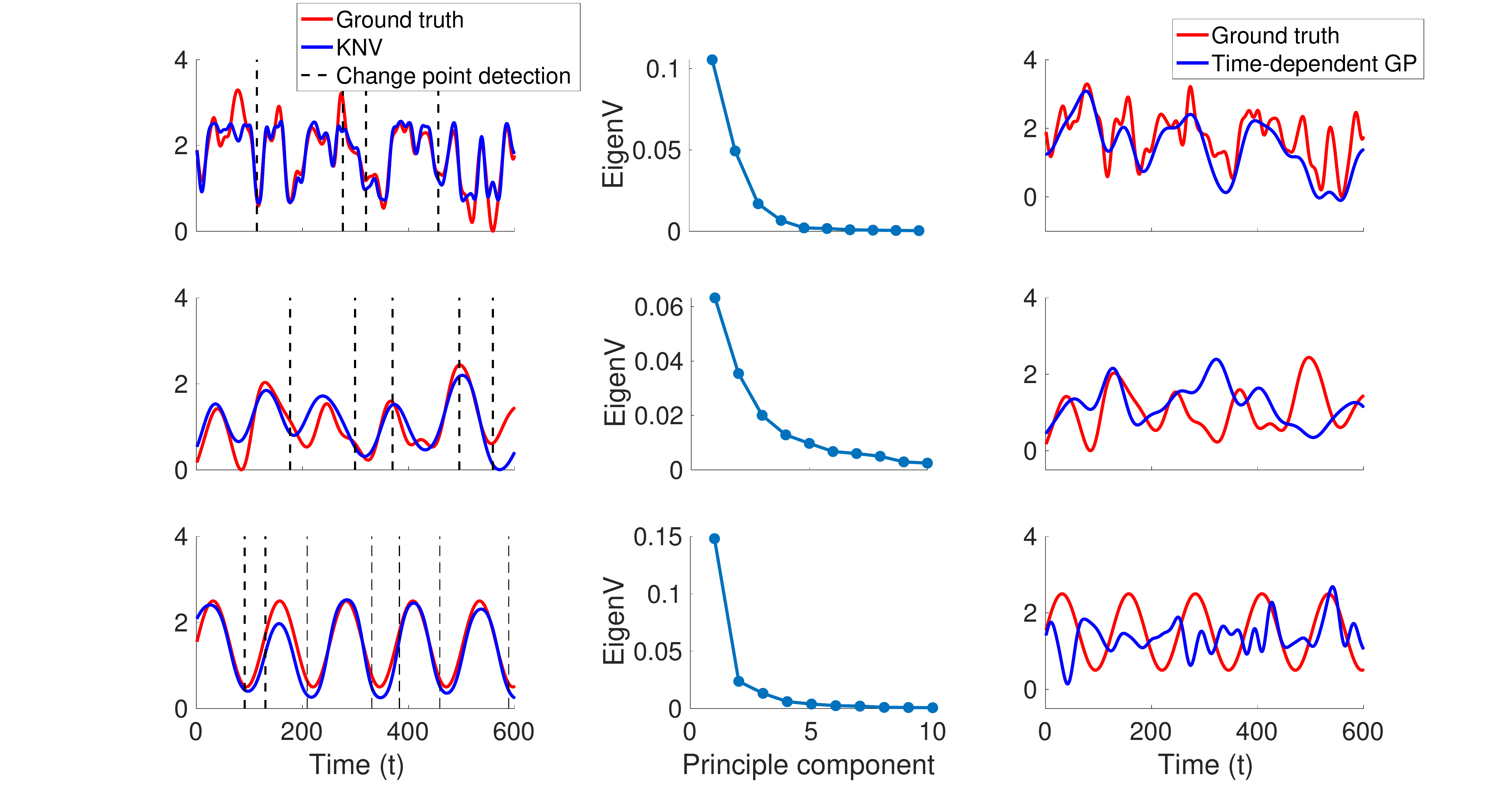}
    	}\\
    	\subfloat[][Sudden changes]{
    		\includegraphics[width=.8\textwidth,trim={1.7cm 0.5cm 2.1cm 0.5cm},clip]{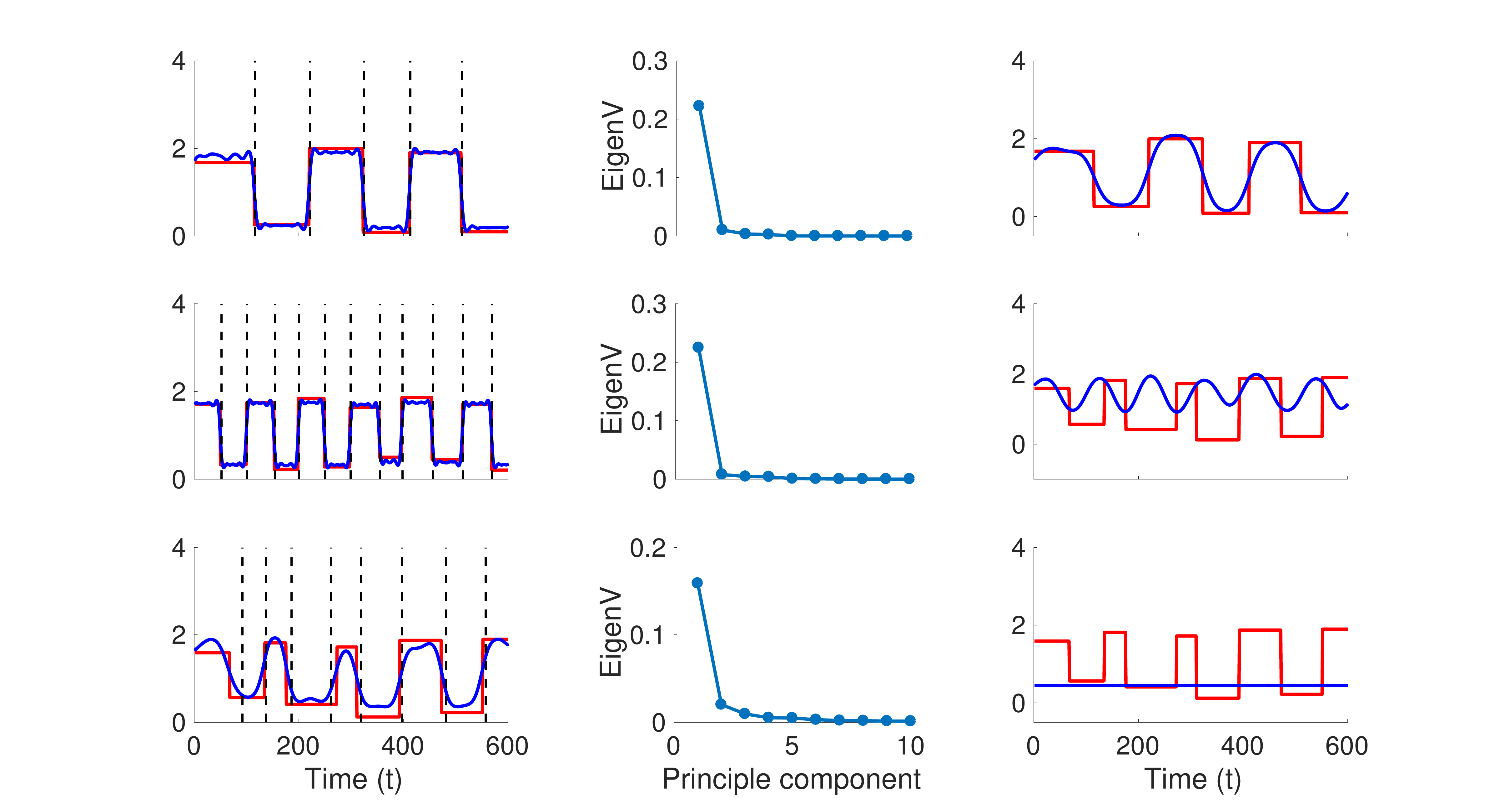}
    	}
    	\caption{Visualization of estimated driving forces of changing causal modules for (a) smooth changes and (b) sudden changes. Left panel: blue lines show the recovered nonstationary components by KNV. Red lines are ground truth. Vertical black dashed lines indicate detected change points by Bayesian change point detection. Middle Panel: the largest ten eigenvalues of Gram matrix $M^g$. Right Panel: blue lines are recovered nonstationary components by linear time-dependent GP. Red lines are ground truth. }\label{Fig: Simu_recover}
    	
    \end{figure}

    \subsection{Real-World Data Sets}
    We then applied the proposed CD-NOD to three real-world data sets. We performed the same three procedures as before: recover causal structure over variables, identify changing causal modules, and estimate low-dimensional driving forces of changing causal modules. Specifically, for causal skeleton determination and changing module detection, we applied Algorithm \ref{rs_causal_discovery}. For causal direction determination, we applied three orientation rules: Algorithm \ref{Alg: invariance}, Algorithm \ref{Alg: direction}, and Meek's orientation rule, in sequence. For driving force estimation, we applied KNV (Algorithm \ref{KNV}). Since our proposed methods outperform all the alternatives on synthetic data, we only applied our proposed methods to real-world data sets.
    
    \subsubsection{Task fMRI}
    We applied our methods to task fMRI data, which were recorded under a star/plus experiment \citep{starplus04}. There are three states in the experiment. (1) The subject rested or gazed at a fixation point on the screen (State 1). (2) The subject was shown a picture and a sentence that was not negated, and was instructed to press a button to indicate whether the sentence matched the picture (State 2). (3) The subject was shown a picture and a negated sentence, and was instructed to press a button indicating whether the sentence matched the picture (State 3). The time resolution of the recording is 500 ms. The fMRI voxel data are organized into regions of interests (ROIs), resulting in 25 ROIs. 
    
    Figure \ref{fMRI_graph} shows the recovered causal structure over the 25 ROIs, where red circles mean that corresponding brain areas have changing causal mechanisms. We can see that the causal edges from the ROIs in the right hemisphere to the corresponding left ones are robust, e.g., RIPL $\rightarrow$ LIPL, RIT $\rightarrow$ LIT. ROIs CALC, LSGA, LIPL, and LIT have large indegree centrality, where CALC and LIT are responsible for visual input processing, and LSGA and LIPL are for language processing. The causal modules of CALC, RIPL, LIPL, RSGA, LSGA, RDLPFC, LDLPFC, RTRIA, LSP, and RIT are time-varying, which are marked with red circles. The identified changing causal modules correspond to key areas for visual and language perception. By considering nonstationarity, 16 connections that were produced by the original PC algorithm are removed, e.g., LIPL - RSGA, LSPL - RDLPFC, and RDLPFC - RTRIA, indicating possible pseudo confounders behind those pairs of ROIs.
    
    \begin{figure}
    	\centering
    	\includegraphics[width=0.9\textwidth]{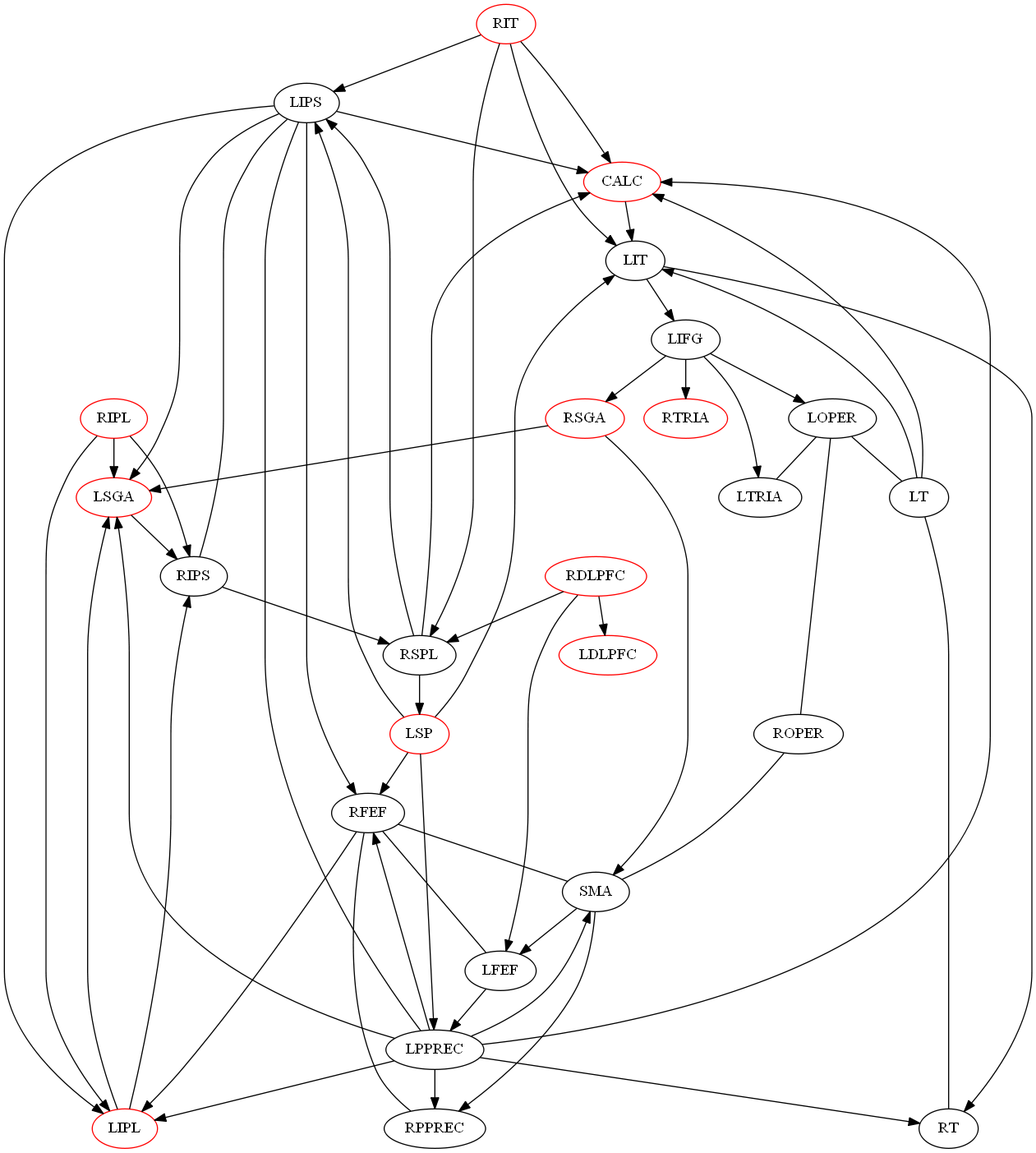}
    	\caption{Recovered causal graph over 25 ROIs. The red circles denote that causal modules of corresponding brain areas changing over states.}\label{fMRI_graph}
    \end{figure}
    
    Figure \ref{fMRI_recovery} visualizes the estimated nonstationary driving force by KNV. The top figure shows the state of input stimuli: State = 1 (State 1) means that the subject was in resting state; State = 2 (State 2) means that the subject was shown a picture and a non-negated sentence, and State = 3 (State 3) means that the shown sentence was negated. The lower four figures show the estimated nonstationary driving forces of the causal modules of several key ROIs: LDLPFC, LIPL, LSGA, and RIPL, respectively. The shaded areas are intervals which have obvious changes; they match the resting state (State 1) with no input stimuli. Quantitatively, the correlations between states and the recovered driving forces of LDLPFC, LIPL, LSGA, and RIPL are 0.22, 0.49, 0.41, 0.47, respectively. For LDLPFC, the second shaded interval is delayed, which may be due to the fact that LDLPFC is involved in cognitive processes, e.g., working memory, so that it may remain activated for some time when the input stimulus has been removed.  There are changes between State 2 and State 3, but they are less obvious. In short, the recovered nonstationary driving forces show that there are obvious changes between the resting state and the task states, while the changes between two task states are relatively less obvious.

    \begin{figure} 
    	\centering
    	\includegraphics[width=.4\textwidth,trim={.2cm 0cm 2.2cm 0cm},clip]{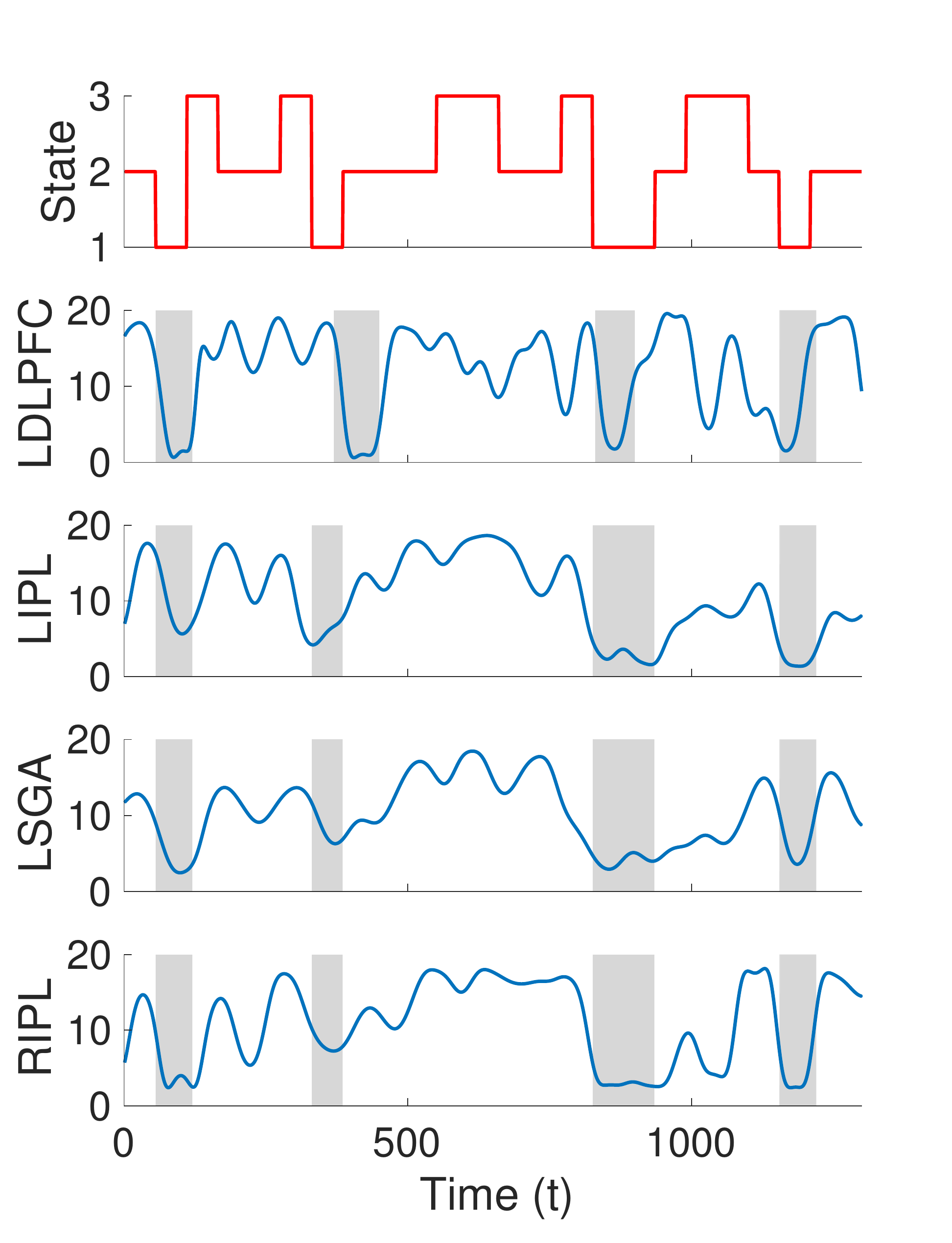}
    	\caption{Visualization of estimated nonstationary driving force from task fMRI data. The top figure shows the state of input stimuli: State = 1 means that the subject was in resting state; State = 2 means that the subject was shown a picture and a sentence, in which the sentence was not negated, while in State = 3 the sentence was negated. The other four figures show the estimated nonstationary driving forces of LDLPFC, LIPL, LSGA, and RIPL, respectively. The shaded areas are intervals which have obvious changes; they match the resting states (State = 1) with no input stimuli. For LDLPFC, the second shaded interval is delayed. There are changes between State 2 and State 3 but  are  less obvious, comparatively.}\label{fMRI_recovery}
    \end{figure}

    \subsubsection{Stock Returns}
    We applied our methods to daily returns of stocks from Hong Kong (HK) and the United States (US), downloaded from Yahoo Finance. 
    
    \paragraph{HK Stock Market} The HK stock dataset contains 10 major stocks, which are daily dividend adjusted closing prices from 10/09/2006 to 08/09/2010. For the few days when the stock price is not available, a simple linear interpolation is used to estimate the price. Denoting the closing price of the $i$th stock on day $t$ by $P_{i,t}$, the corresponding return is calculated by $V_{i,t} = \frac{P_{i,t}-P_{i,t-1}}{P_{i,t-1}} $.
    The 10 stocks are Cheung Kong Holdings (1), Wharf (Holdings) Limited (2), HSBC Holdings plc (3), Hong Kong Electric Holdings Limited (4), Hang Seng Bank Ltd (5), Henderson Land Development Co. Limited (6), Sun Hung Kai Properties Limited (7), Swire Group (8), Cathay Pacific Airways Ltd (9), and Bank of China Hong Kong (Holdings) Ltd (10). Among these stocks, 3, 5, and $10$ belong to Hang Seng Finance Sub-index (HSF), 1, 8, and $9$ belong to Hang Seng Commerce \& Industry Sub-index (HSC), 2, 6, and $7$ belong to Hang Seng Properties Sub-index (HSP), and $4$ belongs to Hang Seng Utilities Sub-index (HSU).
    
    Figure \ref{stock} shows the estimated causal structure, where the causal modules of $2, 3, 4, 5$ and $7$ are found to be time-dependent, indicated by red circles. In contrast, the PC algorithm with the KCI test (without taking into account the nonstationarity) gives five more edges, which are $2-3$, $2-5$, $3-6$, $5-7$, and $4-5$; most of these spurious edges (the former four) are in the finance sub-index and the properties sub-index, indicating the existence of pseudo confounders behind these sub-indices.  
    We found that all stock returns that have changing causal mechanisms are in HSF, HSP, and HSU; they might be affected by some unconsidered changing factors, e.g., the change of economic policies.  Furthermore, we estimated causal directions by the procedure given in Algorithm \ref{Alg: invariance} and \ref{Alg: direction}, and the inferred directions are consistent with some background knowledge of the market. For instance, the within sub-index causal directions tend to follow the owner-member relationship. Examples include $5 \rightarrow 3$ (note that $3$ partially owns $5$), $9 \rightarrow 8$, and $4 \rightarrow 1$.  Those stocks in HSF are major causes for those in HSC and HSP, and the stocks in HSP and HSU impact those in HSC. These results indicate that overseas markets (e.g., the US market) may influence the local market via large banks, and that prices of stocks in commerce and industry are usually affected by certain companies in finance, properties, and utilities sectors.
    
    \begin{figure} 
    	\centering
    	\includegraphics[width=0.5\textwidth]{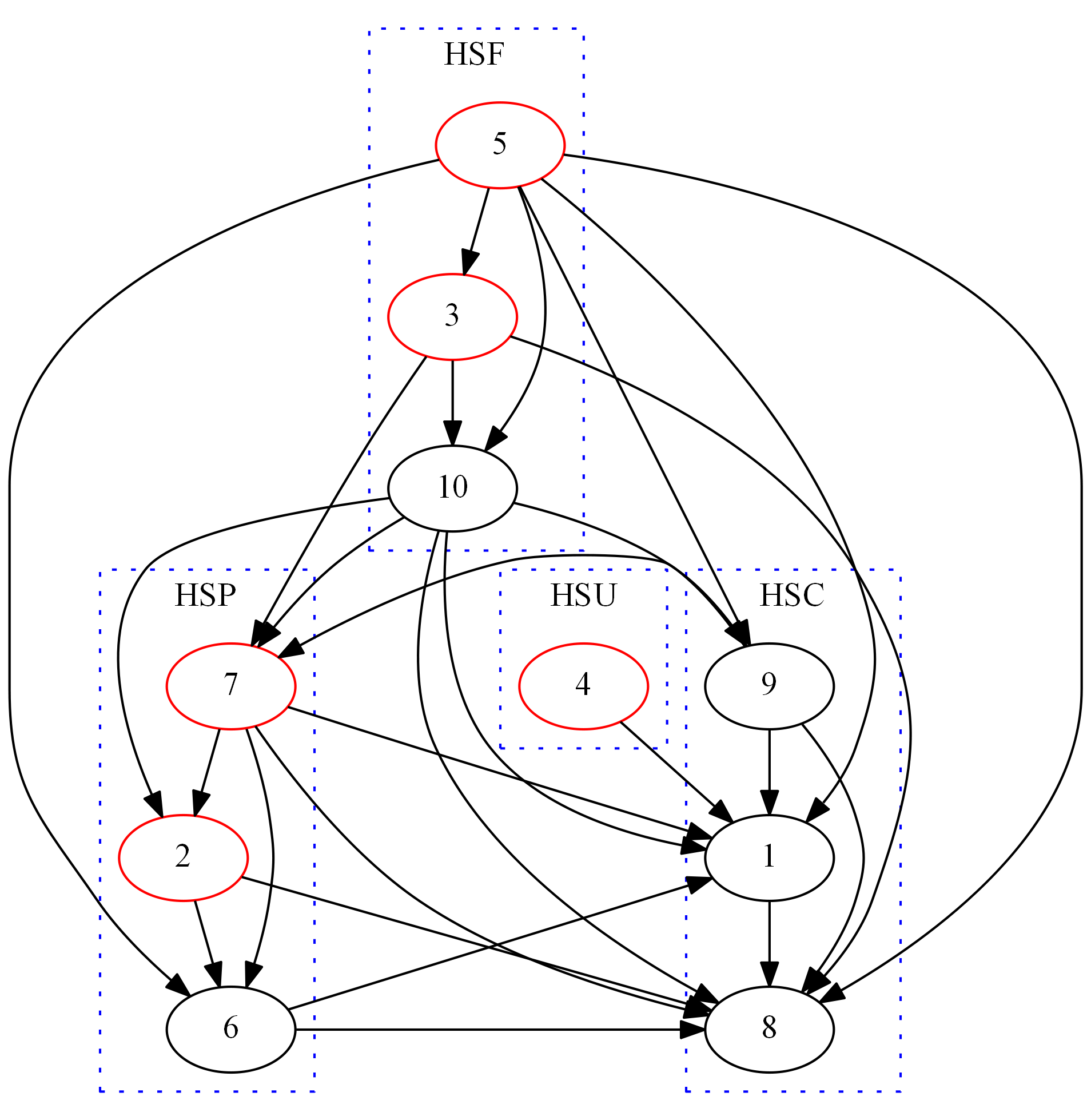}%
    	\vspace{-0.3cm}
    	\caption{Recovered causal graph over the 10 main HK stocks. Red circles indicate that causal modules of corresponding stocks change over time. }\label{stock}
    \end{figure}

    \begin{figure} 
    	\centering
    	\hspace{0.75cm}\includegraphics[width=0.3\textwidth,height = 1.2cm,trim={0.65cm 11.6cm 0.86cm 12cm},clip]{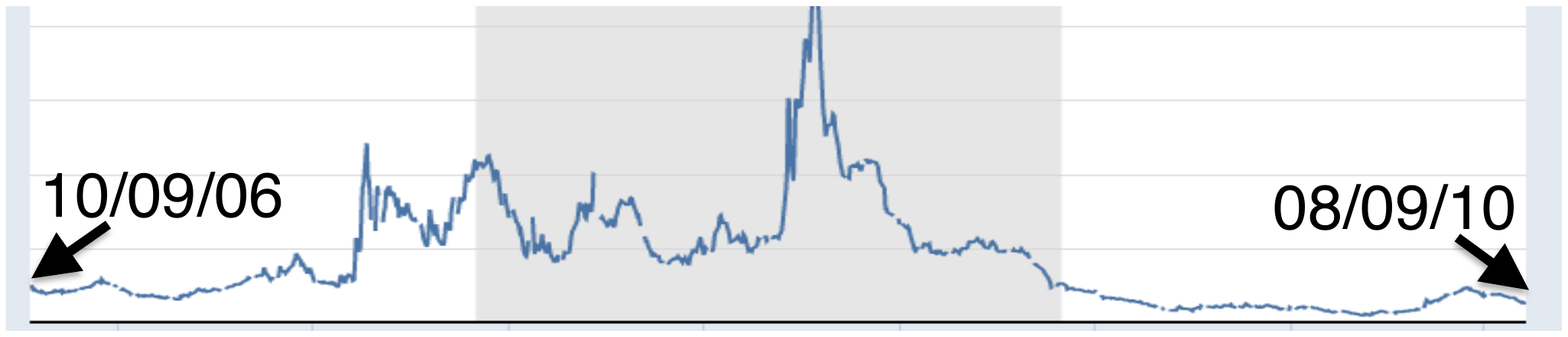}{\footnotesize~~~}~~~~~~~~~~~~~~~~~~~~~~~~~~~~~~~~~~~~~\\
    	\includegraphics[width=0.7\textwidth,trim={2.2cm 0cm 2.2cm 0cm},clip]{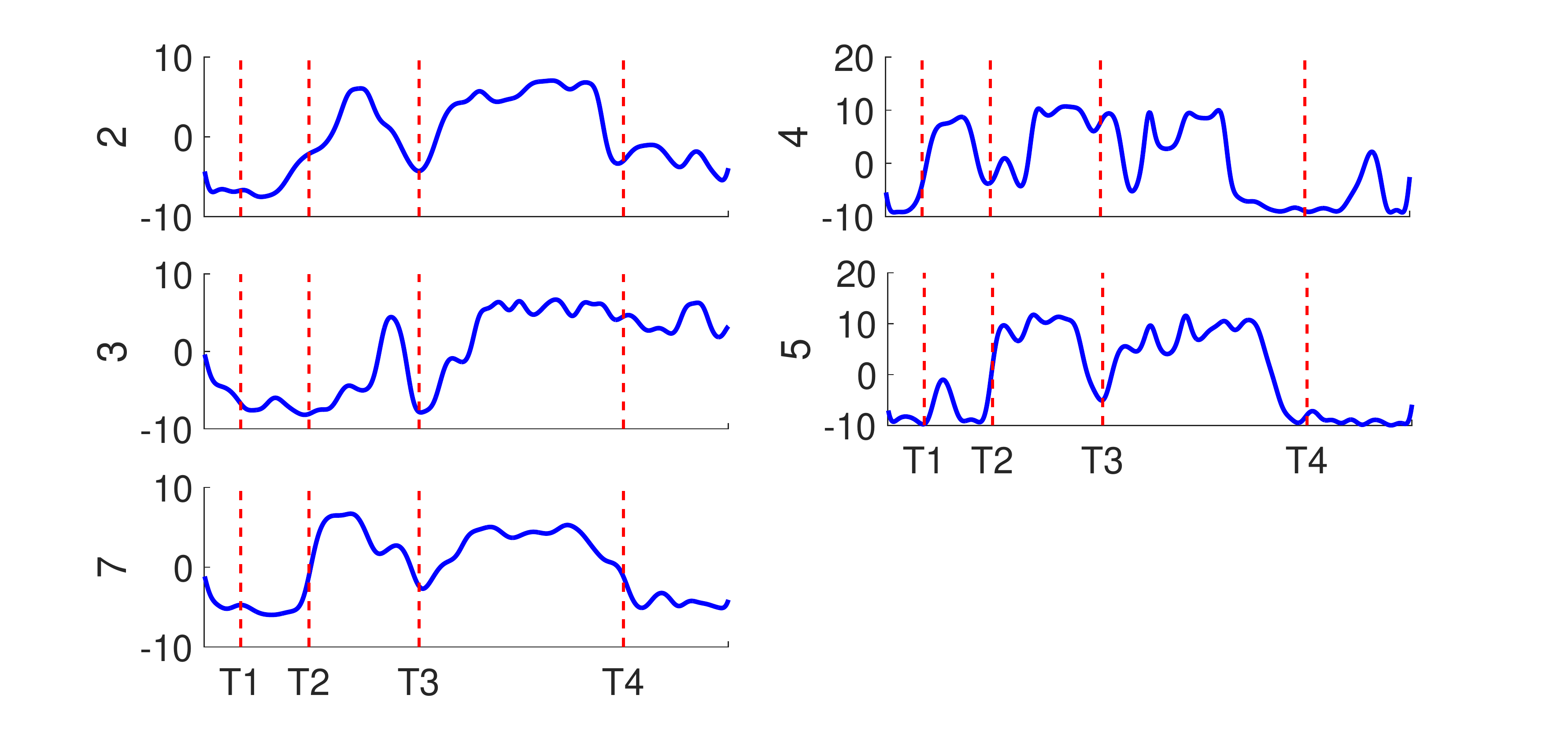}%
    	\caption{The estimated nonstationary driving forces of time-dependent stock returns, as well as the curve of the TED spread, over the period from $10/09/2006$ to $08/09/2010$. Top: Curve of the TED spread shown for comparison.
    		Bottom: Estimated nonstationary driving force of stocks $2,3,4,5$, and $7$, where $T_1$, $T_2$, $T_3$, and $T_4$ stand for  $01/22/2007$, $07/16/2007$, $05/03/2008$, and $11/02/2009$, respectively.
    		We can see that the nonstationary components of root causes, $4$ and $5$, share the similar variability with change points around $T_1$, $T_2$, $T_3$, and $T_4 $. The nonstationary components of $2$, $3$, and $7$ have change points only around $T_2$, $T_3$, and $T_4$.}\label{stock_visual}
    \end{figure}
    
    Figure \ref{stock_visual} (bottom panels) visualizes the estimated driving forces of stocks $2,3,4,5$, and $7$ (the scale of y-axis has been adjusted).  We can see that the nonstationary components of root causes, $4$ and $5$, share a similar variability; the change points are around  01/22/2007 ($T_1$),  07/16/2007 ($T_2$),  06/30/2008 ($T_3$), and  02/11/2009 ($T_4$). The nonstationary components of the causal modules of $2$, $3$, and $7$ have change points around $T_2$, $T_3$, and $T_4$. Stock $5$ has additional change points at $T_1$ (01/22/2007), which is perhaps due to the fact that Hang Seng Bank established its wholly owned subsidiary Hang Seng Bank (China) Limited in $2007$.
    The change points around $T_2$, $T_3$, and $T_4$ match with the critical time points of financial crisis around the year of $2008$.  The active phase of the crisis, which manifested as a liquidity crisis, could be dated from August, 2007,\footnote{See more information at \url{https://en.m.wikipedia.org/wiki/Financial_crisis_of_2007-08}.} around $T_2$. The estimated driving forces are consistent with the change of the TED spread,\footnote{See \url{https://en.m.wikipedia.org/wiki/TED_spread}.} which is an indicator of perceived credit risk in the general economy and is shown in Figure \ref{stock_visual} (top panel) for comparison.

     \paragraph{US Stock Market} We then applied our methods to daily returns of a number of stocks
    from New York Stock Exchange, downloaded from Yahoo Finance. We considered 80 major stocks and used their daily dividend adjusted closing prices from 07/05/2006 to 12/16/2009. They are clustered into 10 sectors: energy, public utilities, capital goods, health care, consumer service, finance, transportation, consumer nondurable goods, basic industry, and technology. 
    
    Figure \ref{USstock_graph} shows the estimated causal graph over stock returns, each color representing one sector. The size of each node reflects its degree of connections (sum of indegree and outdegree), and larger node indicates a higher degree.
    We found that intra-sector connections are more dense than inter-sector connections. The stocks in energy, finance, public utilities, and basic industries are more likely to be causes of stocks in other sectors; among these four sectors, stocks in energy and finance usually causally influence stocks in utilities and basic industries. 
    
    More specifically, GE, a stock in energy, does not have causal edges within the sector; instead, it is adjacent to stocks in other sectors, which may be due to the reason that GE is a conglomerate and has various segments, including healthcare, power, transportation, oil and gas, etc.  In regard to causal interactions within energy, CHK and KEG are root causes, which may be because KEG is an oilfield services company, and CHK is a holding company for a variety of energy enterprises; NBR and HAL are also more likely to cause others, and PBR, WMB, TLM, and NE are more likely to be influenced by other stocks. 
    Among the stocks in finance, MS, SAN, and JPM have a large number of edges with stocks from other sectors. Within finance, USB and KEY usually cause others, and WFC is the sink node.
    Among the stocks in basic industry, FCX is largely affected by stocks from energy, and it causes other stocks in basic industry, probably because it also belongs to petroleum industry; GLW and DOW have a large number of edges with stocks in consumer service and technology, which may be related to their products in glass \& ceramic materials, and they are not adjacent to other stocks in basic industry. Within basic industry, IAG is the root cause, and KGC is always influenced by others.
    Regarding the stocks from the remaining sectors, DIS, HST, and HD from consumer service, ORCL from technology, and CSX from transportation have large centrality.
    
    By considering nonstationarity, 63 edges that were produced by the original PC algorithm were removed; 19 of them are within sectors, and 44 are between sectors. 
    The causal modules of 37 out of 80 stocks change over time;
    most of them are from energy (10 out of 28), finance (7 out of 9), basic industry (6 out of 14), and consumer service (5 out of 7). It is reasonable that stocks, that are close to the root node, are more likely to be influenced by some external factors, such as changing economic environments and policies. These findings, such as stocks from finance are more likely to be causes of stocks in other sectors,  are consistent with what we discovered from HK stock market reported above.
    
    \begin{figure} 
    	\centering
    	\includegraphics[width=1\textwidth,trim={0cm 0cm 0.3cm 0cm},clip]{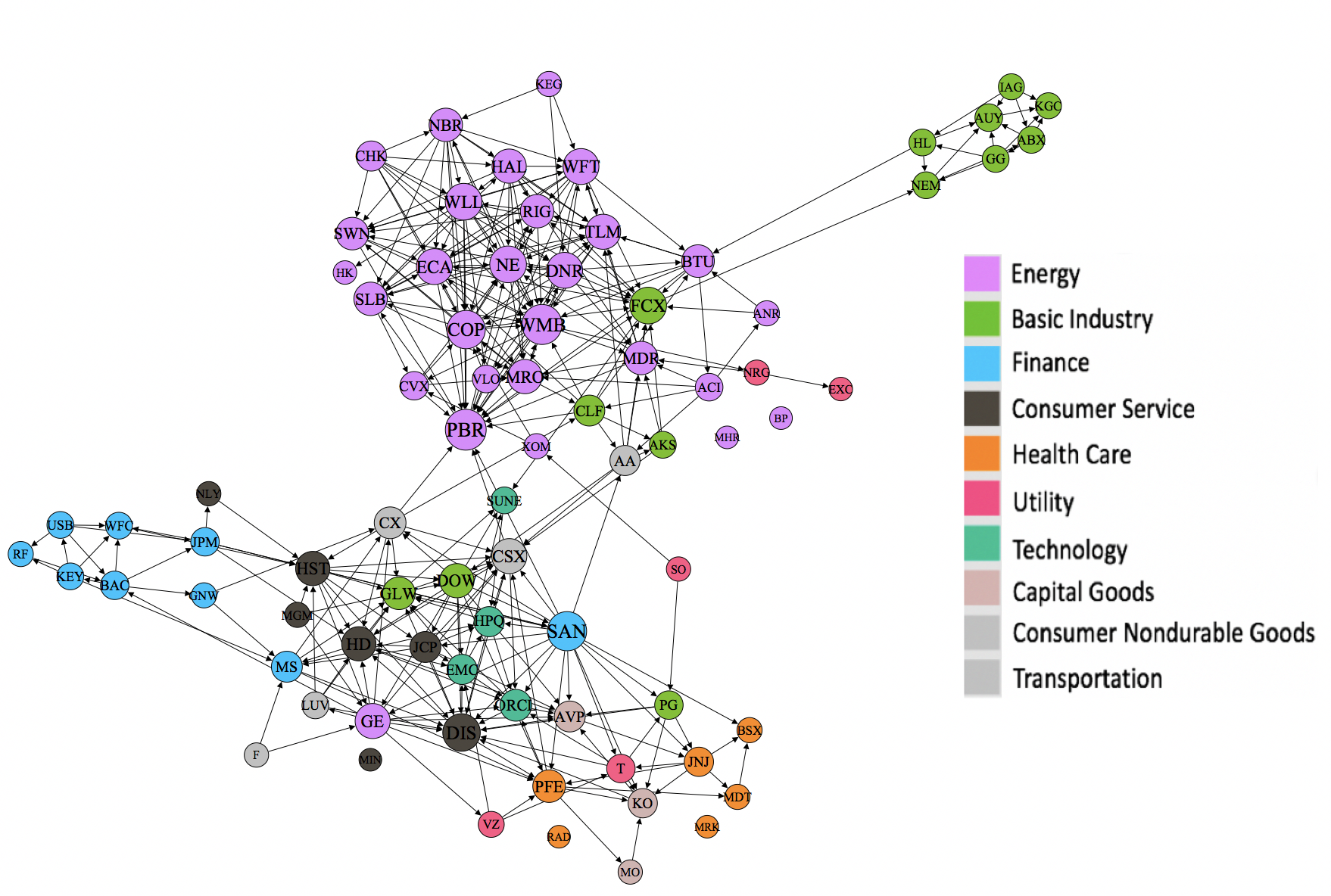}
    	\caption{Recovered causal graph over 80 NYSE stocks. Each color of nodes represents one sector.}\label{USstock_graph}
    \end{figure}
    
    Figure \ref{USstock_recov} visualizes the estimated nonstationary driving forces of stocks SAN, MS, DOW, GE, CHK, and BAC. We can see that among these six stocks, SAN, MS, and DOW have obvious change points around 07/16/2007 ($T_1$) and 05/05/2008 ($T_2$), while the stocks GE, CHK, and BAC only have changes points around 05/05/2008 ($T_2$). We found that stocks, which have change points around both time points, usually have large centrality and influence a large number of other stocks.
    The change points match the critical time of the 2008 financial crisis - those in the TED spread and parts of the change points ($T_2$ and $T_3$) in HK stock data.
    
    \begin{figure} 
    	\centering
    	\includegraphics[width=0.7\textwidth,trim={3.5cm 0cm 3.5cm 0cm},clip]{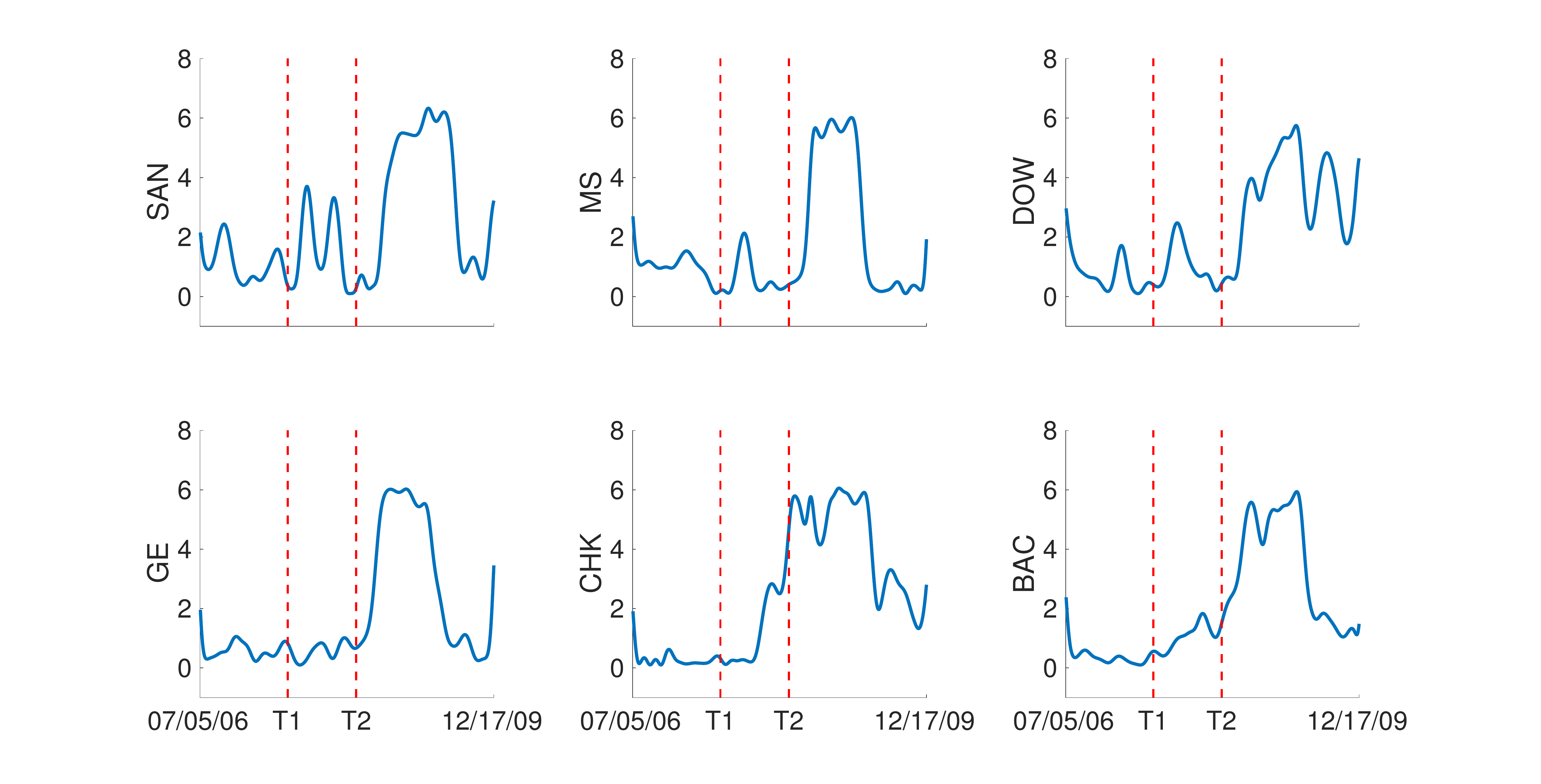}
    	\caption{The estimated nonstationary driving force of six stock returns from 07/05/2006 $\sim$ 12/16/2009. The stocks SAN, MS, and DOW have obvious change points at 07/16/2007 ($T_1$) and 05/05/2008 ($T_2$) . The stocks GE, CHK, and BAC have change points only at 05/05/2008 ($T_2$). The change points match the critical time of financial crisis. }\label{USstock_recov}
    \end{figure}

\section{Discussion and Conclusions}
This paper is concerned with causal discovery from heterogeneous/nonstationary data and visualization of changing causal modules. We assume a pseudo causal sufficiency condition, which states that all confounders can be represented by functions of domain or smooth functions of time index. We proposed (1) an enhanced constraint-based method for locating variables whose causal modules change and estimating the skeleton of the causal structure over observed variables, (2) a method for causal direction determination that takes advantage of the information carried by distribution shifts, and (3) a procedure for estimating a low-dimensional representation of changing causal modules. We then showed that distribution shifts also help for causal discovery when there are stationary confounders. In summary, to identify the causal structure, we take advantage of two types of independence constraints: (a) conditional independence (with constraint-based approach) and (b) independent changes of changing causal modules.

The performance evaluated on synthetic data sets showed largely improved F$_1$ score and precision with our proposed CD-NOD, compared to other methods for causal discovery. The results over task-fMRI data well revealed information flows between brain regions and how causal influences change across resting state and task states. The causal relationships learned from stock data (both HK and US) are consistent with background knowledge, e.g., stocks in finance and energy are causes to others, and the estimated nonstationary driving forces match the critical time of financial crisis around the year of 2008.

We showed that distribution shift contains useful information for causal discovery, since causal model provides a compact description of how the joint distribution changes. Moreover, it has been recently noticed that this view help understand and solve some machine learning problems, e.g., domain adaptation and prediction in nonstationary environments. Since distribution shifts in heterogeneous/nonstationary data are usually constrained - it might be due to the changes in the data-generating processes of only a few variables, and thus it is only necessary to adapt a small proportion of the joint distribution in domain adaptation problems \citep{Zhang13_targetshift,Scholkopf12}.

There are several open questions that we aim to answer in future work. First, in the paper we assumed that causal directions do not flip despite distribution shifts. What if some causal directions also change across domains or over time? It is important to develop a general approach to detect such direction flips and do causal discovery, because of the ubiquity of feedback loops. Second, the issue of distribution shift may decrease the power of statistical (conditional) independence tests. It is essential to develop a reliable statistical test to mitigate this problem. Third, in practice, there may also exist other types of confounders, e.g., nonstationary confounders. We will extend our methods to cover general types of confounders.
  
\acks{We are grateful to the anonymous reviewers whose careful comments and suggestions helped improve this manuscript. 
We would like to acknowledge the support by National Institutes of Health under Contract No. NIH-1R01EB022858-01, FAIN-R01EB022858, NIH-1R01LM012087, NIH5U54HG008540-02, and FAIN-U54HG008540, and by the United States Air Force under Contract No. FA8650-17-C7715. The National Institutes of Health and the U.S. Air Force are not responsible for the views reported in this article. JZ's research was supported in part by the Research Grants Council of Hong Kong under the General Research Fund LU342213.}
	
	
	
	\appendix

    \section*{Appendix A. Proof of Theorem 1} \label{Appendix: Theorem 1}

    \begin{proof}
    	Before getting to the main argument, let us establish some implications of the SEMs in Eq.~\ref{Eq:FCM} and the assumptions in Section~\ref{Sec:assumptions}. Since the structure is assumed to be acyclic or recursive, according to Eq.~\ref{Eq:FCM}, all variables $V_i$ can be written as a function of $\{g_l(C)\}_{l=1}^L \cup \{\theta_m(C)\}_{m=1}^n$ and $\{\epsilon_m\}_{m=1}^n$. As a consequence, the probability distribution of $\mathbf{V}$ at each value of $C$ is determined by the distribution of $\epsilon_1, ..., \epsilon_n$, and the values of $\{g_l(C)\}_{l=1}^L \cup \{\theta_m(C)\}_{m=1}^n$. In other words, $P(\mathbf{V}|C)$ is determined by $\prod_{i=1}^n P(\epsilon_i)$ (for $\epsilon_1, ..., \epsilon_n$ are mutually independent), and $\{g_l(C)\}_{l=1}^L \cup \{\theta_m(C)\}_{m=1}^n$, where $P(\cdot)$ denotes the probability density or mass function. For any $V_i$, $V_j$, and $\mathbf{V}^{ij}\subseteq \{V_k\,|\,k\neq i, k\neq j\}$, because $P(V_i, V_j \,|\,  \mathbf{V}^{ij}, C)$ is determined by $P(\mathbf{V}|C)$, it is also determined by $\prod_{i=1}^n P(\epsilon_i)$ and $\{g_l(C)\}_{l=1}^L \cup \{\theta_m(C)\}_{m=1}^n$. Since $\prod_{i=1}^n P(\epsilon_i)$ does not change with $C$, we have
    	\begin{flalign} \nonumber
    	&P(V_i, V_j \,|\,  \mathbf{V}^{ij} \cup \{g_l(C)\}_{l=1}^L \cup \{\theta_m(C)\}_{m=1}^n \cup \{C\})  \\ \label{Eq:Cind1}
    	=& P(V_i, V_j \,|\,  \mathbf{V}^{ij} \cup \{g_l(C)\}_{l=1}^L \cup \{\theta_m(C)\}_{m=1}^n).
    	\end{flalign}
    	That is,
    	\begin{equation} \label{Eq:Cind}
    	C \independent (V_i, V_j) \,|\,  \mathbf{V}^{ij} \cup \{g_l(C)\}_{l=1}^L \cup \{\theta_m(C)\}_{m=1}^n.
    	\end{equation}
    	By the weak union property of conditional independence, it follows that
    	\begin{equation} \label{Eqc}
    	C \independent V_j \,|\,\{V_i\}\cup \mathbf{V}^{ij} \cup \{g_l(C)\}_{l=1}^L \cup \{\theta_m(C)\}_{m=1}^n.
    	\end{equation}

    	We are now ready to prove the theorem. Let $V_i, V_j$ be any two variables in $\mathbf{V}$.
    	First, suppose that $V_i$ and $V_j$ are not adjacent in $G$. Then they are not adjacent in $G^{aug}$, which recall is the graph that incorporates $\{g_l(C)\}_{l=1}^L \cup \{\theta_m(C)\}_{m=1}^n$. It follows that there is a set $\mathbf{V}^{ij}\subseteq \{V_k\,|\,k\neq i, k\neq j\}$ such that $\mathbf{V}^{ij} \cup \{g_l(C)\}_{l=1}^L \cup \{\theta_m(C)\}_{m=1}^n$ d-separates $V_i$ from $V_j$. Since the joint distribution over $\mathbf{V} \cup \{g_l(C)\}_{l=1}^L \cup \{\theta_m(C)\}_{m=1}^n$ is assumed to be Markov to $G^{aug}$, we have
    	\begin{equation} \label{Eqd}
    	V_i \independent V_j \,|\,  \mathbf{V}^{ij} \cup \{g_l(C)\}_{l=1}^L \cup \{\theta_m(C)\}_{m=1}^n.
    	\end{equation}
    	Because all $g_l(c)$ and $\theta_m(C)$ are deterministic functions of $C$, we have $P(V_i, V_j \,|\,  \mathbf{V}^{ij} \cup \{C\}) = P(V_i, V_j \,|\,  \mathbf{V}^{ij} \cup \{g_l(C)\}_{l=1}^L \cup \{\theta_m(C)\}_{m=1}^n \cup \{C\})$.

    	According to the properties of mutual information given in~\citet{Madiman08} , Eqs.~\ref{Eqd} and~\ref{Eq:Cind} imply
    	$V_i \independent (C, V_j) \,|\,  \mathbf{V}^{ij} \cup \{g_l(C)\}_{l=1}^L \cup \{\theta_m(C)\}_{m=1}^n$.
    	By the weak union property of conditional independence, it follows that
    	$V_i \independent V_j \,|\,  \mathbf{V}^{ij} \cup \{g_l(C)\}_{l=1}^L \cup \{\theta_m(C)\}_{m=1}^n \cup \{C\}$.
    	As all $g_l(C)$ and $\theta_m(C)$ are deterministic functions of $C$, it follows that
    	$V_i \independent V_j \,|\,  \mathbf{V}^{ij} \cup \{C\}$.
    	In other words, $V_i$ and $V_j$ are conditionally independent given a subset of $\{V_k\,|\,k\neq i, k\neq j\}\cup \{C\}$.
    	
    	Conversely, suppose $V_i$ and $V_j$ are conditionally independent given a subset $\mathbf{S}$ of $\{V_k\,|\,k\neq i, k\neq j\}\cup \{C\}$. We show that $V_i$ and $V_j$ are not adjacent in $G$, or equivalently, that they are not adjacent in $G^{aug}$. There are two possible cases to consider:
    	
    	\begin{itemize}
    		\item Suppose $\mathbf{S}$ does not contain $C$. Then since the joint distribution over $\mathbf{V} \cup \{g_l(C)\}_{l=1}^L \cup \{\theta_m(C)\}_{m=1}^n$ is assumed to be faithful to $G^{aug}$, $V_i$ and $V_j$ are not adjacent in $G^{aug}$, and hence not adjacent in $G$.
    		
    		\item Otherwise, $\mathbf{S}=\mathbf{V}^{ij}\cup \{C\}$ for some $\mathbf{V}^{ij}\subseteq \{V_k\,|\,k\neq i, k\neq j\}$. That is,
    		\begin{flalign} \label{tmp0}
    		&V_i \independent V_j \,|\,  \mathbf{V}^{ij} \cup \{C\},\textrm{ or } \\ \nonumber
    		& P(V_i, V_j \,|\,  \mathbf{V}^{ij} \cup \{C\}) = P(V_i \,|\,  \mathbf{V}^{ij} \cup \{C\}) P(V_j \,|\,  \mathbf{V}^{ij} \cup \{C\}).
    		\end{flalign}
    		
    		According to Eq.~\ref{Eq:Cind1}, and also noting that $\{g_l(C)\}_{l=1}^L \cup \{\theta_m(C)\}_{m=1}^n$ is a deterministic function of $C$, we have
    		\begin{equation}\label{tmp1}
    		P(V_i, V_j \,|\,  \mathbf{V}^{ij}\cup \{C\}) = P(V_i, V_j \,|\,  \mathbf{V}^{ij} \cup \{g_l(C)\}_{l=1}^L \cup \{\theta_m(C)\}_{m=1}^n),
    		\end{equation}
    		which also implies
    		\begin{flalign} \label{tmp2}
    		&P(V_i \,|\, \mathbf{V}^{ij}\cup \{C\}) = P(V_i \,|\,  \mathbf{V}^{ij} \cup \{g_l(C)\}_{l=1}^L \cup \{\theta_m(C)\}_{m=1}^n), \\ \label{tmp3}
    		& P(V_j \,|\,  \mathbf{V}^{ij}\cup  \{C\}) = P(V_j \,|\,  \mathbf{V}^{ij} \cup \{g_l(C)\}_{l=1}^L \cup \{\theta_m(C)\}_{m=1}^n).
    		\end{flalign}
    		Substituting Eqs.~\ref{tmp1} -~\ref{tmp3} into Eq.~\ref{tmp0} gives
    		\begin{flalign}
    		&P(V_i, V_j \,|\,  \mathbf{V}^{ij} \cup \{g_l(C)\}_{l=1}^L \cup \{\theta_m(C)\}_{m=1}^n) \\ \nonumber
    		=& P(V_i \,|\,  \mathbf{V}^{ij} \cup \{g_l(C)\}_{l=1}^L \cup \{\theta_m(C)\}_{m=1}^n) P(V_j \,|\,  \mathbf{V}^{ij} \cup \{g_l(C)\}_{l=1}^L \cup \{\theta_m(C)\}_{m=1}^n).
    		\end{flalign}
    		That is,
    		$$V_i \independent V_j \,|\,  \mathbf{V}^{ij} \cup \{g_l(C)\}_{l=1}^L \cup \{\theta_m(C)\}_{m=1}^n.$$
    		Again, by the Faithfulness assumption on $G^{aug}$, this implies that $V_i$ and $V_j$ are not adjacent in $G^{aug}$ and hence are not adjacent in $G$.
    	\end{itemize}
    	
    	Therefore, $V_i$ are $V_j$ are not adjacent in $G$ if and only if they are conditionally independent given some subset of $\{V_k\,|\,k\neq i, k\neq j\}\cup \{C\}$.

    \end{proof}

	\section*{Appendix B. Proof of Proposition 1} \label{Appendix: Propo 1}

	\begin{proof}
			\begin{flalign} \nonumber
			&\tilde{\mu}_{\underline{Y},X|{C=c_n}} \triangleq \mathbb{E}_{(Y,X)\sim \tilde{P}(\underline{Y}, X \,|\,C=c_n) }[\phi(Y) \otimes \phi(X)]  \\ \nonumber
			=& \tilde{\mathcal{C}}_{(Y,X),C} \mathcal{C}_{C,C}^{-1}\phi(c_n) \\ \nonumber
			= &\mathbb{E}_{(Y,X,C)\sim \tilde{P}(\underline{Y}, X,C) } [\phi(Y)\otimes \phi(X) \otimes \phi(C)] \mathcal{C}_{C,C}^{-1} \phi(c_n) \\ \nonumber
			= &\mathbb{E}_{(X,C)\sim \tilde{P}(X,C) } \{ \mathbb{E}_{Y\sim P(Y|X,C)} [\phi(Y)] \otimes  \phi(X) \otimes \phi(C)\} \mathcal{C}_{C,C}^{-1} \phi(c_n) \\ \nonumber
			= &\mathcal{C}_{Y,(X,C)} \mathcal{C}_{(X,C),(X,C)}^{-1} \cdot \\
			& \mathbb{E}_{X\sim {P}(X) } \mathbb{E}_{C\sim P(C) } \{ [ \phi(X) \otimes \phi(C) ] \otimes  \phi(X) \otimes \phi(C)\} \mathcal{C}_{C,C}^{-1} \phi(c_n)
			\label{Eq_pf0}
			\end{flalign}
		Furthermore, 
			\begin{flalign} \nonumber
			& \mathbb{E}_{X\sim {P}(X) } \mathbb{E}_{C\sim P(C) } \{ [ \phi(X) \otimes \phi(C) ] \otimes  \phi(X) \otimes \phi(C)\} \mathcal{C}_{C,C}^{-1} \phi(c_n) \\ \nonumber
			= &\mathbb{E}_{X\sim {P}(X) } \mathbb{E}_{C\sim P(C) } \{ [ \phi(X) \otimes \phi(C) ] \otimes  [\phi(X) \cdot \phi^\intercal (C)  \mathcal{C}_{C,C}^{-1} \phi(c_n)]\}   \\ \label{Eq_p1}
			= &\mathbb{E}_{X\sim {P}(X) } \mathbb{E}_{C\sim P(C) } \{  \phi(X) \otimes [\phi(C) \cdot  (\phi(X) \phi^\intercal (C)  \mathcal{C}_{C,C}^{-1} \phi(c_n))^\intercal ] \}   \\ \nonumber
			= &\mathbb{E}_{X\sim {P}(X) } \mathbb{E}_{C\sim P(C) } \{  \phi(X) \otimes [\phi(C) \cdot  \phi^\intercal (c_n) \mathcal{C}_{C,C}^{-1} \phi (C) \phi^\intercal(X)]\} \\ \label{Eq_p2}
			= &\mathbb{E}_{X\sim {P}(X) } \mathbb{E}_{C\sim P(C) } \{  \phi(X) \otimes [\mathcal{C}_{C,C}  \mathcal{C}_{C,C}^{-1} \phi (c_n) \phi^\intercal(X)]\} \\ \nonumber
			= &\mathbb{E}_{X\sim {P}(X) }  \{  \phi(X) \otimes \phi (c_n) \otimes \phi(X)\} 
			\end{flalign}
		where (\ref{Eq_p1}) holds because tensor product is associative, and (\ref{Eq_p2}) holds because $\phi^\intercal (c_n) \mathcal{C}_{C,C}^{-1} \phi (C)$ is a scaler, implying $\phi^\intercal (c_n) \mathcal{C}_{C,C}^{-1} \phi (C) = [\phi^\intercal (c_n) \mathcal{C}_{C,C}^{-1} \phi (C)]^\intercal = \phi^\intercal (C)  \mathcal{C}_{C,C}^{-1} \phi (c_n)$.

		Therefore, equation (\ref{Eq_pf0}) becomes
			\begin{equation*} \label{Eq:embedding_cond}
			\tilde{\mu}_{\underline{Y},X|{C=c_n}}  = \mathcal{C}_{Y,(X,C)} \mathcal{C}_{(X,C),(X,C)}^{-1}\mathbb{E}_{X\sim {P}(X) }  \{  \phi(X) \otimes \phi (c_n) \otimes \phi(X)\}. 
			\end{equation*}
		Let $\phi^{\otimes}(X,C) :=  \phi(X)\otimes \phi(C)$,  $\boldsymbol{\Phi}_\mathbf{y} := [\phi(y_1), ..., \phi(y_N)]$,  $\boldsymbol{\Phi}_\mathbf{x} := [\phi(x_1), ..., \phi(x_N)]$, 
		$\boldsymbol{\Phi}_\mathbf{x,c} := [\phi^{\otimes}(x_1,c_1), ..., \phi^{\otimes}(x_N,c_n)]$, $\boldsymbol{\Phi}_{\mathbf{x},c_n} := [\phi^{\otimes}(x_1,c_n), ..., \phi^{\otimes}(x_N,c_n)]$.  $\tilde{\mu}_{\underline{Y},X|{C=c_n}}$ can be estimated as
		\begin{flalign} \nonumber
		\hat{\tilde{\mu}}_{\underline{Y},X|{C=c_n}} &= \frac{1}{N}\boldsymbol{\Phi}_\mathbf{y} \boldsymbol{\Phi}_\mathbf{x,c}^\intercal  ( \frac{1}{N} \boldsymbol{\Phi}_\mathbf{x,c} \boldsymbol{\Phi}_\mathbf{x,c}^\intercal + \lambda I)^{-1}  (\frac{1}{n}\boldsymbol{\Phi}_{\mathbf{x},c_n} \boldsymbol{\Phi}_{\mathbf{x}}^\intercal) \\ \nonumber
		&= \frac{1}{N} \boldsymbol{\Phi}_\mathbf{y} (\mathbf{K}_\mathbf{x}\odot \mathbf{K}_\mathbf{c} + \lambda I)^{-1} \textrm{diag}(\mathbf{k}_{\mathbf{c},c_n}) \mathbf{K}_\mathbf{x} \boldsymbol{\Phi}_{\mathbf{x}}^\intercal.
		\end{flalign}
	\end{proof}

	\section*{Appendix C.  Proof of Theorem 2}  \label{Appendix: Theorem 2}

    \begin{proof} 
      Let $V_i$ and $V_j$ be an unoriented pair of adjacent variables.
      
   	  If condition (1) is satisfied, then there are two possibilities: 
   	   \begin{itemize}[itemsep=0.2pt,topsep=0.2pt]
   	   	\item $V_j$ is the collider, and there exists another variable $V_k$ which influences $V_j$ and does not have an edge with $V_i$, so that $V_i \rightarrow V_j \leftarrow V_k$. In such a case, $V_i \independent V_k |\mathbf{S}$, with $\mathbf{S} \subseteq \mathbf{V}$ and $\mathbf{S} \land V_j = \ \emptyset$.
   	   	\item $V_i$ is the collider, and there exists another variable $V_k$ which influences $V_i$ and does not have an edge with $V_j$, so that $V_j \rightarrow V_i \leftarrow V_k$. In such a case, $V_j \independent V_k |\mathbf{S}$, with $\mathbf{S} \subseteq \mathbf{V}$ and $\mathbf{S} \land V_i = \ \emptyset$.
   	   \end{itemize} 
      Thus, we can identify the direction between $V_i$ and $V_j$ depending on whether $V_i \independent V_k |\mathbf{S}$ or $V_j \independent V_{k'} |\mathbf{S'}$.
       
      If condition (2) is satisfied, and there is only one change influences $V_i$ or $V_j$. Then we have the following two cases:
       \begin{itemize}[itemsep=0.2pt,topsep=0.2pt]
       	   \item The change influences $V_j$, i.e., $V_i - V_j \leftarrow C$:
       	  \begin{itemize}[itemsep=0.2pt,topsep=0.2pt]
       		 \item if $V_i \independent C |\mathbf{S}$, with $\mathbf{S} \subseteq \mathbf{V}$ and $\mathbf{S} \land V_j = \ \emptyset$, orient $V_i \rightarrow V_j$;
       		 \item  if $V_i \independent C |\mathbf{S}$, with $\mathbf{S} \subseteq \mathbf{V}$ and $V_j \subseteq \mathbf{S}$, orient $V_i \leftarrow V_j$.
       	  \end{itemize} 
       	   \item The change influences $V_i$, i.e., $V_j - V_i \leftarrow C$:
       	 \begin{itemize}[itemsep=0.2pt,topsep=0.2pt]
       		\item  if $V_j \independent C |\mathbf{S}$, with $\mathbf{S} \subseteq \mathbf{V}$ and $\mathbf{S} \land V_i = \ \emptyset$, orient $V_j \rightarrow V_i$;
       		\item  if $V_j \independent C |\mathbf{S}$, with $\mathbf{S} \subseteq \mathbf{V}$ and $V_i \subseteq \mathbf{S}$, orient $V_j \leftarrow V_i$.
         \end{itemize} 
       \end{itemize}     
   
      If condition (2) is satisfied, and there are independent changes to both $V_i$ and $V_j$. Then we have the following two cases:
      \begin{itemize}[itemsep=0.2pt,topsep=0.2pt]
           \item  if $P(V_i) \independent P(V_j|V_i,\mathbf{Z})$ and $P(V_j) \dependent P(V_i|V_j,\mathbf{Z})$, where $\mathbf{Z}$ is the deconfounding set of $(V_i,V_j)$, then orient $V_i \rightarrow V_j$;
          \item  if $P(V_j) \independent P(V_i|V_j,\mathbf{Z})$ and $P(V_i) \dependent P(V_j|V_i,\mathbf{Z})$, where $\mathbf{Z}$ is the deconfounding set of $(V_i,V_j)$, then orient  $V_j \rightarrow V_i$.
      \end{itemize} 
 
       If condition (3) is satisfied, then we have the following two cases:
       \begin{itemize}[itemsep=0.2pt,topsep=0.2pt]
       	\item There is an edge incident to $V_j$ but not to $V_i$, i.e., $V_i - V_j \leftarrow V_l$: 
       	   \begin{itemize}[itemsep=0.2pt,topsep=0.2pt]
       		\item if $V_i \independent V_l |\mathbf{S}$, with $\mathbf{S} \subseteq \mathbf{V}$ and $\mathbf{S} \land V_j = \ \emptyset$, orient $V_i \rightarrow V_j$;
       		\item  if $V_i \independent V_l |\mathbf{S}$, with $\mathbf{S} \subseteq \mathbf{V}$ and $V_j \subseteq \mathbf{S}$, orient $V_i \leftarrow V_j$.
       	   \end{itemize} 
       	\item There is an edge incident to $V_i$ but not to $V_j$, i.e., $V_j - V_i \leftarrow V_l$:
       	   \begin{itemize}[itemsep=0.2pt,topsep=0.2pt]
       	  	\item  if $V_j \independent V_l |\mathbf{S}$, with $\mathbf{S} \subseteq \mathbf{V}$ and $\mathbf{S} \land V_i = \ \emptyset$, orient $V_j \rightarrow V_i$;
       	  	\item  if $V_j \independent V_l|\mathbf{S}$, with $\mathbf{S} \subseteq \mathbf{V}$ and $V_i \subseteq \mathbf{S}$, orient $V_j \leftarrow V_i$.
       	  \end{itemize} 
       \end{itemize}
  
     Therefore, it is easy to see that the direction between $V_i$ and $V_j$ is identifiable, if at least one of the conditions is satisfied. 
   	\end{proof}

	\bibliography{term_fin1}
\end{document}